\documentclass[11pt,letterpaper]{article}
\usepackage[hypertexnames=false]{hyperref}
\hypersetup{
     colorlinks   = true,
     linkcolor = black,
     urlcolor    = teal,
	 citecolor = teal
}
\usepackage{authblk}
\usepackage{natbib}
\usepackage[margin=1in]{geometry}
\usepackage[utf8]{inputenc} % allow utf-8 input
\usepackage[T1]{fontenc}    % use 8-bit T1 fonts
\usepackage{url}            % simple URL typesetting
\usepackage{booktabs}       % professional-quality tables
\usepackage{amsfonts}       % blackboard math symbols
\usepackage{nicefrac}       % compact symbols for 1/2, etc.
\usepackage{microtype}      % microtypography
\usepackage{xcolor}         % colors

\usepackage{amsmath,amsfonts,amssymb}
\usepackage{aliascnt}
\usepackage{amsthm}
\usepackage{bm}
\usepackage{enumitem,array,booktabs}
\usepackage{algorithm}
\usepackage{algorithmic}
\newcounter{procedure}

\usepackage[capitalize]{cleveref}

\usepackage{natbib}
\usepackage{bbm}
\usepackage{tabulary,multirow}
\usepackage{makecell}
\usepackage{dsfont}
\usepackage{tcolorbox}
\usepackage{thmtools}
\usepackage{xpatch}
\xpatchcmd{\proof}{\itshape}{\scshape}{}{}
\usepackage{tablefootnote}
\definecolor{theoremcolor}{HTML}{F2F2F2}
\usepackage{mdframed}
\newtheorem{prop}{Proposition}
\newtheorem{lemma}{Lemma}

\newcommand{\floor}[1]{\left\lfloor#1\right\rfloor}

\newcommand{\R}{\mathbb{R}}

\newcommand{\NN}{{\mathbb N}}

\newcommand{\E}{\mathbb{E}}

\newcommand{\EEs}[2]{\underset{#1}{\mathbb{E}}\left[#2\right]}

\newcommand{\ac}[1]{\left\{#1\right\}}

\newcommand{\norm}[1]{\left\|#1\right\|}

\newcommand{\cA}{\mathcal{A}}

\newcommand{\cD}{\mathcal{D}}

\newcommand{\F}{\mathcal{F}}
\newcommand{\cF}{\mathcal{F}}

\newcommand{\cH}{\mathcal{H}}

\newcommand{\cM}{\mathcal{M}}

\newcommand{\cX}{\mathcal{X}}

\renewcommand{\epsilon}{\varepsilon}
\renewcommand{\hat}{\widehat}
\renewcommand{\tilde}{\widetilde}
\renewcommand{\bar}{\overline}

\newcommand{\bmu}{{\boldsymbol \mu}}

\newcommand{\nothere}[1]{}

\newcommand{\pd}{\mathrm{fat}}

\newcommand{\sign}{\mathrm{sign}}

\newcommand{\sev}{S_{\text{eval}}}

\newcommand\blfootnote[1]{%
  \begingroup
  \renewcommand\thefootnote{}\footnote{#1}%
  \addtocounter{footnote}{-1}%
  \endgroup
}

\date{}
\begin{document}

\title{Scale-Sensitive Shattering: \\Learnability and Evaluability at Optimal Scale}
\author[1]{Shashaank Aiyer}
\author[2]{Yishay Mansour}
\author[3]{Shay Moran}
\author[1]{Han Shao}
\author[4]{Tom Waknine}
\affil[1]{University of Maryland}
\affil[2]{Tel Aviv University and Google Research}
\affil[3]{Technion and Google Research}
\affil[4]{Technion}

\maketitle
\blfootnote{Authors are ordered alphabetically.}
\blfootnote{Emails: \href{mailto:saiyer1@umd.edu}{saiyer1@umd.edu}, \href{mailto:mansour.yishay@gmail.com}{mansour.yishay@gmail.com}, \href{mailto: smoran@technion.ac.il}{ smoran@technion.ac.il}, \href{mailto:hanshao@umd.edu}{hanshao@umd.edu}, \href{mailto:tom.waknine@campus.technion.ac.il}{tom.waknine@campus.technion.ac.il}}
\begin{abstract}
We study the optimal scale at which real-valued function classes exhibit uniform convergence and learnability. Our main result establishes a scale-sensitive generalization of the fundamental theorem of PAC learning: for every bounded real-valued class $\mathcal F$ and every $\gamma>0$, uniform convergence at scale $\gamma$, agnostic learnability at scale $\gamma/2$, and finiteness of the fat-shattering dimension at every scale $\gamma'>\gamma$ are equivalent. This resolves a question by Anthony and Bartlett (Cambridge Univ.\ Press ’99) on the precise scales governing learnability, refuting a conjecture attributed there to Phil Long that a multiplicative 2-factor gap is unavoidable, and improves the upper bounds of Bartlett and Long (JCSS ’98), which incur such a loss.
The key technical ingredient is a direct bound on empirical $\ell_\infty$ covering numbers, avoiding the standard detour through packing numbers. As a consequence, we obtain sharp asymptotic metric-entropy bounds in terms of the fat-shattering scale~$\gamma$: an $O(\log^2 n)$ bound holds already at scale $\gamma/2$, while an $O(\log n)$ bound holds at scale $2\gamma$. We further show that the $O(\log^2 n)$ bound is sometimes tight. These results resolve open questions by Alon et al.~(JACM ’97) and Rudelson and Vershynin (Ann.\ of Math.~’06).
As an application, we establish a sharp dichotomy for bounded integral probability metrics: every such IPM is either estimable or cannot be weakly evaluated within any multiplicative factor $c<3$, while $3$-weak evaluability always holds, resolving an open question from Aiyer et al.~(ICML '26). We also highlight several open questions on quantitative sample complexity and evaluability.

 \end{abstract}

\section{Introduction}
One of the central results in statistical learning theory is the fundamental theorem of PAC learning, which establishes an equivalence between three different notions: agnostic learnability, uniform convergence, and finiteness of the VC dimension. In the binary classification setting, this theorem provides a complete characterization of when a class of functions can be learned from data in a distribution-free manner, and serves as a cornerstone of learning theory~\cite{Vapnik1971ChervonenkisOT,VC-lrn}.

A natural extension of this framework considers real-valued function classes $\cF$, where predictions are no longer binary but continuous. In this setting the VC dimension is replaced by a scale-sensitive extension called the fat-shattering dimension (defined below). This notion was introduced in learning theory by Kearns and Schapire~\cite{KEARNS1994}, and also appears in approximation theory (see Lorentz~\cite{Lorentz1986}, and Tikhomirov~\cite{Tikhomirov1960} who attributes it to earlier work of Kolmogorov). 

Understanding the relationship between learnability, fat-shattering, and uniform convergence in the real-valued setting has been a long-standing goal. The classical no-free-lunch argument shows that if the $\gamma$-fat-shattering dimension is unbounded, then the class is not learnable at scale $\gamma/2$ and does not satisfy uniform convergence at scale $\gamma$, reflecting a scale-sensitive version of the no-free-lunch principle. In the opposite direction, Alon et al.~\cite{Alon1997-scale-sensitive} showed that finite $\gamma$-fat-shattering dimension implies uniform convergence (and hence learnability) at scale $c\gamma$ for some universal constant $c\geq 10$. This was later improved by Bartlett and Long~\cite{BARTLETT1998}, who established the same implication for any constant $c>2$. However, it remained open whether the fat-shattering dimension provides a sharp characterization of learnability and uniform convergence at the optimal scale. This question has been investigated in a sequence of works, and is discussed in detail in the textbook of Anthony and Bartlett~\cite{Anthony_Bartlett_1999} (see Section~12.7), where it is noted that the gap between the necessary and sufficient scales may be unavoidable - a conjecture attributed there to Phil Long. For additional background, historical context, and a gentle technical introduction, we refer the reader to Chapters~11 and~12 of that book.
% Understanding the relationship between these notions in the real-valued setting has been a long-standing goal. The classical no-free-lunch argument shows that if the $\gamma$-fat-shattering dimension is unbounded, then the class is not learnable at scale $\gamma/2$ and does not satisfy uniform convergence at scale $\gamma$, reflecting a scale-sensitive version of the no-free-lunch principle~\cite{Alon1997-scale-sensitive,BARTLETT1998}. In the opposite direction, Alon et al.~\cite{Alon1997-scale-sensitive} showed that finite $\gamma$-fat-shattering dimension implies uniform convergence (and hence agnostic learnability) at scale $c\gamma$ for some universal constant $c\geq 10$. This was later improved by Bartlett and Long~\cite{BARTLETT1998}, who established the same implication for any constant $c>2$. However, it remained open whether the fat-shattering dimension provides a sharp characterization of learnability and uniform convergence at the optimal scale.

Our first main result resolves this question by establishing an exact scale-sensitive analogue of the fundamental theorem of PAC learning in the real-valued setting, thereby refuting the conjecture that the factor-$2$ gap is unavoidable.

To state the result, we briefly recall the relevant notions. Let $\cX$ be a domain, let $\cF \subset [-R,R]^{\cX}$ be a class of bounded real-valued functions over $\cX$, and let $\gamma>0$. We say that $\cF$ satisfies $\gamma$-uniform convergence if for every $\varepsilon,\delta>0$ there exists $m$ such that for every distribution $q$ over $\cX$, with probability at least $1-\delta$ over a sample $S \sim q^n$ of size $n\ge m$,
\[
\sup_{f\in\cF}\left|\E_{x\sim q}[f(x)]-\frac{1}{n}\sum_{x\in S} f(x)\right| \le \gamma + \varepsilon.
\]
We say that $\cF$ is $\gamma$-learnable if there exists a learner $\cA$ (that is, a mapping from samples to hypotheses) such that for every $\varepsilon,\delta>0$ there exists $m$ such that for every distribution $q$ over $\cX\times[-R,R]$, with probability at least $1-\delta$ over $S\sim q^n$ of size $n\ge m$,
\[
\EEs{(x,y)\sim q}{\bigl|\cA_S(x)-y\bigr|} \le \inf_{f\in\cF} \EEs{(x,y)\sim q}{|f(x)-y|} + \gamma + \varepsilon,
\]
where $\cA_S$ denotes the hypothesis output by $\cA$ on input sample $S$.
Finally, the $\gamma$-fat-shattering dimension of $\cF$, denoted $\pd_\gamma(\cF)$, is the largest integer $d$ for which there exist points $x_1,\dots,x_d\in\cX$ and thresholds $r_1,\dots,r_d\in\mathbb{R}$ such that for every $b\in\{\pm 1\}^d$ there exists $f\in\cF$ satisfying
\[
f(x_i)\ge r_i+\tfrac{\gamma}{2}\ \text{ if } b_i=+1,
\quad\text{and}\quad
f(x_i)\le r_i-\tfrac{\gamma}{2}\ \text{ if } b_i=-1.
\]
If for arbitrarily large $d$ there exist points $x_1,\dots,x_d$ and thresholds $r_1,\dots,r_d$ satisfying the above, we say that $\pd_\gamma(\cF)=\infty$.

When~$\cF$ consists of $\{0,1\}$-valued functions and the scale $\gamma<1$, the fat-shattering dimension coincides with the VC dimension, and these notions reduce to their classical binary counterparts. Thus, the real-valued framework can be viewed as a strict generalization of the binary setting.

\begin{mdframed}[backgroundcolor=theoremcolor,linewidth=0pt]
    \begin{restatable}{theorem}{UCintro}\label{thm:UCintro}
Let $\cF\subset [-R,R]^\cX$ be a bounded function class and let $\gamma>0$. Then, the following are equivalent:
\begin{enumerate}
    \item $\cF$ is $\frac{\gamma}{2}$-learnable;
    \item $\cF$ satisfies $\gamma$-uniform convergence;
    \item for every $\gamma' > \gamma$, the $\gamma'$-fat-shattering dimension of $\cF$ is finite.
\end{enumerate}
\end{restatable}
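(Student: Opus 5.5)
We prove the cycle of implications $3\Rightarrow 2\Rightarrow 1\Rightarrow 3$. The necessary direction $1\Rightarrow 3$ (and similarly $2\Rightarrow 3$) is the scale-sensitive no-free-lunch argument. Suppose $\pd_{\gamma'}(\cF)=\infty$ for some $\gamma'>\gamma$. Take a $\gamma'$-shattered set $x_1,\dots,x_d$ with thresholds $r_i$, where $d\gg m$. Let $q$ pick $x_i$ uniformly and set the label to $y=r_i+b_i\gamma'/2$ for a uniformly random $b\in\{\pm1\}^d$. For every $b$ there is an $f_b\in\cF$ with $|f_b(x_i)-y_i|\le$ the overshoot, and after clipping we may take $f_b$ exactly at $r_i\pm\gamma'/2$. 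This is legitimate because the absolute loss is monotone in the overshoot, so the infimum over $\cF$ is $0$. On the other hand, on the unseen points (a $1-m/d$ fraction) any learner's prediction $\hat y$ satisfies $\E_b|\hat y-y|\ge \gamma'/2$. Hence some $b$ forces excess error at least $(1-m/d)\gamma'/2>\gamma/2+\varepsilon$ for small $\varepsilon$, which contradicts $\tfrac{\gamma}{2}$-learnability.

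For $2\Rightarrow 3$ we use the same construction. Take $q$ uniform on the shattered points. Given a sample that misses most points, choose $f$ above the threshold on the unseen points and below it on the seen points; then the empirical and true means differ by about $\gamma'-o(1)$. Subtracting the $r_i$ is harmless: we compare $f$ against $f'$ obtained by flipping the pattern, and one of the two deviates by at least about $\gamma'/2\cdot 2$ relative to the common mean. Doing this carefully gives a deviation $>\gamma+\varepsilon$.

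For $2\Rightarrow 1$ we use empirical risk minimization on the loss class $\{(x,y)\mapsto|f(x)-y|\}$. The natural route is to bound this loss class's uniform deviation by $\gamma$, which would give excess risk $2\gamma$, too weak. So instead I would prove $3\Rightarrow 1$ directly and route the learner through the covering bound below. The obstacle is that $|f(x)-y|$ is $1$-Lipschitz in $f(x)$, so $\ell_\infty$ covers transfer to the loss class at the same radius. The plan is therefore: cover $\cF$ on the sample at $\ell_\infty$ radius $\gamma/2+\epsilon$; run ERM over the finite cover (a compression/sample-splitting argument); apply uniform convergence with $\epsilon$ accuracy over the finitely many cover elements; and lose only $\gamma/2$ from the cover approximation. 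This gives $\tfrac{\gamma}{2}$-learnability.

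The central step, and the main obstacle, is the paper's key technical ingredient: for every $\gamma'>\gamma$ with $d=\pd_{\gamma'}(\cF)<\infty$, the empirical $\ell_\infty$ covering number of $\cF|_{x_1..x_n}$ at radius $\gamma'/2$ is $n^{O(d\log n)}$, i.e.\ $\exp(O(\log^2 n))$. This must be proved \emph{directly}, not via packing. The packing route loses the factor $2$: packings at separation $\gamma'$ yield covers only at radius $\gamma'$. I would discretize values to a grid of mesh $\eta=\gamma'-\gamma$ and prove a Sauer–Shelah–Alon-type lemma for covers. Induct on a splitting argument, in which each point is either "fat-shattered at scale $\gamma'$" or admits a threshold on which the class can be collapsed to one side within radius $\gamma'/2$. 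One then counts pairs (point subset, thresholds) as in Alon et al., but builds the cover by choosing for each point a midpoint value rather than packing representatives.

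With the cover in hand, $3\Rightarrow 2$ follows by the standard symmetrization/ghost-sample argument. Replace $\cF$ on the double sample by its $\gamma'/2$ cover. The empirical mean difference then changes by at most $\gamma'/2$ on each half, totaling $\gamma'$. Hoeffding plus a union over $\exp(O(\log^2 n))=e^{o(n)}$ cover elements makes the residual $\le\varepsilon$. Letting $\gamma'\downarrow\gamma$ yields deviation $\le\gamma+\varepsilon$, which is $\gamma$-uniform convergence.
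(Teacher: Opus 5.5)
\textbf{The gap is the covering lemma itself.} Your handling of the other directions is essentially the standard one the paper uses: no-free-lunch for $(1),(2)\Rightarrow(3)$, symmetrization plus a union bound over a sub-exponential cover for $(3)\Rightarrow(2)$, and ERM over an $\ell_\infty$ cover of radius $\gamma/2+\varepsilon$ for $(3)\Rightarrow(1)$. You also correctly see that the cover must be bounded directly rather than through packings. But the whole theorem then rests on the bound $H(\cF,\gamma/2+\varepsilon,n)=O(\log^2 n)$ from $\pd_{\gamma'}(\cF)<\infty$, and here you only describe what an argument should look like. The mechanism you sketch does not produce a cover.

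Your per-point dichotomy (the values at $x$ either span at least $\gamma'$, or collapse to one center within $\gamma'/2$) concerns single coordinates. It says nothing about how the choices at different coordinates interact. For example, in the class of singletons, whose fat-shattering dimension is $1$, every point has range exceeding $\gamma'$ (for $\gamma'<2$), so the dichotomy makes no progress. The count of strongly shattered pairs in \cite{Alon1997-scale-sensitive} is proved for a finite family of pairwise separated functions. It repeatedly splits the family at a coordinate where two members differ by the full separation. That is exactly why it bounds packings and loses the factor $2$. ``Choosing a midpoint value at each point'' does not say which of the exponentially many midpoint patterns enter the cover.

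The difficulty is real. By the paper's lower-bound reduction, a cover of radius below $\gamma^\star$ of the embedded class $\hat{\cF}_n$ yields, after taking signs, a disambiguation of a partial concept class of VC dimension $1$. So your lemma contains a quasi-polynomial disambiguation theorem for partial classes. In that setting the Sauer--Shelah recursion provably fails, since disambiguations may need size $n^{(\log n)^{1-o(1)}}$. Nothing in your sketch plays this role.

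\textbf{What fills it.} The paper first proves the one-step Lemma~\ref{lem:covering-base}. Take a class inside a box of side $r$ with $\pd_\gamma(\cF)=d$. Thresholding at the center with margin $\gamma/2$ gives a partial concept class of VC dimension at most $d$. This class is disambiguated by $n^{O(d\log n)}$ total sign patterns (Theorem~13 of \cite{alon2022theory}). Sending each pattern $h$ to the center $\frac{r-\gamma}{4}h$ covers the box by $n^{O(d\log n)}$ boxes of radius $\frac{r+\gamma}{4}$. Iterating inside each sub-box halves the excess of the radius over $\gamma/2$ per round, which gives Lemma~\ref{lem:covering}: $H(\cF,\gamma/2+\varepsilon,n)\le 14d\log\frac{2R}{\varepsilon}\log^2 n$.

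If you want to keep your counting idea, it can be completed by a halving (mistake-bound) argument instead of a splitting induction. Process $x_1,\dots,x_n$ in order, and let $\Phi_t(V)$ count the pairs $(S,\tau)$, with $S\subseteq\{x_t,\dots,x_n\}$ and grid-valued thresholds $\tau$, that are $\gamma'$-shattered by the current version space $V$. Use the Pajor-type inequality $\Phi_{t+1}(V_a)+\Phi_{t+1}(V_b)\le\Phi_t(V)$ for subclasses separated at $x_t$ by a grid threshold with margin $\gamma'/2$. With $\rho=\gamma'/2+2\eta$, where $\eta$ is the grid mesh, this lets you pick a grid center $\hat y_t$ such that both $\{g\in V:g(x_t)>\hat y_t+\rho\}$ and $\{g\in V:g(x_t)<\hat y_t-\rho\}$ carry at most half the potential. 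Take the smallest grid point whose upper set is light; its lower set and the previous point's upper set are separated at $x_t$ by more than $\gamma'$. Record $f$'s grid value only at coordinates with $|f(x_t)-\hat y_t|>\rho$, and restrict $V$ accordingly so that $f$ stays in $V$. Each such coordinate halves $\Phi$, so there are at most $d\log_2(1+nb)$ of them, where $b$ is the number of grid points. This yields a $\rho$-cover of size $(1+nb)^{d\log_2(1+nb)}$.

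Two smaller points. In $(1)\Rightarrow(3)$ you cannot clip members of $\cF$. With labels $r_i\pm\gamma'/2$, an overshooting $f_b$ has positive loss, so your bound on unseen points no longer lower-bounds the excess risk. Labels $y_i=b_iR$ fix this: any prediction has expected loss at least $R$ over $b_i$, while $f_b$ has expected loss at most $R-\gamma'/2$. In $(3)\Rightarrow(1)$, cover elements exist only on the points where the cover is built. So the cover must be built on the sample together with the test point, as in Lemma~\ref{lem:lrn-cover}.
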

\end{mdframed}

The proof appears in \Cref{proof:UCintro}.
The challenging directions in \Cref{thm:UCintro} are $(3)\Rightarrow(1)$ and $(3)\Rightarrow(2)$. The converse implications $(1)\Rightarrow(3)$ and $(2)\Rightarrow(3)$ follow from standard no-free-lunch arguments, see, e.g.,~\cite{Anthony_Bartlett_1999,BARTLETT1998}.

\paragraph{Covering Numbers.}
The derivation of uniform convergence from finite combinatorial dimension follows the classical approach of Vapnik and Chervonenkis. The argument proceeds via symmetrization (also known as double sampling), which reduces the problem to controlling the behavior of the class on a finite sample. In the binary case, the behavior of the class is controlled using the Sauer-Shelah Lemma~\cite{Sauer1972}, which bounds the number of distinct labelings induced by the class on a sample of size $n$. This combinatorial bound allows one to replace an infinite union bound with a finite one.

In the real-valued setting, the role of the Sauer--Shelah Lemma is played by covering numbers. Concretely, for $\cF \subset [-R,R]^{\cX}$, the empirical $\ell_\infty$ covering number $N(\cF,\gamma,n)$ is the smallest integer $N$ for which for any sequence $x_1,\dots,x_n$, there exist functions $g_1,\dots,g_N:\cX \to \R$ such that for every~$f \in \cF$, there exists $i \in [N]$ satisfying
\(
\max_{j \in [n]} |f(x_j) - g_i(x_j)| \le \gamma.
\)
The quantity
\[
H(\cF,\gamma,n) := \log N(\cF,\gamma,n)
\]is called the (empirical) metric entropy, and serves as the appropriate analogue of the combinatorial growth function. A classical result of Dudley et al.~\cite{dudley1991uniform} shows that sublinear entropy growth, namely $H(\cF,\gamma/2,n)=o(n)$, implies $\gamma$-uniform convergence.
% This was improved by \cite{BARTLETT1998}, where it was shown that one can replace the $\frac{\gamma}{8}$ scale with $\frac{\gamma}{2}$.

A long line of work has sought to bound this entropy in terms of the fat-shattering dimension. Alon et al.~\cite{Alon1997-scale-sensitive} showed that if $\pd_\gamma(\cF)<\infty$, then 
\[
H(\cF,2\gamma,n) = O(\log^2 n)
\]
 In addition to establishing this bound, they explicitly asked whether the $\log^2 n$ dependence is inherent, or whether it can be improved to $O(\log n)$. 
 % This was later improved by Bartlett and Long~\cite{BARTLETT1998}, who showed that the same $O(\log^2 n)$, but replacing the $2\gamma$ with $c\gamma$  for any constant $c>2$, but without improving the logarithmic exponent.
Subsequent work of Rudelson and Vershynin~\cite{rudelson2006combinatorics} showed that if one allows the scale to increase further, then the entropy bound can be improved: for every $\varepsilon>0$,
\[
H\!\left(\cF,\tfrac{c\gamma}{\varepsilon},n\right) = O\bigl(\log^{1+\varepsilon} n\bigr).
\]
They further asked whether the exponent of the logarithm can be reduced all the way to $1$, i.e., whether an $O(\log n)$ bound is achievable.

These results are obtained by first controlling packing numbers and then passing to covering numbers, which necessarily incurs a loss in the scale. 
Our approach departs from this line of work by bounding covering numbers directly, without passing through packing arguments. This allows us to obtain sharp bounds at the optimal scale.
Our second main result establishes the optimal scale at which the metric entropy becomes sublinear, and refines the known bounds on its growth across different scales.

\begin{mdframed}[backgroundcolor=theoremcolor,linewidth=0pt]
\begin{restatable}{theorem}{coverDichIntro}\label{thm:cover-Dich-intro}
Let $\cF \subset [-R,R]^{\cX}$ be a bounded function class, and let 
\[\gamma^\star= \inf\{\gamma>0 : \pd_\gamma(\cF)<\infty\}.\]
Then\footnote{Throughout, the asymptotic notation treats $n$ as the parameter tending to infinity; the implicit constants may depend on $\gamma$ and on the fat-shattering dimension at scale $\gamma$.},
\[
H(\cF,\gamma,n)
=
\begin{cases}
\Theta(n) & \gamma \in (0,\frac{\gamma^\star}{2}), \\[4pt]
O(\log^2 n) & \gamma\in (\frac{\gamma^\star}{2}, 2\gamma^\star], \\[4pt]
O(\log n) & \gamma\in (2\gamma^\star,R).
\end{cases}
\]
Moreover, the second regime is tight for \(\gamma\in\left(\frac{\gamma^\star}{2},\gamma^\star\right)\): there exist \(\cF\) s.t.\ for all \(\gamma<\gamma^\star\), 
\[
H(\cF,\gamma,n)=\Omega\!\left(\log^{2-o(1)}n\right).
\] 
The third regime is tight: there exist \(\cF\) such that \(H(\cF,\gamma,n)=\Omega(\log n)\) for \(\gamma\in (2\gamma^\star,R)\).
\end{restatable}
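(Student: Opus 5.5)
The plan is to treat the three regimes separately and then give the two lower-bound constructions.

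\emph{First regime, $\gamma<\gamma^\star/2$.} Pick $\gamma'$ with $2\gamma<\gamma'<\gamma^\star$. Then $\pd_{\gamma'}(\cF)=\infty$, so for every $n$ there are $n$ points that are $\gamma'$-shattered with witnesses $r_i$. The $2^n$ shattering functions are pairwise at $\ell_\infty$ distance at least $\gamma'>2\gamma$ on these points. No single $g$ can be within $\gamma$ of two of them, so $N(\cF,\gamma,n)\ge 2^n$. The upper bound $O(n)$ is trivial: discretize $[-R,R]$ at step $2\gamma$ on each coordinate.

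\emph{Second regime, $\gamma\in(\gamma^\star/2,2\gamma^\star]$.} This is the main obstacle, because the classical route (Alon et al., Bartlett--Long) passes through packing numbers and loses a factor of $2$ in scale. I would bound covering numbers directly. Fix $\gamma'\in(\gamma^\star,2\gamma)$, so that $d=\pd_{\gamma'}(\cF)<\infty$. Restrict to a sample $x_1,\dots,x_n$ and discretize each coordinate into a grid of step roughly $2\gamma-\gamma'$ (a small slack), giving a finite-valued class.

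The idea is to run a recursive ``halving/majority'' construction in the style of Alon et al.'s combinatorial lemma, but building covers rather than packings. For a pair (point $x_j$, threshold $r$), split the class into functions with $f(x_j)\ge r+\gamma'/2$ and functions with $f(x_j)\le r-\gamma'/2$. A function lying in neither part is within $\gamma'/2<\gamma$ of $r$, so it is covered at that coordinate by the value $r$ itself. Count the number of (partial) shattered sets via a Sauer--Shelah/Pajor-type argument: the number of distinct cover ``patterns'' is at most the number of pairs $(S,r_S)$ with $|S|\le d$ that are $\gamma'$-shattered, i.e.\ at most $\sum_{i\le d}\binom{n}{i}k^i$, where $k=O(1/(2\gamma-\gamma'))$ is the number of grid values.

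The key step is to show that each function is $\gamma$-close to the center associated with some such pattern. I would first try defining the center coordinatewise as the midpoint of the feasible interval consistent with the pattern, and prove that the widths stay below $2\gamma$ by induction. This yields $H=O(d\log(nk))$ at the rough scale. To reach $O(\log^2 n)$ all the way down to $\gamma^\star/2$, I would iterate along the lines of Alon et al.'s $\log^2$ argument: a Sauer-type bound with $\log n$ levels of refinement. The result stays $O(\log^2 n)$ throughout this range.

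\emph{Third regime, $\gamma>2\gamma^\star$.} Take $\gamma'\in(\gamma^\star,\gamma/2)$. The direct covering count at coarse scale is the Pajor/Sauer bound $\sum_{i\le d}\binom{n}{i}k^i=n^{O(d)}$ with $k$ constant, giving $O(\log n)$. Equivalently, apply Mendelson--Vershynin's $\ell_\infty$ bound with scale slack, noting that $\gamma>2\gamma'$ leaves room for a constant-size grid.

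\emph{Lower bounds.} For the third regime, take $\cF$ to be thresholds $x\mapsto R\cdot\mathbb 1[x\ge t]$ on $\R$, rescaled so that $\gamma^\star$ is small. On $n$ points these produce $n+1$ patterns that are pairwise $R$-far, so $H=\Omega(\log n)$ for all $\gamma<R$.

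For the second regime, I would build a class with $\pd_\gamma<\infty$ for all $\gamma>\gamma^\star$ while $\pd_{\gamma^\star}$ grows slowly. A natural candidate is a disjoint union over $k$ of blocks: block $k$ consists of functions taking values $\gamma^\star/2\pm\gamma^\star(1-1/k)/2$ that shatter $d_k$ points. The goal is $N(\cF,\gamma,n)\ge n^{\Omega(\log n/\log\log n)}$. I would attempt the tree-based hard instance from Alon et al.'s question and aim for $\log^{2-o(1)}n$, choosing parameters so that $\binom{n}{\log n}$-many $2\gamma$-separated patterns coexist. I am least sure of this construction.
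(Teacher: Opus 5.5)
Your first regime is correct. The rest has genuine gaps.

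In the second regime, your key step would prove something false. You count at most $\sum_{i\le d}\binom{n}{i}k^i$ cover ``patterns'', each $\gamma$-close to its centre, which would give $H(\cF,\gamma,n)=O(d\log(nk))$ for $\gamma\in(\gamma^\star/2,\gamma^\star)$. That count is a Sauer--Shelah--Perles lemma for the partial class obtained by thresholding at $r\pm\gamma'/2$, and this lemma fails for partial classes. Alon, Hanneke, Holzman and Moran give VC-dimension-$1$ partial classes on $n$ points with no disambiguation of size below $n^{(\log n)^{1-o(1)}}$. Map $1\mapsto\gamma^\star$, $-1\mapsto-\gamma^\star$, $\star\mapsto 0$. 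The resulting class has $\pd_{\gamma'}\le 1$ for every $\gamma'>\gamma^\star$, since shattering at such a scale forces the values $\pm\gamma^\star$. Any $\gamma$-cover with $\gamma<\gamma^\star$ becomes a disambiguation after taking signs. Hence $H(\cF,\gamma,n)\ge\log^{2-o(1)}n$, contradicting your bound.

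This embedding is exactly the paper's tightness example, and your block candidate cannot replace it. Its values lie in $[0,\gamma^\star]$, so the constant $\gamma^\star/2$ alone is a $\gamma$-cover for every $\gamma>\gamma^\star/2$.

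Your fallback, iterating Alon et al.'s $\log^2$ lemma, bounds $2$-separated sets, i.e.\ packings. It therefore only reaches scales above $\gamma^\star$, which is the factor-$2$ loss you set out to avoid.

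The missing ingredient is the positive half of the same theory: every partial class of VC dimension $d$ on $n$ points has a disambiguation of size $n^{O(d\log n)}$. After centering, apply it to $h_f$ thresholded at $\pm\gamma/2$, with $d=\pd_\gamma(\cF)$. Assigning $\pm(\rho/2-\gamma/4)$ according to the disambiguating sign covers a box of radius $\rho$ by $n^{O(d\log n)}$ boxes of radius $\rho/2+\gamma/4$. Recursing inside each box halves the excess over $\gamma/2$, so $O(\log(1/\varepsilon))$ rounds give $H(\cF,\gamma/2+\varepsilon,n)=O(d\log(1/\varepsilon)\log^2 n)$.

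In the third regime, you assert a Pajor/Sauer bound $n^{O(d)}$ for $\ell_\infty$ covers without argument. No such bound was known at any constant multiple of the fat-shattering scale; this is exactly the open question of Alon et al.\ and Rudelson--Vershynin, who reach only $\log^{1+\varepsilon}n$ at scale of order $\gamma/\varepsilon$. The dimension-free Mendelson--Vershynin bounds are for $L_2$-type norms, not $\ell_\infty$.

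The paper instead builds a constant-size $(2\gamma+\varepsilon)$-approximate sample compression scheme, with $\gamma$ slightly above $\gamma^\star$, in three steps.
\begin{enumerate}
\item Uniform convergence at the optimal scale gives, for each distribution on $X$, a consistent $h_T$ with $|T|=m$ that is $(\gamma+\varepsilon/2)$-close to $f$ on average.
\item Minimax turns this into a distribution over such $h_T$ whose mean is pointwise $(\gamma+\varepsilon/2)$-close to $f$.
\item Uniform convergence of the \emph{dual} class at scale $\gamma$ sparsifies this distribution to $m^\star$ hypotheses, at an extra cost of $\gamma+\varepsilon/2$. This step needs the optimal-scale theorem, a common-threshold pigeonhole argument and an Assouad-type duality bound.
\end{enumerate}
Enumerating reconstructions from $mm^\star$ discretized labelled points then gives a cover of size $(2Rn/\varepsilon)^{mm^\star}$.

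A minor point on the third-regime lower bound: $\{0,R\}$-valued thresholds are only $R$ apart, so the constant $R/2$ covers them once $\gamma\ge R/2$. Use $\pm R$ values, as with the paper's $\pm1$ singletons, to get $\Omega(\log n)$ on all of $(0,R)$.
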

\end{mdframed}

The proof appears in \Cref{proof:coverDichIntro}. 
The same result holds for any bounded function class $\cF \subseteq \mathbb{R}^{\cX}$; we restrict to the range $[0,1]$ for notational convenience.

% This theorem shows that the classical $O(\log^2 n)$ bound of \cite{Alon1997-scale-sensitive} already holds at the optimal scale above $\gamma^\star/2$, and that an $O(\log n)$ bound is achievable at scale $2\gamma^\star$. Moreover, our result shows that the $O(\log^2 n)$ bound is essentially optimal throughout the range $\gamma \in (\gamma^\star/2,\gamma^\star]$, i.e., up to a factor of two from the optimal scale, while a logarithmic bound becomes possible already at scale~$2\gamma^\star$. 

In light of the $O(\log^{1+\varepsilon} n)$ bounds of Rudelson and Vershynin~\cite{rudelson2006combinatorics} mentioned above, which left open whether the exponent can be improved to $1$, our result resolves this question. It also answers the question of Alon et al.~\cite{Alon1997-scale-sensitive} on whether the $\log^2 n$ bound can be improved to $\log n$, by showing that such an improvement is possible at scales above $2\gamma^\star$, but impossible (up to lower-order terms) below $\gamma^\star$.

The first and third regimes are tight and well understood from prior work: below $\gamma^\star/2$, linear entropy growth holds for every class, while above $2\gamma^\star$ there exist classes with entropy $\Omega(\log n)$. Thus, on the lower bound side our main contribution is that the $O(\log^2 n)$ bound is sometimes tight (up to lower-order terms).
We remark that the case $\gamma = \gamma^\star/2$ is not covered by the theorem, and there are simple examples exhibiting either behavior; see \Cref{prop:crit-val-cover}.

We leave open the problem of characterizing the precise behavior of the entropy in the intermediate range $\gamma^\star < \gamma \le 2\gamma^\star$, as well as obtaining sharp sample complexity bounds for learning and uniform convergence at the optimal scale. We also leave open the problem of obtaining optimal relationships between the fat-shattering dimension and covering numbers under other $\ell_p$ metrics. Such covering numbers have been studied extensively, e.g.~\cite{BartlettKulkarniPosner1997,Anthony_Bartlett_1999,Mendelson2002,MendelsonVershynin2003}.

In \Cref{sec:closure}, we illustrate a simple application of our results by deriving closure properties and composition theorems for uniform convergence and learnability. We next turn to a different - and perhaps more surprising - application, where we use the sharp scale-sensitive results to resolve an open question of \cite{aiyer2026theoretical} concerning the evaluability and estimability of generative models. Remarkably, the \emph {quantitative} improvement from a factor of $2$ to $1$ in the scale unlocks a \emph {qualitative} characterization: a sharp dichotomy between estimability and evaluability. We elaborate on this connection in the following section.

\subsection{Evaluability and Estimability of Generative Models}

How can one test whether a generative model truly generalizes?  
Consider a model trained to generate images or short musical tunes from a dataset. When it produces new samples, are these genuinely novel, or merely recombinations - or even copies - of the training data? Given only sample access to the true distribution, can we reliably determine whether the generator generalizes well? Can indistinguishability between generated and real samples be tested from finite data?

In practice, such questions are addressed via evaluation metrics that quantify the discrepancy between the generated and target distributions. A recent work by \cite{aiyer2026theoretical} introduced a general framework for studying such metrics as tools for generalization testing. A central class of metrics in this framework is given by integral probability metrics (IPMs):
\[
d_{\cF}(q, q^\star) := \sup_{f \in \cF} \left| \E_q[f] - \E_{q^\star}[f] \right|,
\]
where $\cF$ is a class of real-valued functions $f:\cX \to \R$ defined on the data domain $\cX$. Intuitively, $d_{\cF}$ measures how well one can distinguish $q$ from $q^\star$ using tests from $\cF$. 

Classical examples of IPMs include total variation distance, obtained when $\cF$ is the set of all measurable functions bounded in $[0,1]$, and the Wasserstein-1 distance, which arises when $\cF$ is the class of $1$-Lipschitz functions with respect to a given metric on $\cX$.

We next formalize the setup and introduce evaluability and estimability, following the framework of \cite{aiyer2026theoretical}.

Let $\cX$ denote the data domain. A \emph{model} is a probability distribution over $\cX$, representing a generative model of the data. We let $\cM = \Delta(\cX)$ denote the set of all probability distributions over $\cX$. We assume there exists an unknown ground-truth model $q^\star \in \cM$ from which data is sampled.

% A \emph{score function} is a map $s: \cM \times \cX^* \to \R$ which, given a model $q$ and evaluation data $\sev$ sampled from $q^\star$, outputs a real-valued score $s(q,\sev)$. Intuitively, the score function is an algorithmic proxy for the true distance $d_{\cF}(q,q^\star)$: since the latter depends on the unknown distribution $q^\star$, the score function uses samples from $q^\star$ to approximate it and to compare competing models.

An evaluation algorithm $\cA$ takes as input two candidate models $q_1,q_2 \in \cM$ and an evaluation sample $S_{\text{eval}}$ drawn from $q^\star$, and outputs one of the two models, interpreted as the model it judges to be better. A common class of evaluation algorithms operates by assigning each model a score based on the evaluation data and selecting the model with the smaller score.

\paragraph{Evaluability.}
For $c \geq 1$, we say that $d_{\cF}$ is \emph{$c$-weakly evaluable} if there exists a sample complexity function $m_{\text{evl}}:(0,1)^2 \to \NN$ and an evaluation algorithm $\cA$ such that for every pair of models $q_1,q_2 \in \cM$,
% score function $s:\cM \times \cX^* \to \R$ 
% for which the following holds: 
every $\epsilon,\delta \in (0,1)$ and every ground-truth model $q^\star$, given IID evaluation data $S_{\text{eval}} = \{x_1,\ldots,x_m\}$ of size $m \ge m_{\text{evl}}(\epsilon,\delta)$ with $x_i \sim q^\star$, with probability at least $1-\delta$,
\[
\cA(\{q_1,q_2\},S_{\text{eval}})=q_1 \;\;\Longrightarrow\;\;
d_{\cF}(q_1,q^\star) \le c \cdot d_{\cF}(q_2,q^\star) + \epsilon\,,
\]
and
\[
\cA(\{q_1,q_2\},S_{\text{eval}})=q_2  \;\;\Longrightarrow\;\;
d_{\cF}(q_2,q^\star) \le c \cdot d_{\cF}(q_1,q^\star) + \epsilon\,.
\]
Thus, evaluability asks whether two candidate models can be reliably compared using only samples from the ground-truth distribution. The evaluation algorithm need not estimate the true distances themselves; it only needs to select a model whose distance to $q^\star$ is no worse than that of the alternative, up to a multiplicative factor $c$ and an additive error $\epsilon$. 

We say that $d_{\cF}$ is \emph{weakly evaluable with optimal scale $c$} if it is $c$-weakly evaluable and there does not exist any $c' < c$ such that it is $c'$-weakly evaluable.
When $c=1$, we say that $d_{\cF}$ is \emph{strongly evaluable}.

\paragraph{Estimability.}
We say that $d_{\cF}$ is \emph{estimable} if there exists a sample complexity function $m_{\text{est}}:(0,1)^2 \to \NN$ and a score function $s$ such that for any $\epsilon,\delta \in (0,1)$, any ground-truth model $q^\star$, and any model $q \in \cM$, given IID evaluation data $S_{\text{eval}} = \{x_1,\ldots,x_m\}$ of size $m \ge m_{\text{est}}(\epsilon,\delta)$ with $x_i \sim q^\star$, with probability at least $1-\delta$,
\[
\big| s(q,S_{\text{eval}}) - d_{\cF}(q,q^\star) \big| \le \epsilon.
\]
In contrast to evaluability, which concerns relative comparison, estimability requires accurately approximating the \emph{value} of the metric itself. In particular, estimability implies strong evaluability.

A central result of \cite{aiyer2026theoretical} establishes a dichotomy for IPMs induced by $\{0,1\}$-valued test functions: such metrics are either estimable (and hence strongly evaluable), or weakly evaluable with optimal scale $c=3$. Their proof, however, is tailored to the $\{0,1\}$-valued setting, relying on tools such as VC-dimension and uniform convergence for binary-valued classes.

Extending this result to general real-valued test classes is significantly more challenging. In particular, the binary setting allows one to leverage VC-dimension arguments directly, whereas in the real-valued case the appropriate complexity measure is the fat-shattering dimension, and its connection to uniform convergence is more delicate. A key obstacle is that the classical results of \citep{BARTLETT1998} only guarantee uniform convergence at scale $2\gamma$, which is insufficient for the evaluability guarantees required here.

Our approach overcomes this difficulty by establishing uniform convergence at the optimal scale $\gamma$. Crucially, eliminating this factor-$2$ loss is essential: without it, the argument breaks down and yields no non-trivial weak evaluability guarantee for real-valued IPMs.
\begin{mdframed}[backgroundcolor=theoremcolor,linewidth=0pt]
\begin{restatable}{theorem}{realeval}\label{thm:realeval}
Every bounded IPM is either:
\begin{enumerate}
    \item estimable, and hence strongly evaluable; or
    \item weakly evaluable with optimal scale $c=3$.
\end{enumerate}
\end{restatable}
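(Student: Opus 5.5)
We split according to $\gamma^\star=\inf\{\gamma>0:\pd_\gamma(\cF)<\infty\}$. If $\gamma^\star=0$, then $\pd_{\gamma'}(\cF)<\infty$ for every $\gamma'>0$. \Cref{thm:UCintro} (implication $(3)\Rightarrow(2)$) then gives $\gamma$-uniform convergence for every $\gamma>0$, which is ordinary uniform convergence. The score $s(q,S)=\sup_{f\in\cF}|\E_q[f]-\frac1m\sum_{x\in S}f(x)|$ is therefore within $\epsilon$ of $d_\cF(q,q^\star)$ with high probability, by the triangle inequality for suprema. So $d_\cF$ is estimable, and hence strongly evaluable.

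Assume now $\gamma^\star>0$. We must show $3$-weak evaluability and rule out $c$-weak evaluability for every $c<3$.

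\emph{Upper bound.} \Cref{thm:UCintro} with $\gamma=\gamma^\star$ gives $\gamma^\star$-uniform convergence: with high probability the empirical distribution $\hat q$ satisfies $|\E_{\hat q}f-\E_{q^\star}f|\le\gamma^\star+\epsilon$ for all $f\in\cF$. The evaluator picks the model $q_i$ minimizing $d_\cF(q_i,\hat q)$. The triangle inequality gives $|d_\cF(q_i,\hat q)-d_\cF(q_i,q^\star)|\le\gamma^\star+\epsilon$, so the selected model satisfies $d(q_1,q^\star)\le d(q_2,q^\star)+2\gamma^\star+2\epsilon$. This additive error must be turned into a multiplicative factor $3$. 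We will use the no-free-lunch side: for $\gamma<\gamma^\star$, $\pd_\gamma=\infty$. Shattered sets for \emph{every} $\gamma<\gamma^\star$ should force any two models that the evaluator cannot separate cheaply to be far from $q^\star$. For this we refine the evaluator, following \cite{aiyer2026theoretical}. If $|d(q_1,\hat q)-d(q_2,\hat q)|$ exceeds $2\gamma^\star$, output the smaller one. Otherwise output either model, and show in this case that $d(q_i,q^\star)\ge\gamma^\star$ up to $\epsilon$, unless a direct comparison via $d_\cF(q_1,q_2)$ resolves it. A cleaner plan is to use the exact inequality $d(q_1,q^\star)\le d(q_1,q_2)+d(q_2,q^\star)$ together with an estimate of how $\gamma^\star$ can be charged to $d(q_2,q^\star)$. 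Concretely, we aim for $d(q_1,q^\star)\le d(q_2,q^\star)+2\min(\gamma^\star,\,d(q_1,q^\star)+d(q_2,q^\star))$, since uniform-convergence error relative to $q_i$ is controlled by a symmetrized quantity bounded by the distances involved. Combining these yields the factor $3$. This step, charging the additive $2\gamma^\star$ to multiplicative slack, is the main obstacle. I would first mimic the binary argument of \cite{aiyer2026theoretical}, replacing VC uniform convergence by \Cref{thm:UCintro} at exact scale $\gamma^\star$. This is exactly where a factor-$2$ loss in scale would break the argument.

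\emph{Lower bound.} Fix $c<3$ and $\gamma<\gamma^\star$ close to $\gamma^\star$. Take a $\gamma$-fat-shattered set $x_1,\dots,x_d$ with thresholds $r_i$ and $d$ huge. Let $q^\star$ be uniform on a random half $T$ of the points. Let $q_1$ be uniform on $T$ shifted to a point-mass mixture, and $q_2$ on the complement. The aim is that $q^\star$, $q_1$, $q_2$ are indistinguishable from $o(\sqrt d)$ samples, while the roles can be swapped so that one model is at distance $\approx\gamma/2$ and the other at $\approx 3\gamma/2$. The shattering function for the labeling $T$ witnesses the large distance; the small distance should be $\le\gamma/2$, using that the relevant functions lie within $\gamma/2$ of the threshold pattern. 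By Le Cam / a birthday argument, no evaluator with $m\ll\sqrt d$ samples succeeds. Since $d$ is unbounded, no finite sample complexity works, and the ratio approaches $3>c$. Getting the exact distances $\gamma/2$ versus $3\gamma/2$ for arbitrary real-valued $\cF$, where we only control values at the threshold sides, needs care. I would mirror the binary construction of \cite{aiyer2026theoretical} with $\{0,1\}$ replaced by the thresholds $r_i\pm\gamma/2$.
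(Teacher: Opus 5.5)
Your treatment of the case $\gamma^\star=0$ is fine: the score $d_\cF(q,\hat q_S)$, together with uniform convergence at every scale, gives estimability. For $\gamma^\star>0$, however, both halves have genuine gaps.

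\emph{Upper bound.} The empirical-distance evaluator cannot be rescued by any charging argument. Take total variation (where $\gamma^\star$ is the whole range) and an atomless $q^\star$. Then both fixed candidates satisfy $d_\cF(q_i,\hat q)=1$ almost surely, so the comparison carries no information. Your target inequality $d_\cF(q_1,q^\star)\le d_\cF(q_2,q^\star)+2\min(\gamma^\star,d_\cF(q_1,q^\star)+d_\cF(q_2,q^\star))$ would not give a factor $3$ even if proved: it is satisfied by $d_\cF(q_2,q^\star)=0$ and $d_\cF(q_1,q^\star)=\gamma^\star$.

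Your parenthetical about a direct comparison via $d_\cF(q_1,q_2)$ points the right way. The actual mechanism is a Scheff\'e/Yatracos-type test, which needs neither uniform convergence nor fat-shattering. Since the evaluator sees $q_1,q_2$, fix a single $f_{12}\in\cF$ with $|\E_{q_1}f_{12}-\E_{q_2}f_{12}|\ge d_\cF(q_1,q_2)-\epsilon$. Estimate $\E_{q^\star}f_{12}$ to within $\epsilon$ by Hoeffding, and output the candidate whose mean of $f_{12}$ is closer to the estimate. If $q_1$ is output, then $d_\cF(q_1,q_2)\le 2|\E_{q_2}f_{12}-\E_{q^\star}f_{12}|+3\epsilon\le 2d_\cF(q_2,q^\star)+3\epsilon$. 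The triangle inequality then gives $d_\cF(q_1,q^\star)\le 3d_\cF(q_2,q^\star)+3\epsilon$. This is Proposition~3.4 of \cite{aiyer2026theoretical}, which the paper invokes. So your remark that the exact scale $\gamma^\star$ is needed here is misplaced; it is needed only for optimality.

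\emph{Lower bound.} Here you need a fixed pair $(q_1,q_2)$ and two families of ground truths with opposite correct answers that are indistinguishable from few samples. As written, your $q_1,q_2$ are built from the hidden set $T$, so an evaluator could read the answer off the candidates themselves. The paper instead uses the construction of \cite{bousquet2019optimal}: $2kN$ elements with $k=(1+\beta)/\beta$, random selectors $l$, and a birthday argument over big blocks.

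More seriously, your justification of the small distance (that the relevant functions lie within $\gamma/2$ of the threshold pattern) is false. Fat-shattering constrains only the shattering witnesses, and only one-sidedly; it says nothing about the rest of $\cF$. For example, for $\gamma^\star<1$ the class $\cF=[0,\gamma^\star]^{\cX}\cup\{\mathbf 1_{\{x\}}:x\in\cX\}\cup\{\mathbf 1_A\}$ has critical scale $\gamma^\star$. Yet $d_\cF$ equals $1$ between any two point masses, and also between the uniform distributions on $A$ and on a set disjoint from $A$. So the ratio collapses.

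The missing ingredient is \cref{lem:lbreal}: take each element $\mu_{i,j}$ uniform on a large random batch drawn from a set that is $\gamma^-$-shattered at a common threshold (\cref{lem:fixedthreshold}).
\begin{itemize}
\item Disjointness and common-threshold shattering give $d_\cF(q_2,p_{1,l})\ge\frac{3-\beta}{2k}\gamma^-$.
\item Uniform convergence at a scale $\gamma^+$ just above $\gamma^\star$ (via \cref{lem:covering} and symmetrization) gives $|\E_{\mu_{i,j}}\phi-\E_{\mu_{i',j'}}\phi|\le\gamma^+$ for all $\phi\in\cF$. Hence $d_\cF(q_1,p_{1,l})\le\frac{1+\beta}{2k}\gamma^+$.
\end{itemize}
The ratio $\frac{3-\beta}{1+\beta}\cdot\frac{\gamma^-}{\gamma^+}$ then tends to $3$. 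This is exactly where the optimal-scale bound of \Cref{thm:UCintro} is indispensable: a factor-$2$ loss in $\gamma^+$ would prevent the ratio from approaching $3$.
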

\end{mdframed}

We further characterize this dichotomy in terms of the fat-shattering dimension of the test class.

\begin{restatable}{prop}{fateval}\label{prop:fateval}\label{prop:fateval}
Let $\cF$ be a class of bounded real-valued functions.
\begin{enumerate}
    \item If $\pd_{\gamma}(\cF) < \infty$ for all $\gamma > 0$, then $d_{\cF}$ is estimable (and hence strongly evaluable).
    \item If there exists $\gamma \in (0,1/2)$ such that $\pd_\gamma(\cF)=\infty$, then $d_{\cF}$ is weakly evaluable with optimal scale $c=3$.
\end{enumerate}
\end{restatable}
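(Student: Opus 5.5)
Part (1) is a direct use of \Cref{thm:UCintro}. If $\pd_\gamma(\cF)<\infty$ for every $\gamma>0$, then for each $\gamma>0$ the implication $(3)\Rightarrow(2)$ of \Cref{thm:UCintro} (applied at scale $\gamma/2$, say) gives $\gamma$-uniform convergence. Since $\gamma$ is arbitrary, we have ordinary uniform convergence: for every $\epsilon,\delta$ there is $m$ with $\sup_{f}|\E_{q^\star}f-\frac1m\sum_{x\in\sev}f(x)|\le\epsilon$ with probability $1-\delta$. The score is
\[
s(q,\sev)=\sup_{f\in\cF}\Bigl|\E_q[f]-\tfrac1m\textstyle\sum_{x\in\sev}f(x)\Bigr|.
\]
By the triangle inequality for suprema, $|s(q,\sev)-d_\cF(q,q^\star)|$ is at most the uniform deviation, so $s$ estimates $d_\cF$. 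Estimability implies strong evaluability.

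For part (2) there are two claims: $3$-weak evaluability, and failure of $c$-weak evaluability for every $c<3$.

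\emph{Upper bound.} Let $\gamma^\star=\inf\{\gamma:\pd_\gamma(\cF)<\infty\}$; this is at most the range diameter, so it is finite. \Cref{thm:UCintro} $(3)\Rightarrow(2)$ at scale $\gamma^\star$ yields $\gamma^\star$-uniform convergence. I would use a minimum-distance selection rule: output the model minimizing $\hat d(q_i)=\sup_f|\E_{q_i}f-\hat\E_{\sev}f|$. With high probability $|\hat d(q_i)-d_\cF(q_i,q^\star)|\le\gamma^\star+\epsilon$, which alone gives only an additive $2\gamma^\star$ loss. To convert this into a multiplicative factor $3$, I would follow \cite{aiyer2026theoretical}.
\begin{itemize}
\item Any two distributions $q_1,q_2$ with $d_\cF(q_1,q_2)\ge$ something can be separated, so $d_\cF(q_1,q_2)\le d(q_1,q^\star)+d(q_2,q^\star)$.
\item The rule is a Scheffé/Yatracos-type tournament restricted to witness functions $f$ nearly attaining $d_\cF(q_1,q_2)$. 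The empirical mean of a single fixed $f$ concentrates without any uniform convergence. The issue is choosing $f$ depending only on $q_1,q_2$, which is legitimate since these are known.
\end{itemize}
Concretely, pick $f^\star$ with $\E_{q_1}f^\star-\E_{q_2}f^\star\ge d_\cF(q_1,q_2)-\epsilon$. Select $q_1$ iff $\hat\E f^\star$ is closer to $\E_{q_1}f^\star$. The standard Yatracos argument then gives $d(q_{\rm out},q^\star)\le d(q_{\rm out},q_{\rm other})+d(q_{\rm other},q^\star)\le 3d(q_{\rm other},q^\star)+O(\epsilon)$. This part seems to need no fat-shattering at all, so the real role of \Cref{thm:UCintro} must be in the lower bound. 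I expect the $3$-upper bound to be essentially this Yatracos argument, checked carefully.

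\emph{Lower bound (main obstacle).} Fix $\gamma$ with $\pd_\gamma(\cF)=\infty$ and $c<3$. Take a shattered set of size $N$ with thresholds $r_i$. The plan is to build $q_1,q_2$ and two families of ground truths $q^\star$, each uniform on random halves of the shattered points, such that:
\begin{itemize}
\item in one world $d(q_1,q^\star)\approx\gamma/2$ while $d(q_2,q^\star)\approx 3\gamma/2$, and symmetrically in the other;
\item with $m\ll N$ samples the two worlds are statistically indistinguishable, by a birthday-paradox argument (no repeated points, so the sample laws are close in TV).
\end{itemize}
The hard step is computing these IPM values. Lower bounds come from the shattering witnesses, which can choose $f(x_i)\ge r_i+\gamma/2$ or $\le r_i-\gamma/2$ on any subset. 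Upper bounds require controlling \emph{all} $f\in\cF$, where the finiteness of $\pd$ above $\gamma^\star$ together with uniform convergence at scale exactly $\gamma^\star$ (the factor-2 sharpness of \Cref{thm:UCintro}) should cap the discrepancy of the "true" model. I would try to reduce to the binary construction of \cite{aiyer2026theoretical} by thresholding at $r_i$, and I expect matching the constant exactly $3$ here to be the crux.
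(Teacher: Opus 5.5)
Part~(1) and the $3$-weak-evaluability half of part~(2) are correct, and more self-contained than the paper, which simply cites Proposition~3.4 of \cite{aiyer2026theoretical} for both. Your score $\sup_{f\in\cF}|\E_q f-\hat\E_{\sev}f|$, combined with uniform convergence at every scale, gives estimability. Your single-witness Yatracos test gives $d_\cF(q_{\mathrm{out}},q^\star)\le 3\,d_\cF(q_{\mathrm{other}},q^\star)+O(\epsilon)$ for any $\cF$. One small fix: take $f^\star$ with $|\E_{q_1}f^\star-\E_{q_2}f^\star|\ge d_\cF(q_1,q_2)-\epsilon$, since $\cF$ need not be closed under negation.

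\textbf{The gap is the optimality of $3$.} You leave it unresolved, and the route you sketch would not close it. You rightly sense that sharp uniform convergence must supply the upper bound, but it cannot do so if $q_1,q_2,q^\star$ live directly on points of a shattered set. There shattering only gives the lower bound $d_\cF(P,Q)\ge\gamma\,\mathrm{TV}(P,Q)$. An arbitrary $\phi\in\cF$ may oscillate by the full range on individual points, so nothing better than $(\text{range})\cdot\mathrm{TV}$ bounds $d_\cF(q_1,q^\star)$ from above. Thresholding at the $r_i$ does not help: it controls only the induced binary class, not $\cF$.

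The missing idea (\cref{lem:lbreal}) is to make the atoms of the construction \emph{uniform distributions $\mu_{i,j}$ over pairwise disjoint random batches} of size $m$. These batches are drawn from a huge $\gamma^-$-shattered set, obtained via \cref{lem:fixedthreshold}. Choose $\bar\gamma\in(\gamma^\star,\gamma^+)$, so that $\pd_{\bar\gamma}(\cF)<\infty$. Taking $m$ large, symmetrization with the sharp covering bound of \cref{lem:covering}, plus a union bound over pairs, gives $\sup_{\phi\in\cF}|\E_{\mu_{i,j}}\phi-\E_{\mu_{i',j'}}\phi|\le\gamma^+$ for all pairs at once.

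For mixtures this yields $\gamma^-\,\mathrm{TV}(\alpha,\beta)\le d_\cF(P_\alpha,P_\beta)\le\gamma^+\,\mathrm{TV}(\alpha,\beta)$ simultaneously for every selector $l$. This uniformity is essential, because the indistinguishability argument only produces an unspecified bad $q^\star$. It also shows you need $\gamma^-<\gamma^\star<\gamma^+$ with both tending to $\gamma^\star$. Fixing one $\gamma$ with $\pd_\gamma(\cF)=\infty$ and bounding above at scale $\gamma^\star$ gives only the ratio $3\gamma/\gamma^\star$.

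You also never specify the two worlds. ``Uniform on random halves'' with targets $\gamma/2$ versus $3\gamma/2$ is not a construction: it is unclear what $q_1,q_2$ are or why the sample laws coincide. The factor $3$ comes from the construction of \cite{bousquet2019optimal} (\cref{lmm:bkmconstruction}):
\begin{itemize}
\item There are $2N$ blocks of $k=(1+\beta)/\beta$ atoms each.
\item $q_1$ puts weight $\frac{1-\beta}{2kN}$ on each left atom and $\frac{1+\beta}{2kN}$ on each right atom; $q_2$ does the reverse.
\item $p_{1,l}$ starts from $q_1$, raises the selected atom of each left block to $\frac{1}{kN}$, and zeroes the selected atom of each right block. $p_{2,l}$ is defined symmetrically from $q_2$.
\end{itemize}
Averaged over the selector, each block's conditional law is uniform under both families. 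Hence, conditioned on no two samples landing in the same \emph{block} (not merely no repeated points), the sample laws agree, and the total variation between the two sample distributions is at most $m^2/N$. Moreover $\mathrm{TV}(q_2,p_{1,l})/\mathrm{TV}(q_1,p_{1,l})=(3-\beta)/(1+\beta)$. Combined with the batch distortion bounds, this gives
\[
d_\cF(q_2,p_{1,l})\ge\frac{3-\beta}{1+\beta}\cdot\frac{\gamma^-}{\gamma^+}\,d_\cF(q_1,p_{1,l}),
\]
which is what rules out every $c'<3$.
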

The proof is in \cref{proof:fateval}.
The above results are qualitative in nature and are most meaningful over infinite domains. Indeed, over a finite domain $\cX$ of size $N$, every bounded IPM is estimable with $O(N/\epsilon^2)$  samples, and hence strongly evaluable. 

This raises the question of whether quantitative analogues of the above dichotomy hold in the finite-domain setting. For example, as a function of the domain size $N$, how does the sample complexity of weak evaluability behave as the approximation factor improves from $c=3$ to $c=1$? Moreover, is the sample complexity required for strong evaluability comparable to that required for estimability?

Understanding these questions would provide a more refined picture of the trade-offs between sample complexity and evaluation in practical settings.

\subsection{Composition and Closure Properties}\label{sec:closure}
We now return to the setting of real-valued function learning and uniform convergence, and describe a more direct and standard application of \Cref{thm:UCintro}, in contrast to the more involved application to evaluability discussed above.

Characterizations of learnability using combinatorial dimensions are ubiquitous in learning theory, and a natural litmus test for their usefulness is whether they can be used to derive new results that are otherwise harder to prove. While the usefulness of the fat-shattering dimension is well established, we use it here to illustrate another application of \Cref{thm:UCintro}. 

Specifically, we show that the properties of being $\gamma$-learnable and of satisfying $\gamma$-uniform convergence are invariant under natural operations. By \Cref{thm:UCintro}, this reduces to showing that the property of having finite $\gamma$-fat-shattering dimension is invariant under these operations. The latter is essentially well known and relies on arguments from prior work, which we include here for completeness. 
The novel ingredient is therefore \Cref{thm:UCintro}, which establishes the equivalence between finite fat-shattering dimension and learnability and uniform convergence at the optimal scale, thereby yielding closure properties for these notions.

As a first example, consider the dual class. For a function class $\cF \subset [0,1]^{\cX}$, its dual class $\cF^\star$ is a class of functions over $\cF$, defined by
\[
\cF^\star = \{\delta_x : x \in \cX\},
\]
where each $\delta_x : \cF \to [0,1]$ is given by $\delta_x(f) = f(x)$. Assouad’s bound \cite{assouad1983dual} relates the VC dimension of a class to that of its dual. This result extends naturally to the setting of $\gamma$-shattering when a common threshold $r \in (0,1)$ is used across all points. A simple application of the pigeonhole principle then shows that $\pd_{\gamma}(\mathcal{F})$ is finite if and only if $\pd_{\gamma+\varepsilon}(\mathcal{F}^\star)$ is finite for every $\varepsilon > 0$. As a corollary, we obtain the following result, which shows that $\gamma$-uniform convergence is preserved under taking duals.

\begin{restatable}{prop}{DualUC}\label{prop:DualUC}
Let $\gamma > 0$, and let $\cF$ be a bounded function class that satisfies $\gamma$-uniform convergence. Then the dual class $\cF^\star$ also satisfies $\gamma$-uniform convergence.
\end{restatable}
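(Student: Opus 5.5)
We reduce everything to fat-shattering via \Cref{thm:UCintro}. Since $\cF$ satisfies $\gamma$-uniform convergence, the implication $(2)\Rightarrow(3)$ gives $\pd_{\gamma'}(\cF)<\infty$ for every $\gamma'>\gamma$. By $(3)\Rightarrow(2)$ applied to the bounded class $\cF^\star\subset[-R,R]^{\cF}$, it then suffices to show that $\pd_{\gamma''}(\cF^\star)<\infty$ for every $\gamma''>\gamma$. So fix $\gamma''>\gamma$, choose $\gamma'$ with $\gamma<\gamma'<\gamma''$, and let $\eta=\gamma''-\gamma'>0$. The goal is to bound $\pd_{\gamma''}(\cF^\star)$ in terms of $d:=\pd_{\gamma'}(\cF)$.

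\textbf{Step 1: common threshold via pigeonhole.} Suppose $f_1,\dots,f_m\in\cF$ are $\gamma''$-shattered by $\cF^\star$ with thresholds $r_1,\dots,r_m\in[-R,R]$.
\begin{itemize}
\item Partition $[-R,R]$ into $K=\lceil 4R/\eta\rceil$ intervals of length at most $\eta/2$.
\item Some interval contains at least $m/K$ of the thresholds $r_i$.
\item Restrict to those $f_i$ and replace each such $r_i$ by the interval's midpoint $r$, so $|r_i-r|\le\eta/4$.
\end{itemize}
Then for every sign pattern $b$ there is $x\in\cX$ with $f_i(x)\ge r+\gamma''/2-\eta/4$ when $b_i=+1$ and $f_i(x)\le r-\gamma''/2+\eta/4$ when $b_i=-1$. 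Hence $m':=\lceil m/K\rceil$ functions are shattered by $\cF^\star$ at width at least $\gamma'$ (indeed $\gamma''-\eta/2$) with the common threshold $r$.

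\textbf{Step 2: Assouad-type duality at a common threshold.} Define the binary class $\cH=\{h_f : f\in\cF\}$ over $\cX$, with three-valued labels
\[
h_f(x)=\begin{cases}
+1 & \text{if } f(x)\ge r+\gamma'/2,\\
-1 & \text{if } f(x)\le r-\gamma'/2,\\
* & \text{otherwise.}
\end{cases}
\]
Common-threshold $\gamma'$-shattering of points by $\cF$ is exactly (partial-concept) VC shattering by $\cH$. The same holds for the dual: the shattering from Step 1 means $\{h_{f_i}\}$ is shattered in the dual sense by the points $x$.

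Assouad's argument then goes through verbatim. Suppose $2^k$ functions, indexed by $\{\pm1\}^k$, are dual-shattered. Pick points $x_1,\dots,x_k$ so that $x_j$ realizes, on index $s\in\{\pm1\}^k$, the pattern $s\mapsto s_j$. Then the $x_j$ are shattered by $\cF$ at width $\gamma'$ with common threshold $r$, because the function indexed by $s$ realizes the sign vector $s$ on them. This works with partial labels too, since the shattering only ever uses the defined entries.

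It follows that $m'<2^{d+1}$, so $m< K\cdot 2^{d+1}$, which is finite.

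\textbf{Main obstacle.} Step 2, the Assouad argument for partial concepts, is conceptually the crux. The routine-but-delicate part is making sure every scale loss is absorbed by the slack $\gamma''-\gamma$. This is exactly why we need the "for every $\gamma'>\gamma$" form of condition (3) in \Cref{thm:UCintro}, rather than an exact-scale statement.

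Finally, by \Cref{thm:UCintro} $(3)\Rightarrow(2)$, the class $\cF^\star$ satisfies $\gamma$-uniform convergence.
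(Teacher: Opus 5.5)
Your proposal is correct and follows essentially the same route as the paper. Both reduce via \Cref{thm:UCintro} to showing $\pd_{\gamma''}(\cF^\star)<\infty$, use pigeonhole on the thresholds to get common-threshold dual shattering (the paper's \cref{lem:fixedthreshold}), and then apply an Assouad-type argument at that common threshold. The only difference is that you carry out the Assouad step yourself, whereas the paper cites Theorem~3.7 of \cite{KLEER2023-Dual}, and you get the direct bound $\pd_{\gamma''}(\cF^\star) < \lceil 4R/(\gamma''-\gamma')\rceil\, 2^{\pd_{\gamma'}(\cF)+1}$ instead of arguing by contradiction.
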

The proof is in \cref{proof:DualUC}.

Another closure property is stability under aggregation. Let $G : [0,1]^k \to [0,1]$ be a function, and let $f_1,\dots,f_k : \cX \to [0,1]$. Their aggregation with respect to $G$ is defined by
\[
G(f_1,\dots,f_k)(x) = G\big(f_1(x),\dots,f_k(x)\big).
\]
Given a sequence of function classes $\cF_1,\dots,\cF_k$, we define the aggregated class
\[
G(\cF_1,\dots,\cF_k) = \{\, G(f_1,\dots,f_k) : f_i \in \cF_i \text{ for all } i \,\}.
\]

The fat-shattering dimension of such aggregated classes was studied in \cite{Idan2024-k-fold}. Our next result is similar in spirit to Theorem~2 of \cite{Idan2024-k-fold}, but avoids the blow-up in the fat-shattering scale. As a consequence, we obtain that uniform convergence is closed under aggregation.

\begin{restatable}{prop}{Aggregation}\label{prop:Aggregation}
Let $\cF_1,\dots,\cF_k\subset [0,1]^\cX$ be bounded function classes, each satisfying $\gamma$-uniform convergence, and let $G : [0,1]^k \to [0,1]$ be a $1$-Lipschitz function with respect to the $\ell_\infty$ metric. Then the aggregated class $G(\cF_1,\dots,\cF_k)$ also satisfies $\gamma$-uniform convergence.
\end{restatable}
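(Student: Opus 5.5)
By \Cref{thm:UCintro}, it suffices to show the following: if $\pd_{\gamma'}(\cF_i)<\infty$ for every $\gamma'>\gamma$ and every $i\in[k]$, then $\pd_{\gamma'}(G(\cF_1,\dots,\cF_k))<\infty$ for every $\gamma'>\gamma$. Applying the theorem in the forward direction ($(2)\Rightarrow(3)$) to each $\cF_i$ gives the hypothesis. Applying it in the reverse direction ($(3)\Rightarrow(2)$) to the aggregated class, which lies in $[0,1]^\cX$, gives the conclusion.

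Fix $\gamma'>\gamma$ and set $\eta=(\gamma'-\gamma)/4$. I will not use fat-shattering combinatorics directly. Instead I go through covering numbers, with the key quantitative input being \Cref{thm:cover-Dich-intro}, or more precisely the direct covering bound behind it. That bound says that when $\pd_{\gamma+\eta}(\cF_i)<\infty$, we have $H(\cF_i,(\gamma+2\eta)/2,n)=o(n)$, and in fact $O(\log^2 n)$, since $(\gamma+2\eta)/2$ lies strictly above $\gamma^\star_i/2$ where $\gamma^\star_i\le\gamma$.

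Now take points $x_1,\dots,x_n$ that are $\gamma'$-shattered by the aggregated class, and choose a $((\gamma+2\eta)/2)$-cover $C_i$ of each $\cF_i$ on these points. Since $G$ is $1$-Lipschitz in $\ell_\infty$, the functions $G(g_1,\dots,g_k)$ with $g_i\in C_i$ form a $((\gamma+2\eta)/2)$-cover of $G(\cF_1,\dots,\cF_k)$ on the same points. Its size is at most $\prod_i|C_i|=2^{o(n)}$.

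On the other hand, $\gamma'$-shattering of $n$ points forces every $\rho$-cover with $\rho<\gamma'/2$ to have at least $2^n$ elements. To see this, note that two patterns $b\ne b'$ are witnessed by functions whose values differ by at least $\gamma'$ at some point. A single cover element within $\rho<\gamma'/2$ of both would contradict the triangle inequality. Since $(\gamma+2\eta)/2<\gamma'/2$, we get $2^n\le 2^{o(n)}$, so $n$ is bounded and $\pd_{\gamma'}$ of the aggregate is finite.

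The main obstacle is ensuring that the covering bound is available at scale strictly below $\gamma'/2$ while relying only on finiteness of $\pd$ at scales in $(\gamma,\gamma')$. The standard packing-based bounds would cost a factor $2$ here, which is exactly why the direct covering estimate of \Cref{thm:cover-Dich-intro} is needed. I also need its implicit constants to be uniform over the choice of points, which holds because $N(\cF,\gamma,n)$ is defined as a worst case over sequences.
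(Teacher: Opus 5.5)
Your argument is correct and is essentially the paper's proof. You obtain finiteness of $\pd_{\gamma'}(\cF_i)$ for $\gamma'>\gamma$ from \Cref{thm:UCintro}, apply \cref{lem:covering} at a scale strictly between $\gamma/2$ and $\gamma'/2$, and use the $\ell_\infty$-Lipschitz property of $G$ to turn the product of the covers $C_1,\dots,C_k$ into a cover of the aggregated class of size $\exp(O(\log^2 n))$. The only difference is the final step: the paper plugs this sublinear entropy bound directly into \cref{lem:uniform-convergence-bound}, whereas you use the $2^n$ lower bound on covers of $\gamma'$-shattered sets to get $\pd_{\gamma'}(G(\cF_1,\dots,\cF_k))<\infty$ and then reapply $(3)\Rightarrow(2)$, which is valid but a slightly longer detour.
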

The proof is in \cref{sec:compandclosure}, and follows as a consequence of Theorem \ref{thm:UCintro}, by showing that an aggregation of covers is a cover for the aggregated class.

% \subsection{Organization}
% Here we detail the organization of the rest of the paper, primarily the appendix. \cref{sec:overviews} provides in-depth overviews of the arguments within the three main theorems. We discuss several related works organically throughout the main paper. For a dedicated related works section, we refer the reader to \cref{sec:work}. In \cref{sec:convergencethms} we prove \cref{thm:UCintro}, \cref{thm:cover-Dich-intro}, \cref{prop:DualUC}, and \cref{prop:Aggregation}, along with supporting lemmas. Finally, \cref{sec:evaluabilitythm} contains the proofs for \cref{thm:realeval} and \cref{prop:fateval}, along with supporting lemmas. 

\section{Related Work}
The relationship between fat-shattering dimension, uniform convergence, and covering numbers is well established in statistical learning theory. The characterization of uniform convergence via covering numbers is due to Dudley \cite{dudley1978central,Dudley1984ACO,dudley1991uniform}. The fat-shattering dimension was introduced by Kearns and Schapire \cite{KEARNS1994} as a complexity measure capturing the sample complexity of approximate learnability. In a seminal work, Alon et al. \cite{Alon1997-scale-sensitive} showed that finite $\gamma$-fat-shattering dimension implies a $O(\log^2 n)$ bound on the metric entropy at scale $2\gamma$, thereby establishing a connection between fat-shattering and uniform convergence. They also conjectured that the $\log^2 n$ dependence could be improved to $\log n$.

Building on these ideas, Bartlett and Long \cite{BARTLETT1998} made significant progress toward understanding the optimal dependence on the scale. In particular, they improved the result of \cite{Alon1997-scale-sensitive} by showing that finite $\gamma$-fat-shattering dimension implies a $O(\log^2 n)$ bound on the metric entropy at scale $\gamma+\varepsilon$. They also established complementary lower bounds which, combined with classical results of Dudley and Pollard \cite{dudley1991uniform,Pollard1984ConvergenceOS}, yield an equivalence between $\gamma$-fat-shattering and uniform convergence up to a factor of $2$ in the scale. A comprehensive overview of these techniques appears in \cite{Anthony_Bartlett_1999}, where this factor-of-two gap is highlighted as an open problem; a conjecture attributed to Phil Long suggests that this gap may be unavoidable (see Sections 12.6–12.7).

A complementary line of work focuses on improving covering number bounds themselves, rather than the scale at which they are obtained. These works often consider $\ell_1$ or $\ell_2$ metrics instead of $\ell_\infty$, leading to improved quantitative bounds on sample complexity. Early contributions include \cite{BartlettKulkarniPosner1997,CESABIANCHI1998}, followed by a sequence of works by Mendelson and Vershynin \cite{mendelson2002entropy,MendelsonVershynin2003,Mendelson2002}, which developed refined entropy bounds. This line of research culminated in the work of Rudelson and Vershynin \cite{rudelson2006combinatorics}, who obtained strong covering number bounds under various norms. Notably, they made progress toward the conjecture of \cite{Alon1997-scale-sensitive} by improving the $\log^2 n$ dependence to $\log^{1+\varepsilon} n$, at the cost of a multiplicative factor of $1/\varepsilon$ in the scale. They further conjectured that this dependence could be improved to $O(\log n)$.

There are several additional directions related to these questions. Colomboni et al. \cite{COLOMBONI2025-improved-UC} used refined covering bounds to derive sharp sample complexity guarantees for uniform convergence. Hanneke et al. \cite{hanneke2019sample} and Attias and Kontorovich \cite{attias2024agnostic} established connections between fat-shattering dimension and approximate sample compression schemes. Finally, recent works such as \cite{Idan2024-k-fold,KLEER2023-Dual} study structural properties of fat-shattering dimension, including closure under aggregation and duality.

The framework of evaluability was recently introduced by \cite{aiyer2026theoretical}, 
which provides a theoretical formulation of the problem of statistically evaluating the performance of generative models from finite samples. 
This line of work is motivated in part by limitations of commonly used empirical scores, such as perplexity and its variants 
\citep{fang2025wrongperplexitylongcontextlanguage,hu2024perplexityreflectlargelanguage,pillutla2021mauvemeasuringgapneural,martins2020sparsetextgeneration}. 
Commonly used metrics include KL divergence, total variation distance, and other divergences.

One general class of metrics is integral probability metrics (IPMs), introduced by \cite{MullerIPM}, 
which subsume important examples such as total variation distance and the Wasserstein-1 distance. 
IPMs have been extensively studied in the context of density estimation, where, given sample access to a ground-truth distribution and a class of candidate distributions, the goal is to output the closest distribution in the class. 
In this setting, \cite{yatracos} established an upper bound with multiplicative factor $3$, and \cite{bousquet2019optimal} showed that this factor is optimal.  \cite{aiyer2026theoretical} extended this factor of $3$ from density estimation to evaluability for binary $\cF$. 
In this work, we further generalize these results to real-valued function classes.

\section{Proof Overviews}\label{sec:overviews}

\subsection{Proof Overview of \cref{thm:UCintro}}
The fact that a finite fat-shattering dimension  is a necessary condition for uniform convergence and learnability is well established, and follows by considering the uniform distribution over large shattered sets; see, Theorem 22 and Theorem 26 in \cite{BARTLETT1998}. The main challenge is to show the converse direction, namely that finite fat-shattering dimension implies uniform convergence and learnability.

The standard approach for deriving uniform convergence bounds, as in \cite{BARTLETT1998,Alon1997-scale-sensitive,COLOMBONI2025-improved-UC}, proceeds via symmetrization combined with classical probabilistic arguments, reducing the problem to bounding covering numbers. Concretely, one needs to show that for all $\varepsilon > 0$,
\[
\lim_{n\to\infty}\frac{H(\mathcal{F},\tfrac{\gamma}{2}+\varepsilon,n)}{n} = 0,
\]
where we recall $H(\cF,\gamma,n)=\log N(\mathcal{F},\gamma,n)$ denotes the empirical metric entropy  of $\mathcal{F}$ with respect to the $\ell_\infty$ metric.

In prior works, the covering number is typically upper bounded via the packing number, which is easier to control. This reduction allows one to assume the existence of a well-separated set and bound its size using Sauer-type lemmas or volume arguments. However, this approach has an inherent limitation: a finite $\gamma$-fat-shattering dimension only guarantees a bound on the $\gamma$-packing number. This, in turn, yields a bound on the $\gamma$-covering number, leading to $2\gamma$-uniform convergence {and $\gamma$-learnability}. To improve upon the result of \cite{BARTLETT1998}, one must therefore bound the covering number directly, without relying on packing arguments.

This introduces the main technical difficulty: \emph{bounding the covering number directly requires constructing an explicit cover, which may not consist of functions from the class itself}. To address this, we employ the theory of partial concept classes \cite{Long01,alon2022theory}.

Let $R > 0$, and let $\mathcal{F} \subset [-\tfrac{R}{2}, \tfrac{R}{2}]^n$ be a bounded function class\footnote{It suffices to assume that $\cF$ is inside an $\ell_\infty$ ball of radius $\frac{{R}}{2}$. The assumption it is centered at $0$ is for notational convenience} with finite $\gamma$-fat-shattering dimension $\pd_\gamma(\cF)=d$. We associate to each $f \in \mathcal{F}$ a partial binary function $h_f$ defined by
\[
h_f(x) =
\begin{cases}
1 & f(x) \ge \tfrac{\gamma}{2},\\
-1 &  f(x) \le -\tfrac{\gamma}{2},\\
\text{undefined} & \text{otherwise}.
\end{cases}
\]
The VC dimension of the induced partial function class will be bounded by $d$. Hence, by \cite{alon2022theory}, it admits a disambiguation of sub-exponential size: that is, there exists a set $H \subset \{-1,1\}^n$ with $|H| = n^{O(d{\log n})}$ such that for every $f \in \mathcal{F}$, there exists $h \in H$ satisfying $h(x) = h_f(x)$ whenever $h_f(x)$ is defined. 

This implies that if $h(x) = 1$, then $f(x) \in [-\tfrac{\gamma}{2}, \tfrac{R}{2}]$, and if $h(x) = -1$, then $f(x) \in [-\tfrac{R}{2}, \tfrac{\gamma}{2}]$. Consequently, by associating, with each $h\in H$, a function $\phi_h$, such that $\phi_h(x)= \tfrac{R-\gamma}{4}$ when $h(x)=1$ and $\phi_h(x)= \tfrac{-R+\gamma}{4}$ when $h(x)=-1$, we obtain a cover of $\mathcal{F}$ by $n^{O(d{\log n})}$ boxes of radius $\tfrac{\gamma + R}{4}$. %$\mathcal{F}$ can be covered by $n^{O(d\textcolor{red}{\log n})}$ boxes of diameter $\tfrac{\gamma + r}{2}$, i.e., of radius $\tfrac{\gamma + r}{4}$.

To obtain finer scales, we iterate this construction. Starting from a coarse cover and recursively refining each box, the radius decreases geometrically and converges to $\tfrac{\gamma}{2}$, while at each step we incur a multiplicative increase of $n^{O(d{\log n})}$ in the covering number size. Thus after $k$ iterations we have a bound of $n^{O(dk{\log n})}$ on the covering number size at scale $\tfrac\gamma2+\frac{1}{2^k}$,
yielding an effective bound on the covering number at all scales strictly larger than $\tfrac{\gamma}{2}$ 
\[
H(\cF,\tfrac\gamma2+\varepsilon,n)=O(d\log\frac{1}{\varepsilon}\log^2 n).
\]

The $\frac{\gamma}{2}$-learnability result follows from a covering argument based on the algorithm of Theorem 21 in \cite{BARTLETT1998}, combined with our improved covering bound to obtain a sharp rate. Given a sample ${(x_i, y_i)}_{i=1}^{2n}$ and a test point $x$, the algorithm constructs a $(\frac{\gamma}{2} + \varepsilon)$-cover of the restriction of $\mathcal{F}$ to these $2n+1$ points. It then outputs $f(x)$, where $f \in \mathcal{F}$ minimizes the empirical loss over the first $n$ points. Standard concentration inequalities ensure that the excess loss is controlled by the covering scale, provided the size of the cover is appropriately bounded.

% \subsection{Proof Overview of \cref{thm:cover-Dich-intro}}

% The first part is straightforward: if a set of size $n$ is $\gamma$-shattered, then for any $\gamma'<\frac{\gamma}{2}$, we have that any $\gamma'$-cover of its projection must contain at least $2^n$ functions. This implies that $H(\cF,\gamma',n) = \Omega(n)$, as observed in \cite{Alon1997-scale-sensitive,BARTLETT1998}. By volume estimates we can always find a $\gamma$-cover using $\frac{1}{\gamma^n}$ functions, so we indeed get that for $\gamma\in (0,\frac{\gamma^\star}{2})$ we have 
% \[
% H(\F,\gamma,n)=\Theta(n).
% \]

% The $O(\log ^2 n)$ bounds follows from the same covering argument used to establish Theorem~\ref{thm:UCintro}, as discussed above.

% To show it is tight in the $\gamma\in (\frac{\gamma^\star}{2},\gamma^\star)$ regime we use a result of \cite{alon2022theory} which shows that there disambiguation bound is tight. That is, there exist a partial concept class $\cF$, with finite VC dimension but no disambiguation of size smaller than $n^{(1+o(1))\log n}$. We embed this class into the real-valued setting by assigning the value $\tfrac{1}{2}$ to undefined points. In this construction, any covering at scale $\gamma\in (\frac{\gamma^\star}{2},\gamma^\star)$, induces a disambiguation, which yields the stated lower bound.

\subsection{Proof Overview of \cref{thm:cover-Dich-intro}}
We begin with the first regime. Suppose a set of size $n$ is $\gamma$-shattered. Then, for any $\gamma' < \gamma/2$, every $\gamma'$-cover of its projection must contain at least $2^n$ functions. It follows that $H(\cF,\gamma',n) = \Omega(n)$, as observed in \cite{Alon1997-scale-sensitive,BARTLETT1998}. On the other hand, standard volume estimates yield the existence of a $\gamma$-cover of size at most $(1/\gamma)^n$. Hence, for $\gamma \in (0,\gamma^\star/2)$,
\[
H(\F,\gamma,n) = \Theta(n).
\]

The $O(\log^2 n)$ upper bound in the intermediate regime follows from the same covering argument used in the proof of Theorem~\ref{thm:UCintro}.

To establish tightness for $\gamma \in (\gamma^\star/2,\gamma^\star)$, we appeal to a result of \cite{alon2022theory}, which shows that the disambiguation bound is tight. Specifically, they construct a partial concept class $\cF$ with finite VC dimension for which any disambiguation must have size at least $n^{(1-o(1))\log n}$. We embed this class into the real-valued setting by mapping $1$-labels to $\gamma^\star$, $-1$-labels to $-\gamma^\star$, and undefined points to $0$. 

In this construction, any $\gamma$-cover with $\gamma \in (\gamma^\star/2,\gamma^\star)$ induces a disambiguation as follows: given a $\gamma$-cover $H$, define $\Phi_h(x) = \sign(h(x))$. If $|f(x) - h(x)| < \gamma^\star$ for all $x$, then $f$ and $h$ agree in sign on all points where $f(x) \neq 0$. Thus $\Phi_h$ disambiguates the original partial class, yielding the desired lower bound.

Finally, we consider the regime $\gamma \in (2\gamma^\star, 1)$, where we rely on sample compression schemes. We begin by briefly recalling the notion of sample compression and how it leads to covering bounds.

Formally, we say that $\cF$ admits a $\gamma$-sample compression scheme of size $k$ if there exists a reconstruction function  
\[
\rho : (\cX \times \R)^k \to \R^{\cX}
\]
such that for every finite set $X \subseteq \cX$ and every function $f \in \cF$, there exists a sequence  
\[
T = \{(x_i, f(x_i))\}_{i=1}^k, \quad x_i \in X,
\]
for which the reconstructed function $\rho(T)$ satisfies  
\[
|\rho(T)(x) - f(x)| \le \gamma \quad \text{for all } x \in X.
\]

Such a compression scheme yields a cover of polynomial size when $k = O(1)$ (i.e., independent of the size of $X$), since discretizing the function values at scale $\varepsilon$ and enumerating all possible reconstructions from $k$ labeled points yields at most $(n/\varepsilon)^k$ candidate functions. This gives a cover of size polynomial in $n$, and hence a logarithmic bound on the metric entropy.

It therefore remains to construct such a compression scheme.
We follow the approach of \cite{moran2016sample}, who studied sample compression in the binary-labeled setting, and adapt their argument to the real-valued case. Fix an empirical risk minimization (ERM) rule that maps any realizable dataset $T = \{(x_i, y_i)\}$ (that is, a dataset for which there exists $f \in \cF$ with $y_i = f(x_i)$ for all $i$) to a hypothesis $h_T \in \cF$ that is consistent with the dataset.

By uniform convergence, if we take $T$ to be of size $m = m(\varepsilon, 1/3)$, where $m(\varepsilon, \delta)$ is the uniform convergence sample complexity (and $1/3$ can be replaced by any constant $\delta < 1$), then for any finite domain $X = \{x_1,\dots,x_n\}$, any distribution $p$ over $X$, and any function $f \in \cF$, there exists a realizable dataset $T \subseteq X$ (with labels given by $f$) such that the corresponding hypothesis $h_T$ satisfies
\[
\mathbb{E}_{x \sim p}[|f(x) - h_T(x)|] \le \gamma + \varepsilon.
\]
This follows from the uniform convergence guarantee applied to a random draw of $T$ from $p$.

A minimax argument now implies that there exists a distribution over such datasets $T$ such that, when $T$ is drawn from this distribution, the function $h_T(x)$ is close to $f(x)$ in expectation for every point~$x \in X$.

We then use uniform convergence for the dual class to sparsify this distribution. Specifically, we approximate it by a uniform distribution over a finite multiset of datasets $T_1,\dots,T_{m^\star}$, where $m^\star=m^\star(\varepsilon,\frac{1}{3})$ is the uniform convergence sample complexity of the dual class at scale $\gamma$. This sparsification incurs an additional error of $\gamma + \varepsilon$.

Averaging the corresponding hypotheses $h_{T_i}$ then yields a function that is $(2\gamma + O(\varepsilon))$-close to $f$ on all of $X$, giving the desired compression scheme.

Crucially, this argument relies on the fact that the dual class satisfies $\gamma$-uniform convergence. This is not automatic, and follows from our \Cref{thm:UCintro} (see \Cref{prop:DualUC}). Without this property, the sparsification step would not be possible.

\subsection{Proof Overview of \Cref{thm:realeval}}

\Cref{thm:realeval} follows as a direct corollary of \Cref{prop:fateval}. Indeed, the proposition provides a complete characterization of the behavior of $d_{\cF}$ in terms of the fat-shattering dimension: if $\pd_{\gamma}(\cF)$ is finite for all $\gamma>0$ then $d_{\cF}$ is estimable, while if $\pd_{\gamma}(\cF)=\infty$ for some $\gamma\in(0,1/2)$ then $d_{\cF}$ is weakly evaluable with optimal scale $3$. Thus, we focus on the proof of \Cref{prop:fateval}.

The first item and the first part of the second item (that $d_{\cF}$ is $3$-weakly evaluable) follow from standard arguments presented in \cite{aiyer2026theoretical}. The main challenge is to show that the factor $3$ is optimal. The proof builds on the argument of \cite{bousquet2019optimal}, which applies to total variation distance - an example of an IPM induced by a class with infinite fat-shattering dimension. While their proof is technically involved, most of the required adaptations are straightforward. We therefore focus on the main conceptual step: reducing a general IPM with infinite fat-shattering dimension to the total variation setting.

The result of \cite{bousquet2019optimal} shows that on a domain of size $n$, any evaluation algorithm with approximation factor strictly smaller than $3$ requires at least $\sqrt{n}$ samples. Letting $n \to \infty$, this rules out $c$-evaluability for any $c < 3$ on infinite domains. Thus, it suffices to embed arbitrarily large total variation instances into $d_{\cF}$.

Let $\gamma^\star = \inf\{\gamma : \pd_\gamma(\cF) < \infty\}$, and fix $\gamma^- < \gamma^\star < \gamma^+$ arbitrarily close to $\gamma^\star$. Let $d = \pd_{\gamma^+}(\cF)$. Since $\pd_{\gamma^-}(\cF)=\infty$, we can find arbitrarily large $\gamma^-$-shattered sets. Let $X=\{x_1,\dots,x_N\}$ be such a set, and let $P$ be the uniform distribution over $X$.

We sample $n$ independent batches $X_1,\dots,X_n$, where each batch $X_i$ consists of $m=\tilde{O}(d/\varepsilon^2)$ i.i.d.\ samples from~$P$. With high probability, the following properties hold:
\begin{enumerate}
    \item The batches $X_1,\dots,X_n$ are pairwise disjoint (by taking $N$ large enough to avoid collisions).
    \item For every pair $i \ne j$, the distance between the corresponding empirical distributions satisfies
    \[
    d_{\cF}(X_i,X_j) \le \gamma^+ + \varepsilon.
    \]
    This follows from a uniform convergence argument at scale $\gamma^+$
\end{enumerate}
Let $P_i$ denote the uniform distribution over $X_i$, and let $\Delta$ be the set of all convex combinations of $\{P_1,\dots,P_n\}$.
For any mixtures $P_\alpha = \sum_i \alpha_i P_i$ and $P_\beta = \sum_i \beta_i P_i$, we show that
\[
d_{\cF}(P_\alpha,P_\beta) \approx \gamma^\star \cdot \mathrm{TV}(\alpha,\beta).
\]

The lower bound uses the $\gamma^-$-shattering property. Since the batches are disjoint subsets of the shattered set, for the subset of indices $S = \{i : \alpha_i > \beta_i\}$, there exists a function $f \in \cF$ that separates $\bigcup_{i \in S} X_i$ from its complement at scale $\gamma^-$. This function witnesses a discrepancy of
\[
\gamma^- \cdot \mathrm{TV}(\alpha,\beta).
\]

The upper bound uses the fact that, by uniform convergence, any two batch distributions $P_i$ and $P_j$ are close under $d_{\cF}$: namely, $d_{\cF}(P_i,P_j) \le \gamma^+ + \varepsilon$. A triangle-inequality argument then shows that moving mass between the $P_i$'s can increase $d_{\cF}$ by at most roughly $\gamma^+ + \varepsilon$ times the total amount of mass moved, giving
\[
d_{\cF}(P_\alpha,P_\beta)
\le
(\gamma^+ + \varepsilon)\mathrm{TV}(\alpha,\beta).
\]
Since $\gamma^-$ and $\gamma^+$ can be chosen arbitrarily close to $\gamma^\star$, and since $\varepsilon$ can be taken arbitrarily small, these two bounds match up to a factor arbitrarily close to $1$.

This reduction embeds a total variation problem of dimension $n$ into $d_{\cF}$, allowing us to invoke the lower bound argument of \cite{bousquet2019optimal}.

A crucial feature of the construction is that $\gamma^-$ and $\gamma^+$ can be taken arbitrarily close to $\gamma^\star$. This ensures that, for mixtures in $\Delta$, the comparison between $d_{\cF}$ and total variation has distortion arbitrarily close to $1$:
\[
\gamma^- \cdot \mathrm{TV}(\alpha,\beta)
\;\lesssim\;
d_{\cF}(P_\alpha,P_\beta)
\;\lesssim\;
\gamma^+ \cdot \mathrm{TV}(\alpha,\beta).
\]
The critical point is that the upper bound must not lose a factor $2$. While the lower bound already captures the correct scale, the difficulty lies in obtaining a sufficiently tight upper bound. In particular, the standard packing-based arguments used to relate uniform convergence to fat-shattering dimension - such as those underlying classical scale-sensitive bounds - incur an inherent factor-$2$ loss. Applied in our setting, such arguments would only yield
\[
d_{\cF}(P_\alpha,P_\beta)
\;\lesssim\;
2\gamma^+ \cdot \mathrm{TV}(\alpha,\beta),
\]
rather than the near-tight bound we require.
This loss is fatal for our purposes: it prevents obtaining the optimal factor $3$ and invalidates this proof strategy altogether.
% , and in fact rules out any separation factor strictly larger than $1$. \textcolor{red}{Shashaank: I think if we had factor-2 then we would have a weaker lower bound for $c<3/2$, right?}\hs{no, if i remember correctly, we need this factor of $2$ to be smaller than $3/2$ to obtain a weaker lower bound of $c>1$. the factor $2$ doesn't imply any meaningful lower bound.} Thus, a factor-$2$ degradation in the upper bound relating uniform convergence to fat-shattering invalidates this proof strategy altogether. 

In contrast, our use of the refined uniform convergence result at optimal scale avoids this loss and yields an upper bound matching the lower bound up to a factor arbitrarily close to $1$. This tight control is what allows the reduction to go through and ultimately enables the dichotomy.

\section{Proofs of Scale-Sensitive Learnability Results} \label{sec:convergencethms}
\subsection{Covering Lemmas}
\begin{lemma}\label{lem:covering-base}
    Let $\cF\subset [\frac{-r}{2},\frac{r}{2}]^\cX$ be a bounded set of functions, let $\gamma\in (0,r)$, and denote $\pd_\gamma(\cF)=d>0$. Then for any $n>1$ we have 
    \[
    N(\cF,\frac{r+\gamma}{4},n)\leq n^{7d\log n}
    \]
\end{lemma}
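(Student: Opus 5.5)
Fix an arbitrary sequence $x_1,\dots,x_n\in\cX$ and work with the restriction of $\cF$ to it, viewed as a subset of $[-\frac r2,\frac r2]^n$. The plan follows the proof overview: turn each $f$ into a partial binary concept, disambiguate the resulting partial class with the Alon--Hanneke--Holzman--Moran bound \cite{alon2022theory}, and turn each total concept of the disambiguation into a single center function.

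Concretely, for $f\in\cF$ define the partial concept $h_f:[n]\to\{\pm1,\ast\}$ by $h_f(j)=1$ if $f(x_j)\ge\frac\gamma2$, $h_f(j)=-1$ if $f(x_j)\le-\frac\gamma2$, and $h_f(j)=\ast$ otherwise. Let $\cH=\{h_f:f\in\cF\}$.

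\textbf{Step 1: VC dimension.} The VC dimension of $\cH$ (in the partial-concept sense) is at most $d$. Suppose $j_1,\dots,j_k$ are shattered by $\cH$, meaning every sign pattern is realized with all $h_f(j_i)$ defined. Then the points $x_{j_1},\dots,x_{j_k}$ are $\gamma$-fat-shattered with all thresholds $r_i=0$. Hence $k\le d$.

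\textbf{Step 2: disambiguation.} Invoke the disambiguation theorem of \cite{alon2022theory}: a partial class of VC dimension $d$ on $n$ points admits a total class $H\subset\{\pm1\}^n$ with $|H|\le n^{O(d\log n)}$, such that every $h_f$ agrees with some $h\in H$ wherever $h_f$ is defined.

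This is the step where care is needed, because the lemma asks for the explicit constant $7$ in the exponent. I would trace their construction, which runs an SOA/Littlestone-style argument on the partial class with mistake bound $O(d\log n)$ and counts mistake patterns, giving $\sum_{i\le O(d\log n)}\binom ni$. Then I would check that the total is at most $n^{7d\log n}$ for $n>1$ and $d\ge1$, absorbing lower-order terms using $d\log n\ge\log 2$. I expect this bookkeeping to be the main obstacle. If their stated bound has a different form, such as $(n+1)^{c\,d\log n}$, I would need a crude inequality to convert it into $n^{7d\log n}$.

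\textbf{Step 3: centers.} For each $h\in H$ define the center $g_h(x_j)=\frac{r-\gamma}{4}h(j)$. Given $f\in\cF$, pick $h\in H$ that disambiguates $h_f$ and consider any $j$.

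If $h(j)=1$, then $h_f(j)\neq-1$, so $f(x_j)\in(-\frac\gamma2,\frac r2]$. The midpoint of $[-\frac\gamma2,\frac r2]$ is $\frac{r-\gamma}{4}$ and its half-width is $\frac{r+\gamma}{4}$, so $|f(x_j)-g_h(x_j)|\le\frac{r+\gamma}4$.

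The case $h(j)=-1$ is symmetric.

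Hence $\{g_h\}_{h\in H}$ is a $\frac{r+\gamma}4$-cover, and $N(\cF,\frac{r+\gamma}4,n)\le|H|\le n^{7d\log n}$.
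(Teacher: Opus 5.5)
Your proposal is correct and follows the paper's proof essentially step for step. The steps match: the same partial concepts $h_f$ with cutoffs at $\pm\frac{\gamma}{2}$, the VC bound via fat-shattering with all thresholds $0$, the disambiguation theorem of \cite{alon2022theory}, and the centers $\pm\frac{r-\gamma}{4}$ with radius $\frac{r+\gamma}{4}$. For the constant, the paper does not retrace that construction. It plugs in the explicit bound $(n+1)^{(d+1)\log_2 n+2}$ from Theorem~13 there, which is at most $n^{7d\log n}$ for $n\ge 2$, $d\ge 1$ when $\log$ is read as $\log_2$; this is exactly the crude conversion you anticipated.
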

\begin{proof}
By restricting to an arbitrary set of size $n$ we may assume that $|\cX|=n$, and we wish to find a cover for the function class $\cF$. For each $f\in \F$ define the partial function $h_f$ by
\[
h_f(x)=
\begin{cases}
1 & \text{if } f(x)\ge \tfrac\gamma2,\\
-1 & \text{if } f(x)\le -\tfrac\gamma2,\\
\text{undefined} & \text{otherwise}.
\end{cases}
\]    
By the definition of the fat-shattering dimension, the VC dimension of the induced partial concept class
\[
\{h_f : f\in F\}
\]
is at most $d$. Hence, by Theorem~13 of \cite{alon2022theory}, there exists a set of functions $H\subset \{-1,1\}^\cX$ of size
\[
|H|\leq (n+1)^{(d+1)\log_2(n)+2}\leq n^{7d\log n},
\]
that disambiguates this partial concept class; that is, for every $f\in F$ there exists some $h\in H$ such that $h(x)=h_f(x)$ whenever $h_f(x)$ is defined.

For each $h\in H$, define $\Phi_h:\cX\to [-r,r]$ by \[
\Phi_h(x)=\begin{cases}
   \frac{r-\gamma}{4} \quad &h(x)=1,
   \\ \frac{\gamma-r}{4} \quad &h(x)=-1.
\end{cases}
\]
We claim that $\{\Phi_h\;: \; h\in H\}$ is the desired $\frac{r+\gamma}{4}$-cover. Indeed for any $f\in F$, if $h$ disambiguates $h_f$, then for any $x\in X$ with $h(x)=1$ we have $f(x)\in (\tfrac{-\gamma}{2},\tfrac r2]$, and for any $x\in X$ with $h(x)=-1$ we have $f(x)\in [\frac{-r}{2},\tfrac\gamma2)$. Thus we indeed get that for all $x\in \cX$ we have
\[
|f(x)-\Phi_h(x)|\leq\frac{r+\gamma}{4}.
\]
This gives the desired result.
\end{proof}

\begin{lemma}[Covering Number Bound]\label{lem:covering}
Let $\cF\subset[-R,R]^\cX $ be a bounded function class, let $\gamma>0$, and denote $\pd_\gamma(\cF)=d>0$. Then for any $\varepsilon>0$, and $n>1$ we have
\[
N(\cF,\tfrac\gamma2+\varepsilon,n)\le n^{14d\log\frac{2R}{\varepsilon}\log n},
\]
\end{lemma}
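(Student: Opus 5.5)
The plan is to iterate \Cref{lem:covering-base}, refining boxes recursively. Fix $n$ points and work on the restriction of $\cF$ to them. Define radii $r_0=R$ and $r_{k+1}=\frac{r_k+\gamma}{4}\cdot 2$. In words, $r_k$ is the side length (diameter) of the boxes at level $k$; \Cref{lem:covering-base} applied to a box of diameter $r_k$ yields boxes of radius $\frac{r_k+\gamma}{4}$, i.e.\ of diameter $r_{k+1}=\frac{r_k+\gamma}{2}$. Then $r_{k+1}-\gamma=\frac{r_k-\gamma}{2}$, so $r_k-\gamma=2^{-k}(R\cdot 2-\gamma)$ up to the initial normalization. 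More precisely, $\cF\subset[-R,R]^n$ sits in a box of diameter $2R$, so $r_k-\gamma\le 2^{-k}\cdot 2R$.

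The induction step is as follows. Suppose we have a collection $\mathcal{B}_k$ of at most $n^{7dk\log n}$ centers $g$ such that every $f\in\cF$ lies within $\ell_\infty$ distance $r_k/2$ of some $g$. For each center $g$, let $\cF_g=\{f-g : f\in\cF,\ \|f-g\|_\infty\le r_k/2\}\subset[-\frac{r_k}{2},\frac{r_k}{2}]^n$. Translation by a fixed function does not change the fat-shattering dimension (shift the thresholds by $g(x_i)$), and restricting to a subclass does not increase it. Hence $\pd_\gamma(\cF_g)\le d$.

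If $r_k>\gamma$, apply \Cref{lem:covering-base} to $\cF_g$ with $r=r_k$. This gives at most $n^{7d\log n}$ functions $\Phi$ covering $\cF_g$ at radius $\frac{r_k+\gamma}{4}=\frac{r_{k+1}}{2}$. The new centers $g+\Phi$ then cover all $f$ assigned to $g$. Multiplying the counts, the new collection $\mathcal{B}_{k+1}$ has size at most $n^{7d(k+1)\log n}$. The degenerate case $\pd_\gamma(\cF_g)=0$ must also be handled: either note that the lemma's proof works with $d$ replaced by $\max(d,1)$, or observe that the partial class then has VC dimension $0$ and is disambiguated by a single function.

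To stop, choose $k=\lceil\log_2\frac{2R}{\varepsilon}\rceil$. Then $\frac{r_k}{2}\le\frac{\gamma}{2}+\frac{R}{2^k}\le\frac{\gamma}{2}+\varepsilon$, so $\mathcal{B}_k$ is a $(\frac{\gamma}{2}+\varepsilon)$-cover of size $n^{7dk\log n}$. Since $k\le\log\frac{2R}{\varepsilon}+1\le 2\log\frac{2R}{\varepsilon}$ whenever $\frac{2R}{\varepsilon}\ge 2$, this gives the claimed $n^{14d\log\frac{2R}{\varepsilon}\log n}$. If $\varepsilon\ge R$, the trivial single-function cover (the zero function) already suffices.

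I expect the main obstacle to be bookkeeping rather than ideas: keeping radius and diameter conventions consistent between the two lemmas, with $[-r/2,r/2]$ in \Cref{lem:covering-base} versus $[-R,R]$ here. The other point needing care is checking that the hypotheses of \Cref{lem:covering-base}, namely $\gamma<r$ and $d>0$, hold at every level. If $r_k\le\gamma$ at some level, we already have the needed scale and can stop early.
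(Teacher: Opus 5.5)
Your proposal is correct and follows the paper's proof. You iterate \Cref{lem:covering-base} on each ball after translating it by its center, so the excess of the radius over $\tfrac{\gamma}{2}$ halves at each level while the number of centers is multiplied by $n^{7d\log n}$; your diameter recursion $r_{k+1}=\tfrac{r_k+\gamma}{2}$ is the paper's radius recursion $\tfrac{\gamma}{2}+\tfrac{2R}{2^{k}}\mapsto\tfrac{\gamma}{2}+\tfrac{2R}{2^{k+1}}$ written in diameters, and you are more careful than the paper about restricting to the ball before translating and about the degenerate cases. Stopping at $k=\lceil\log_2\tfrac{R}{\varepsilon}\rceil\le\log_2\tfrac{2R}{\varepsilon}$, as the paper effectively does, would give exponent $7d\log_2\tfrac{2R}{\varepsilon}\log n$ and leave the factor $14$ as slack rather than using it up.
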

\begin{proof}
By restricting to an arbitrary set of size $n$ we may assume that $|\cX|=n$, and we wish to find a cover for the function class $\cF$. We prove by induction on $k$ that for every integer $k>0$,
\[
N\!\left(\cF,\tfrac\gamma2+\frac{2R}{2^{k+1}}\right)\le n^{7dk\log n} .
\]
Choosing $k$ such that $\frac{2R}{2^{k+1}}\le \varepsilon \le \frac{2R}{2^k}$ then yields the desired bound.

For the base case $k=0$, observe that $\cF$ can be covered by a single ball of radius $R$ in the $\ell_\infty$ metric.

For the inductive step, assume that $\cF$ can be covered by $n^{7d(k-1)\log n}$ balls of radius $\frac{\gamma}{2}+\frac{2R}{2^k}$. For each such ball $F$ with center $f$ we may apply \cref{lem:covering-base} to the translated class $F-f=\{g-f\;:\; g\in \cF\}$, and find a covering of $F$ of size $n^{7d\log n}$ with balls of radius $\frac{\gamma+(\gamma+\frac{4R}{2^k})}{4}=\frac{\gamma}{2}+\frac{2R}{2^{k+1}}$. Combining the bounds give a $(\frac{\gamma}{2}+\frac{2R}{2^{k+1}})$-covering of size \[
n^{7(k-1)d\log n}n^{7d\log n}=n^{7kd\log n}.
\]
From which we deduce the desired result.
\end{proof}

\subsection{Proof for \cref{thm:UCintro}}

The following is a standard refinement of an argument by \cite{dudley1991uniform}  relating uniform  convergence and  covering number. Similar results can be found in \cite{Anthony_Bartlett_1999,BARTLETT1998,Pollard1984ConvergenceOS}
\begin{lemma}[Uniform Convergence Rate vs Covering Number]
\label{lem:uniform-convergence-bound}
Let $\mathcal F\subseteq [-R,R]^{\mathcal X}$ be a bounded function class, and let
$\gamma>0$. Then for any distribution $q$ over $\mathcal X$ and any
$\delta,\varepsilon>0$, if
\[
n>\frac{128R^2}{\varepsilon^2}\left( H\left(\mathcal F,\frac{\gamma}{2}+\frac{\varepsilon}{8},n\right)+\log \frac{1}{\delta}\right),
\]
then
\[
\Pr_{S \sim q^n}\left[
\sup_{f \in \mathcal F}
\left|
\frac{1}{n}\sum_{x \in S} f(x) - \EEs{x \sim q}{f(x)}
\right|
> \gamma + \varepsilon
\right]
\leq
\delta .
\]
In particular, if $H(\cF,\tfrac\gamma2+\varepsilon,n)=o(n)$ for all $\varepsilon>0$, then $\cF$ satisfies $\gamma$-uniform convergence.
\end{lemma}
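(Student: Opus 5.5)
We follow the classical double-sample (symmetrization) argument of Vapnik--Chervonenkis and Dudley, with the covering step done at scale $\frac{\gamma}{2}+\frac{\varepsilon}{8}$. The point is that the cover error enters twice, once on each half of the doubled sample. Hence a cover at radius $\frac\gamma2+\frac\varepsilon8$ costs only an additive $\gamma+\frac{\varepsilon}{4}$ in the deviation, and the remaining $\frac{3\varepsilon}{4}$ must come from concentration.

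\textbf{Step 1: symmetrization.} Draw a ghost sample $S'\sim q^n$ independent of $S$. For $n\ge 128R^2/\varepsilon^2$ (implied by the hypothesis), Chebyshev gives, for any fixed $f$, $\Pr[|\hat E_{S'}f-\E_q f|>\varepsilon/4]\le 1/2$. Choosing a (near-)witness $f_S$ measurably, the standard argument then yields
\[\Pr\Bigl[\sup_f|\hat E_S f-\E_q f|>\gamma+\varepsilon\Bigr]\le 2\Pr\Bigl[\sup_f|\hat E_S f-\hat E_{S'} f|>\gamma+\tfrac{3\varepsilon}{4}\Bigr].\]

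\textbf{Step 2: random swaps and the cover.} Condition on the $2n$ points $z_1,\dots,z_{2n}$. By exchangeability, the right-hand probability equals the probability over i.i.d.\ Rademacher swaps $\sigma_i$ of the event $\sup_f\bigl|\frac1n\sum_{i\le n}\sigma_i(f(z_i)-f(z_{n+i}))\bigr|>\gamma+\frac{3\varepsilon}{4}$. Take an $\ell_\infty$ cover $g_1,\dots,g_N$ of $\cF$ on these $2n$ points at radius $\frac\gamma2+\frac\varepsilon8$, with $N=N(\cF,\frac\gamma2+\frac\varepsilon8,2n)$. Whether $n$ or $2n$ points appear here is harmless: one can simply state the bound with $H(\cdot,2n)$, or rescale $n$, since the growth function is monotone. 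If $f$ is covered by $g_j$, the difference term for $f$ differs from that of $g_j$ by at most $2(\frac\gamma2+\frac\varepsilon8)=\gamma+\frac\varepsilon4$. So the event implies $\max_j\bigl|\frac1n\sum\sigma_i(g_j(z_i)-g_j(z_{n+i}))\bigr|>\frac{\varepsilon}{2}$.

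\textbf{Step 3: concentration and union bound.} This step is the main obstacle, because the cover functions $g_j$ need not lie in $[-R,R]$. We handle it by first clipping each $g_j$ to $[-R,R]$, which only decreases its $\ell_\infty$ distance to functions in $\cF$. Each summand is then bounded by $2R$ in absolute value. Hoeffding gives probability at most $2\exp(-n\varepsilon^2/(32R^2))$ per $j$. A union bound over the $N$ cover elements shows the total probability is at most
\[4N\exp\!\Bigl(-\frac{n\varepsilon^2}{32R^2}\Bigr)\le\delta\]
under the stated condition on $n$; the constant $128$ leaves enough slack for the factor $4$ and the $\log\frac1\delta$ term.

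\textbf{The ``in particular'' clause.} If $H(\cF,\frac\gamma2+\varepsilon',n)=o(n)$ for every $\varepsilon'$, apply this with $\varepsilon'=\varepsilon/8$. The inequality $n>\frac{128R^2}{\varepsilon^2}(H+\log\frac1\delta)$ then holds for all sufficiently large $n$, which is exactly $\gamma$-uniform convergence.
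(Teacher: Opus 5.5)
Your argument takes a different route from the paper's. The paper bounds the mean first. It symmetrizes $\E Z_S$ to $2\sup_{x\in\cX^n}\E_\sigma\sup_{f\in\cF}\bigl|\frac1n\sum_i\sigma_i f(x_i)\bigr|$, pays $2(\frac\gamma2+\frac\varepsilon8)$ for an $n$-point cover, and handles the finite cover with Massart's lemma. It then gets the tail from McDiarmid's inequality. You instead run the classical high-probability Vapnik--Chervonenkis argument: ghost sample via Chebyshev, random swaps, then Hoeffding plus a union bound over the cover. This is more elementary. Your accounting of the cover cost $\gamma+\frac\varepsilon4$ and of the Hoeffding exponent is correct. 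Your clipping remark also covers a point that the paper's Massart step uses tacitly.

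The one step that does not go through as justified is the passage from $2n$ to $n$ points. Your cover lives on $z_1,\dots,z_{2n}$, so the union bound involves $N(\cF,\alpha,2n)$ with $\alpha=\frac\gamma2+\frac\varepsilon8$. The lemma's hypothesis, however, only controls $H(\cF,\alpha,n)$. Monotonicity of $N(\cF,\alpha,\cdot)$ gives $H(\cF,\alpha,n)\le H(\cF,\alpha,2n)$, which is the wrong direction. Restating the lemma with $H(\cdot,2n)$ would prove a different statement.

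The missing observation is subadditivity of $\ell_\infty$ entropy. Concatenating an $\alpha$-cover of $\cF$ on $z_1,\dots,z_n$ with an $\alpha$-cover on $z_{n+1},\dots,z_{2n}$ yields an $\alpha$-cover on all $2n$ points, because the $\ell_\infty$ distance is a coordinatewise maximum. Hence $N(\cF,\alpha,2n)\le N(\cF,\alpha,n)^2$. Alternatively, split $\frac1n\sum_i\sigma_i(f(z_i)-f(z_{n+i}))$ by the triangle inequality into two $n$-point Rademacher sums, at least one of which must exceed $\frac\gamma2+\frac{3\varepsilon}{8}$, and cover each half separately; this is what the paper does in expectation.

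With this fix, your union bound needs $\frac{n\varepsilon^2}{32R^2}\ge 2H(\cF,\alpha,n)+\log\frac4\delta$. This does follow from $n>\frac{128R^2}{\varepsilon^2}\bigl(H(\cF,\alpha,n)+\log\frac1\delta\bigr)$ whenever $2H(\cF,\alpha,n)+3\log\frac1\delta\ge\log 4$, e.g.\ for $\delta\le\frac12$. You should carry out this check explicitly instead of asserting there is enough slack. The leftover range of $\delta$ near $1$ also affects your claim that $n\ge 128R^2/\varepsilon^2$ is implied by the hypothesis. It is the same kind of degenerate case that the paper's own proof tacitly excludes.
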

\begin{proof}
    Let $\varepsilon>0$, let $q$ be some distribution over $\cX$ and for every dataset $S=\{x_i\}_{i=1}^n$ define the random variable $Z_S$ by \[
    Z_S=\sup_{f\in \cF}|\EEs{x\sim q}{f(x)}-\frac{1}{n}\sum_{i=1}^n f(x_i)|.
    \]
    We need to show that $\Pr[Z_S>\gamma+\varepsilon]<\delta$. We will do that by showing \begin{align*}
        &\EEs{S\sim q^n}{Z_S}\leq \gamma+\frac{\varepsilon}{2},
        \\&Pr[Z_S>\E Z_S+\frac{\varepsilon}{2}]\leq \delta.
    \end{align*}
    The second claim follows from McDiarmid’s inequality \cite{mcdiarmid1989method}, which tells us that if $Z_S$ is such that $|Z_S-Z_{S'}|\leq c_j$  for any $S=\{x_i\}_{i=1}^n$, $S'=\{x_i'\}_{i=1}^n$ satisfying $x_i=x_i'$ for all $i\neq j$, then we have
    \[
    \Pr[Z_S> \E Z_S+\eta]\leq \exp\left({\frac{-2\eta^2}{\sum_{i=1}^n c_i^2}}\right).
    \]
    Note that in our case this holds with $c_j=\frac{2R}{n}$ for all $j$, hence  we have 
    \[
    Pr[Z_S>\E Z_S+\frac{\varepsilon}{2}]\leq e^{\frac{-\varepsilon^2n}{8R^2}} \leq \delta.
    \]
    So we just need to bound the expectation. For that we use symmetrization
    \begin{align}
        &\EEs{S\sim q^n}{\sup_{f\in \cF}\left|\EEs{x\sim q}{f(x)}-\frac{1}{n}\sum f(x_i)\right|}\\&= \EEs{S\sim q^n}{\sup_{f\in \cF}\left|\EEs{S'\sim q^n}{\frac{1}{n}\sum_{i=1}^nf(x_i')}-\frac{1}{n}\sum f(x_i)\right|} 
        \\&\leq\EEs{S,S'\sim q^n}{\sup_{f\in \cF}\left|{\frac{1}{n}\sum_{i=1}^nf(x_i')}-\frac{1}{n}\sum f(x_i)\right|}.
    \end{align}
    We generate the samples $S=\{x_i\}_{i=1}^n$ and $S'=\{x_i'\}_{i=1}^n$ via the following symmetrization procedure. First draw a sample
\[
T=\{t_i\}_{i=1}^n \cup \{t_i'\}_{i=1}^n
\]
of size $2n$. Then, for each $1\le i\le n$, independently set $(x_i,x_i')=(t_i,t_i')$ with probability $\frac{1}{2}$, and $(x_i,x_i')=(t_i',t_i)$ with probability $\frac{1}{2}$.
This construction yields
\[
\EEs{S,S'\sim q^n}{\sup_{f\in \cF}\left|{\frac{1}{n}\sum_{i=1}^nf(x_i')}-\frac{1}{n}\sum f(x_i)\right|}=\EEs{T,\sigma}{\sup_{f\in\cF} \left|\frac{1}{n}\sum_{i=1}^n \sigma_i\big(f(t_i)-f(t'_i)\big)\right| }
\]
where $\sigma=(\sigma_1,\dots,\sigma_n)\in\{\pm1\}^n$ are independent Rademacher random variables, with $\sigma_i=1$ if $(x_i,x_i')=(t_i,t_i')$ and $\sigma_i=-1$ otherwise. By the triangle inequality and sub-additivity of the supremum we have \[
\EEs{T,\sigma}{\sup_{f\in\cF} \left|\frac{1}{n}\sum_{i=1}^n \sigma_i\big(f(t_i)-f(t'_i)\big)\right| }\leq 2\sup_{x\in \cX^{n}}\EEs{\sigma}{\sup_{f\in\cF}\left|\frac{1}{n}\sum_{i=1}^n \sigma_if(x_i)\right|}.
\] 
The quantity above is known as the Rademacher complexity of $\cF$. To bound fix some $x\in \cX^{n}$ and let $H$ be a cover of $\cF|_x$ of size $N=N(\cF,\frac{\gamma}{2}+\frac{\varepsilon}{8},n)$. Then we may use Massart Lemma (see lemma 26.8 in \cite{understandingml}) for bounding the Rademacher complexity of finite classes to get \[
2\EEs{\sigma}{\sup_{f\in\cF}\left|\frac{1}{n}\sum_{i=1}^n \sigma_if(x_i)\right|}\leq \gamma+\frac{\varepsilon}{4}+2\EEs{\sigma}{\sup_{h\in H}\left|\frac{1}{n}\sum_{i=1}^n \sigma_ih(x_i)\right|}\leq \gamma+\frac{\varepsilon}{4}+\sqrt{\frac{8R^2\log2|H|}{n}}.
\]
This yields the bound 
\[
\EEs{S\sim q^n}{Z_S}\leq  \gamma+\frac{\varepsilon}{4}+\sqrt{\frac{8R^2\cdot H(\cF,\frac{\gamma}{2}+\frac{\varepsilon}{8},n)+8R^2}{n}}\leq \gamma+\frac{\varepsilon}{2}.
\]
Giving the desired result.
\end{proof}

The following result follows from analyzing the proof of Theorem 21 in \cite{BARTLETT1998}, replacing their specific covering bound with a generic one. The proof is exactly the same and is repeated here for completeness.  
\begin{lemma}[Learning Via Covering]\label{lem:lrn-cover}
    Let $\gamma>0$, and let $\cF$ be a bounded function class. Assume that for all $\varepsilon>0$ we have a bound 
    \[
    H(\cF,\gamma+\varepsilon,n)=o(n)
    \]
    on the empirical entropy of $\cF$. Then $\cF$ is $\gamma$-learnable. 
\end{lemma}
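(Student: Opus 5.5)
We use the transductive cover-based learner of Theorem~21 in \cite{BARTLETT1998}, plugging in the assumed entropy bound $H(\cF,\gamma+\varepsilon,n)=o(n)$ in place of their specific covering estimate.

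\textbf{The learner.} Given a labeled sample $S=((x_1,y_1),\dots,(x_{n},y_{n}))$ and a test point $x$, let $\cF|_{(x_1,\dots,x_n,x)}$ be the restriction of $\cF$ to these $n+1$ points. Choose $\hat f\in\cF$ that (approximately, up to $\varepsilon$) minimizes the empirical loss $\frac1n\sum_i|f(x_i)-y_i|$, and output $\cA_S(x)=\hat f(x)$. Since we only compete with $\inf_{f}\E|f(x)-y|+\gamma$, a cleaner variant is also acceptable: output $\hat g(x)$, where $\hat g$ is the empirical risk minimizer over a fixed $(\gamma+\varepsilon)$-cover $G$ of $\cF|_{(x_1,\dots,x_n,x)}$. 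Any $g\in G$ covering $f$ then satisfies $|g(x)-y|\le |f(x)-y|+\gamma+\varepsilon$ pointwise on all $n+1$ points, and in particular at the test point.

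\textbf{Analysis.} The loss of $\cA$ is $\E_{(x,y)}|\cA_S(x)-y|$. By exchangeability, this equals the expected loss on a held-out point among $n+1$ i.i.d.\ points; this is the leave-one-out/transductive symmetrization of Bartlett--Long. Condition on the multiset of $n+1$ unlabeled points (with labels) and on a random choice of which point is the test point. On this fixed set, the cover $G$ has at most $N(\cF,\gamma+\varepsilon,n+1)$ elements, and the losses $|g(x)-y|$ lie in $[0,2R]$.

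\textbf{Key steps.}
\begin{enumerate}
\item Hoeffding together with a union bound over $G$ gives that, with probability $1-\delta$ over the split, every $g\in G$ has empirical loss on the $n$ training points within $\eta=O\bigl(R\sqrt{(H(\cF,\gamma+\varepsilon,n+1)+\log(1/\delta))/n}\bigr)$ of its average over all $n+1$ points. The held-out point itself contributes only $O(R/n)$.
\item The empirical minimizer $\hat g$ therefore has test loss at most $\min_{g\in G}\mathrm{loss}_{n+1}(g)+2\eta+O(R/n)$.
\item This is at most $\mathrm{loss}_{n+1}(f^\star)+\gamma+\varepsilon+2\eta+O(R/n)$, where $f^\star$ is a near-optimal element of $\cF$.
\item Take expectations, and apply Hoeffding to $f^\star$ (a single fixed function) to relate $\mathrm{loss}_{n+1}(f^\star)$ to its true risk.
\item Since $H=o(n)$, we have $\eta\to0$, so the expected excess risk is at most $\gamma+2\varepsilon$ for large $n$.
\end{enumerate}

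\textbf{Converting to high probability.} Because the output at $x$ depends on $x$ through the cover, we bound the expected risk first. We then either apply a Markov-type argument on the nonnegative excess (shrinking $\varepsilon$), or run the learner on several independent subsamples and select via a validation set using Hoeffding. Either route yields $(\varepsilon,\delta)$ guarantees.

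\textbf{Main obstacle.} The main difficulty is the symmetrization step: the hypothesis depends on the test point, so the standard argument must be carried out on the joint $(n+1)$-point sample, and the cover must be chosen as a function of the unordered set only so that exchangeability is preserved. A careful argument averages over which of the $n+1$ points is held out.
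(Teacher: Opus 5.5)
There is a genuine gap in Key steps 1--2. Once you condition on the labeled multiset of $n+1$ points, the only remaining randomness is which index is held out, so Hoeffding has nothing to concentrate. For every $g$, the training average and the $(n+1)$-point average differ deterministically by $O(R/n)$. That controls $\hat g$'s average loss over all $n+1$ points, but says nothing about its loss on the one point it was not fitted to. Averaged over the held-out index, what you need is the leave-one-out error of ERM over $G$, and this is not bounded by $\min_{g\in G}$ of the empirical loss.

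In fact the cover-based learner can fail outright. Let $\cX=[0,1]$ with uniform marginal, $y\equiv 0$, $\gamma=0.1$, and $\cF=\{f_\emptyset\}\cup\{f_a:a\in\cX\}$, where $f_\emptyset\equiv\gamma$ and $f_a\equiv\gamma$ except $f_a(a)=1$. Every restriction to $n$ points has at most $n+1$ elements, so $H(\cF,\gamma+\varepsilon,n)=O(\log n)$, and $\inf_{f\in\cF}\E|f(x)-y|=\gamma$. For $X=\{x_1,\dots,x_n,x\}$ take $G(X)=\{g_\emptyset\}\cup\{g_z:z\in X\}$ with $g_\emptyset\equiv\gamma/2$, $g_z(z)=0.9$, and $g_z\equiv 0$ on $X\setminus\{z\}$. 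This is a $\gamma$-cover and depends only on the unordered set $X$. For $\varepsilon<0.35$ it has the minimum possible size $n+2$, since the $n+2$ restrictions are pairwise $0.9$ apart in $\ell_\infty$. The unique empirical minimizer on the $n$ training points is $g_x$. So the learner predicts $0.9$ at every test point and has risk $0.9$, far above $2\gamma+\varepsilon$. Because the test point is the only point of $X$ left out of the ERM objective, a cover element tailored to it always wins.

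The missing idea, which is how the paper argues (following Bartlett--Long), is to make the test point exchangeable with a large block of held-out points rather than with the training points. Draw $2n$ examples, build the cover $H$ on $X=S_1\cup S_2\cup\{x\}$, and minimize the empirical loss over $H$ on $S_1$ only.
\begin{itemize}
\item Random swaps between $S_1$ and $S_2$ leave $X$, and hence $H$, unchanged. Hoeffding and a union bound then give $\sup_{h\in H}|L_{S_1}(h)-L_{S_2}(h)|\le\varepsilon$ except with probability $2N(\cF,\gamma+\varepsilon,2n+1)e^{-\varepsilon^2 n/2}$; this is where $H=o(n)$ enters.
\item Swapping $x$ with any point of $S_2$ changes neither $X$ nor $S_1$, hence not the selected $h$. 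So the expected loss at the test point equals $\E[L_{S_2}(h)]$.
\item With $h'$ the cover element of a near-optimal $f^\star$, $L_{S_2}(h)\le L_{S_1}(h)+\varepsilon\le L_{S_1}(h')+\varepsilon\le\cdots\le\inf_f L_\mu(f)+\gamma+O(\varepsilon)$.
\end{itemize}
In the example above, $h=g_z$ for a $z$ uniform over $S_2\cup\{x\}$, so the test point is hit only with probability $1/(n+1)$.

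Two smaller corrections.
\begin{itemize}
\item Drop your first variant: plain approximate ERM over $\cF$ only achieves scale $2\gamma$. For $\cF=[-\gamma,\gamma]^\cX$ one has $N(\cF,\gamma,n)=1$. Yet with labels $t(x)\in\{\pm1\}$ and a non-atomic marginal, an ERM may equal $\gamma t$ on the sample and $-\gamma t$ elsewhere, with risk $\inf_f L_\mu(f)+2\gamma$.
\item An improper predictor can beat $\inf_f L_\mu(f)$, so the excess risk is not nonnegative. Of your two routes to high probability, only the validation/boosting one works; there, Markov applied to the nonnegative risk itself gives a constant success probability per run.
\end{itemize}
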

\begin{proof}
By normalization and translation assume $\cF\subset [0,1]^\cX$.
    The learning algorithm is as follows: Given a sample $S=\{(x_i,y_i)\}_{i=1}^{2n}$, and a test point $x$ set $X=\{x_i\}_{i=1}^{2n}\cup \{x\}$, and let $H$ be a $(\gamma+\varepsilon)$-cover of $\cF|_X$ of size at most $N(\cF,\gamma+\varepsilon,2n+1)$. Split $S$, as $S_1=\{x_i\}_{i=1}^n$, $S_2=\{x_i\}_{i=n+1}^{2n}$, and let $h\in H$ be such that 
    \[
   L_{S_1}(h):= \frac{1}{n}\sum_{i=1}^n |h(x_i)-y_i|, 
    \]
    is minimized over all functions in $H$. Our learner will return the value of $h$ on the test point \[
    \cA_S(x)=h(x).
    \] 
    To see this gives a valid learner let $\mu$ be a distribution on $\cX\times[-R,R]$, and let $\varepsilon,\delta>0$. Let $f^\star\in \cF$ be a function for which
    \[
    L_\mu(f^\star):=\EEs{(x,y)\sim q}{|f^\star(x)-y|}\leq\inf_{f\in\cF} \EEs{(x,y)\sim q}{|f(x)-y|}+\varepsilon.
    \]
    Now by Hoeffding inequality we have 
    \[
    \Pr_{S\sim \mu^{2n}}[L_{S}(f^\star)\leq L_\mu(f^\star)+\varepsilon]>1-2e^{-4\varepsilon^2n}.
    \]
    On this event, since $H$ is a cover of $\cF|_X$ there is some $h'\in H$ such that \[
    L_{S}(h')\leq L_{S}(f^\star)+\gamma+\varepsilon\leq L_\mu(f^\star)+\gamma+2\varepsilon\leq\inf_{f\in\cF}L_{\mu}(f)+\gamma+3\varepsilon.
    \]
    By symmetrization 
\begin{align*}
    &\Pr_{S_1,S_2\sim \mu^n}[\sup_{h\in H}|L_{S_1}(h)-L_{S_2}(h)|>\varepsilon]\leq 
   \sup_{S\in (\cX\times[0,1] )^{2n}}\Pr_{\sigma}\left[\sup_{h\in H}
    \left|\frac{1}{n}\sum_{i=1}^n \sigma_i\Big(|h(x_i)-y_i|-|h(x_{n+i})-y_{n+i}|\Big) \right|>\varepsilon\right] 
    \\&\leq 2|H|e^{\frac{-\varepsilon^2n}{2}}\leq 2N(\cF,\gamma+\varepsilon,2n+1)e^{\frac{-\varepsilon^2n}{2}}.
\end{align*}
    
    Now since $L_S$ is the average of $L_{S_1}$ and $L_{S_2}$, and $h$ is the minimizer of $L_{S_1}$ we deduce that on the above event
    \[
    L_{S_2}(h)\leq L_{S_1}(h)+\varepsilon\leq L_{S_1}(h')+\varepsilon\leq L_{S}(h')+2\varepsilon\leq\inf_{f\in\cF}L_{\mu}(f)+\gamma+5\varepsilon.
    \]
    Averaging over permutation of the last $n$ points and the test point, which does not affect $h$, yields that the above gives the same bound for $L_\mu(h)$. Hence we get that
    \[
    \Pr \big(L_\mu(h)\geq\inf_{f\in\cF}L_{\mu}(f)+\gamma+5\varepsilon\big)\leq 4N(\cF,\gamma+\varepsilon,2n+1)e^{\frac{-\varepsilon^2n}{2}}.
    \]
    From which the desired result follows.
\end{proof}

\UCintro*

\begin{proof}\label{proof:UCintro}
$(1)\implies(3)$ Follow from Theorem 22 in \cite{BARTLETT1998} and $(2)\implies (3)$ follow from Theorem 26 in the same work. The implication $(3)\implies (1)$ follow from \cref{lem:covering} and \cref{lem:lrn-cover}. 
Thus we only need to prove $(3)\implies (2)$.  Let $\gamma>0$ be such that $\pd_{\gamma'}(\cF)$ is finite for all $\gamma'>\gamma$, and we will show that $\cF$ satisfies $\gamma$-uniform convergence.

Let $\varepsilon,\delta>0$,
 and denote $d=\pd_{\gamma+\frac{\varepsilon}{16}}$, which is finite by assumption. 
By Lemma \ref{lem:covering} we have \[
H\left(\mathcal F,\frac{\gamma}{2}+\frac{\varepsilon}{8},n\right)\leq 14d\log \frac{32R}{\varepsilon}\log ^2n.
\]
Thus, by \cref{lem:uniform-convergence-bound} if $n$ is large enough such that \[
\frac{n}{\log^2 n}> \frac{1792}{\varepsilon}\left(\log \frac{32R}{\varepsilon}
+
\log \frac{1}{\delta}\right)
\]
then we have 
\[
\Pr_{S \sim q^n}\left[
\sup_{f \in \cF}
\left|
\frac{1}{n}\sum_{x \in S} f(x) - \E_{x \sim q}[f(x)]
\right|
> \gamma + \varepsilon
\right]
\leq\delta.
\]
From which the desired result follow. We note that the induced sample size bound will be of order \[
m(\varepsilon,\delta)=O\left(\frac{d\log \frac{1}{\varepsilon}+\log \frac{1}{\delta}}{\varepsilon^2}\log^2\left(\frac{d\log\frac{1}{\varepsilon}+\log \frac{1}{\delta}}{\varepsilon^2}\right)\right).
\]

\end{proof}
\subsection{Proof for \cref{thm:cover-Dich-intro}}
The following Lemma concerns approximate sample compression schemes, which were defined in \cite{hanneke2019sample} as a natural generalization of sample compression schemes for discrete classes. 
We say that $\cF$ has a $\gamma$-sample compression scheme of size $k$, if
there exists a reconstructor 
\[
\rho : (\cX \times \R)^k \to \R^{\cX}
\]
such that for any finite set $X$ and function $f\in \cF$ there exists a sequence of length $k$
\[
T = \{(x_{i}, f(x_{i}))\}_{i=1}^k, \quad x_i\in X \text{ for all $1\leq i\leq k$},
\]
 satisfying
\[
|\rho(T)(x) - f(x)| \le \gamma \quad \text{for all } x\in X.
\]
\begin{lemma}[Sample compression]\label{lem:Sample-compression}
Let $\mathcal F\subseteq [-R,R]^{\mathcal X}$ be a bounded function class, and assume that
$\pd_\gamma(\mathcal F)<\infty$. Then for every $\varepsilon>0$, $\mathcal F$ has a
$(2\gamma+\varepsilon)$-compression scheme of constant size.
\end{lemma}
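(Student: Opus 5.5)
We follow the Moran--Yehudayoff approach sketched in the proof overview, adapted to real values. Fix $\varepsilon>0$ and let $\eta=\varepsilon/8$.

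\textbf{Step 1: uniform convergence for $\cF$ and its dual.}
Since $\pd_\gamma(\cF)<\infty$, we have $\pd_{\gamma'}(\cF)<\infty$ for every $\gamma'>\gamma$. By \Cref{thm:UCintro}, $\cF$ satisfies $\gamma$-uniform convergence. Let $m=m_\cF(\eta,1/3)$ be the corresponding sample size.

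By \Cref{prop:DualUC}, the dual class $\cF^\star$ also satisfies $\gamma$-uniform convergence. Let $m^\star=m_{\cF^\star}(\eta,1/3)$. Both $m$ and $m^\star$ are independent of the finite set $X$ under consideration. This independence is the whole point.

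\textbf{Step 2: a weak learner.}
Fix a finite $X$, an $f\in\cF$, and any distribution $p$ over $X$. Draw $T\sim p^m$ labelled by $f$, and let $h_T\in\cF$ be any function consistent with $T$ (e.g.\ $f$ itself; we fix a deterministic selection rule). Uniform convergence applied to the class $\{|g-f|:g\in\cF\}$ would be the natural tool, but that class is not obviously controlled. Instead we argue directly:
\[
\E_p[h_T]\approx_{\gamma+\eta}\tfrac1m\textstyle\sum_T h_T=\tfrac1m\sum_T f\approx_{\gamma+\eta}\E_p[f].
\]
This gives only mean closeness, whereas we need $\E_{x\sim p}|h_T(x)-f(x)|$ small.

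To get pointwise control, work instead with the signed function $h_T-f$. For each point $x$ consider the two-sided game with payoff $h_T(x)-f(x)$ and its negative. Applying the argument above to distributions $p$, with $h_T$ possibly chosen to minimize or maximize appropriately, should yield the following: for every distribution $p$ over $X\times\{\pm1\}$ there is $T$ with
\[
\E_{(x,s)\sim p}\, s\,(h_T(x)-f(x))\le 2\gamma+2\eta .
\]
I will need to check this carefully. The plan is to choose $h_T$ among consistent functions by an ERM that minimizes the relevant weighted objective. Uniform convergence then bounds the gap on the weighted measure, split into positive and negative parts.

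\textbf{Step 3: minimax.}
The game has a finite number of pure strategies $T$, since these are subsets of $X^m$. Von Neumann's minimax theorem therefore gives a distribution $Q$ over datasets $T$ such that for every $x\in X$,
\[
\Big|\E_{T\sim Q}[h_T(x)]-f(x)\Big|\le \gamma+\eta \quad\text{(or the $2\gamma$ version)}.
\]

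\textbf{Step 4: sparsify using the dual class.}
Sample $T_1,\dots,T_{m^\star}\sim Q$. The quantities $h_{T_i}(x)=\delta_x(h_{T_i})$ are evaluations of dual functions $\delta_x\in\cF^\star$ at the points $h_{T_i}$ of the dual domain. By $\gamma$-uniform convergence of $\cF^\star$, with positive probability
\[
\sup_{x\in X}\Big|\tfrac{1}{m^\star}\textstyle\sum_i h_{T_i}(x)-\E_Q h_T(x)\Big|\le\gamma+\eta .
\]
Combining this with Step 3 gives the pointwise bound $|\frac1{m^\star}\sum_i h_{T_i}(x)-f(x)|\le 2\gamma+\varepsilon$.

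The compression set is the concatenation of $T_1,\dots,T_{m^\star}$, which has size $k=m\,m^\star$, a constant. The reconstructor $\rho$ splits its input into blocks, applies the fixed selection rule to each block, and averages the results.

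\textbf{Main obstacle.}
The obstacle is Step 2: obtaining pointwise, sign-sensitive closeness of $h_T$ to $f$ from uniform convergence of $\cF$ alone. Uniform convergence is the only consequence of $\pd_\gamma$ we have at scale $\gamma$, and it controls means rather than absolute deviations. I would try first to pick $h_T$ via a minimax over consistent hypotheses, playing against the weighting $s$.

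If this fails to keep the total loss at $2\gamma$ overall, the fallback is to run the minimax on the signed payoff with a Step~3 bound of $\gamma$. All of the slack would then be placed in the sparsification step, which yields exactly $2\gamma+\varepsilon$.
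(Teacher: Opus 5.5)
\textbf{There is a genuine gap, exactly where you flagged it.} Step~2 is never established, and the route you sketch for it cannot deliver the constant $2\gamma$.

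Your ``direct'' argument controls $h_T$ and $f$ separately through uniform convergence of $\cF$, paying $\gamma+\eta$ for each. So even the signed bound you hope for is only $2\gamma+2\eta$ in Step~3. Adding the $\gamma+\eta$ from sparsification in Step~4 then gives $3\gamma+3\eta$, not $2\gamma+\varepsilon$. Your Step~4 conclusion therefore does not follow from the Step~3 bound you can actually prove.

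Choosing $h_T$ by an ERM adapted to the adversary's weighting $s$ also breaks the construction, for two reasons:
\begin{itemize}
\item the minimax game needs each pure strategy $T$ to have a fixed payoff vector;
\item the reconstructor $\rho$ must compute $h_T$ from $T$ alone, without access to the weighting.
\end{itemize}
Your fallback, a Step~3 bound of $\gamma$, is exactly what is needed, but you give no argument for it.

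\textbf{The missing idea is the one you dismissed.} The loss class $L_f=\{x\mapsto|h(x)-f(x)| : h\in\cF\}$ is controlled. It is not controlled through uniform convergence of $\cF$ as a black box, but through covering numbers. Your remark that uniform convergence is the only consequence of $\pd_\gamma(\cF)<\infty$ available at this scale is not right: \Cref{lem:covering} gives a covering bound, and that bound is stable under Lipschitz post-composition.

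The argument runs as follows.
\begin{enumerate}
\item Since $u\mapsto|u-f(x)|$ is $1$-Lipschitz, any $\alpha$-cover $\{g\}$ of $\cF$ on $n$ points yields the $\alpha$-cover $\{|g-f|\}$ of $L_f$. Hence $N(L_f,\alpha,n)\le N(\cF,\alpha,n)$.
\item \Cref{lem:covering} then gives $H(L_f,\tfrac{\gamma}{2}+\eta,n)=O(\log^2 n)$, with constants depending only on $\gamma,\eta,R,\pd_\gamma(\cF)$.
\item \Cref{lem:uniform-convergence-bound} then gives $\gamma$-uniform convergence of $L_f$, with a sample size $m$ independent of $X$ and $f$.
\item Since $h_T$ has zero empirical loss on $T$, for every distribution $q$ on $X$ a random $T\sim q^m$ satisfies $\E_{x\sim q}|h_T(x)-f(x)|\le\gamma+\eta$ with positive probability.
\item Play the game with payoff $|h_T(x)-f(x)|$, where Player~1 picks $T$ and Player~2 picks $x\in X$. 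Minimax gives a distribution $Q$ with $|\E_{T\sim Q}h_T(x)-f(x)|\le\E_{T\sim Q}|h_T(x)-f(x)|\le\gamma+\eta$ for every $x\in X$.
\end{enumerate}
No sign-dependent choice of $h_T$ is needed. With this Step~3 bound, your Steps~1, 3 and~4 (dual-class sparsification and averaging) go through as written. They give $2\gamma+2\eta\le 2\gamma+\varepsilon$, and this is precisely the paper's proof.
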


\begin{proof}
The proof is a real-valued analogue of \cite{moran2016sample}. Fix
$\varepsilon>0$. We will choose $m,m^\star$ below.

For every realizable sample
\[
T=\{(x_i,f(x_i))\}_{i=1}^m,
\]
where $x_i\in\mathcal X$ and $f\in\mathcal F$, choose some
$h_T\in\mathcal F$ such that
\[
h_T(x_i)=f(x_i)\qquad\text{for all }i\in[m].
\]

We will show that for every finite set $X=\{x_1,\ldots,x_n\}\subseteq\mathcal X$
and every $f\in\mathcal F$, there exist subsets
$J_1,\ldots,J_{m^\star}\subseteq X$ of size $m$ such that, writing
\[
T_i=\{(x,f(x)):x\in J_i\},
\]
we have, for every $x\in X$,
\[
\left|
\frac{1}{m^\star}\sum_{i=1}^{m^\star}h_{T_i}(x)-f(x)
\right|
\le 2\gamma+\varepsilon.
\]
The compression scheme stores the samples $T_1,\ldots,T_{m^\star}$ and
reconstructs by
\[
\rho(T_1,\ldots,T_{m^\star})
=
\frac{1}{m^\star}\sum_{i=1}^{m^\star}h_{T_i}.
\]

Fix $X$ and $f$, and define
\[
\mathcal H
=
\left\{
h_T:\ T=\{(x_i,f(x_i))\}_{i=1}^m,\ x_i\in X
\right\}.
\]

We first claim that, for every distribution $q$ over $X$, there exists
$h\in\mathcal H$ such that
\[
\EEs{x\sim q}{|h(x)-f(x)|}
\le
\gamma+\frac{\varepsilon}{2}.
\]
For any function $h :X\to \R$, define
\[
\phi_h(x)=|h(x)-f(x)|,
\]
and let
\[
L_{\mathcal H,f}=\{\phi_h:h\in\mathcal H\}.
\]
For any functions $g,h$ and any $x\in X$,
\[
|\phi_g(x)-\phi_h(x)|
=
\big||g(x)-f(x)|-|h(x)-f(x)|\big|
\le
|g(x)-h(x)|.
\]
Hence, for every finite sample size $n$ and every scale $\alpha>0$,
\[
N(L_{\mathcal H,f},\alpha,n)
\le
N(\mathcal H,\alpha,n)
\le
N(\mathcal F,\alpha,n).
\]
Since $\pd_\gamma(\mathcal F)<\infty$, the covering bound \cref{lem:covering} and the uniform convergence bound
\cref{lem:uniform-convergence-bound} imply that the classes
$L_{\mathcal H,f}$ satisfy $\gamma$-uniform convergence with a sample size
depending only on $\gamma,\varepsilon,R$ and $\pd_\gamma(\mathcal F)$, but not
on $X$, $f$, or $q$. Choose $m$ large enough so that, with probability at least
$2/3$ over $x_1,\ldots,x_m\sim q$, we have simultaneously for all
$h\in\mathcal H$,
\[
\left|
\EEs{x\sim q}{\phi_h(x)}
-
\frac1m\sum_{i=1}^m \phi_h(x_i)
\right|
\le
\gamma+\frac{\varepsilon}{2}.
\]
For the sampled set
\[
T=\{(x_i,f(x_i))\}_{i=1}^m,
\]
the corresponding function $h_T\in\mathcal H$ satisfies
$h_T(x_i)=f(x_i)$ for all $i$. Therefore
\[
\frac1m\sum_{i=1}^m \phi_{h_T}(x_i)=0,
\]
and so
\[
\EEs{x\sim q}{|h_T(x)-f(x)|}
=
\EEs{x\sim q}{\phi_{h_T}(x)}
\le
\gamma+\frac{\varepsilon}{2}.
\]
This proves the claim.

Now consider the zero-sum game in which Player~1 chooses $h\in\mathcal H$,
Player~2 chooses $x\in X$, and the payoff is
\[
|h(x)-f(x)|.
\]
The claim says that for every mixed strategy $q$ of Player~2, there is
$h\in\mathcal H$ whose expected payoff is at most
$\gamma+\frac{\varepsilon}{2}$. By minimax, there exists a distribution $p$
over $\mathcal H$ such that, for every distribution $q$ over $X$,
\[
\EEs{h\sim p,\ x\sim q}{|h(x)-f(x)|}
\le
\gamma+\frac{\varepsilon}{2}.
\]
In particular, for every $x\in X$,
\[
\left|
\EEs{h\sim p}{h(x)}-f(x)
\right|
\le
\EEs{h\sim p}{|h(x)-f(x)|}
\le
\gamma+\frac{\varepsilon}{2}.
\]

Finally, by \cref{prop:DualUC}, the dual class $\cH^\star$ satisfies $\gamma$-uniform
convergence. Choose $m^\star$ large enough so that, with probability at least
$2/3$ over $h_1,\ldots,h_{m^\star}\sim p$,
\[
\sup_{x\in X}
\left|
\EEs{h\sim p}{h(x)}
-
\frac1{m^\star}\sum_{i=1}^{m^\star}h_i(x)
\right|
\le
\gamma+\frac{\varepsilon}{2}.
\]
Therefore there exist $h_1,\ldots,h_{m^\star}\in\mathcal H$ satisfying this
bound. Writing each $h_i$ as $h_{T_i}$, we obtain, for every $x\in X$,
\[
\left|
\frac1{m^\star}\sum_{i=1}^{m^\star}h_{T_i}(x)-f(x)
\right|
\le
\gamma+\frac{\varepsilon}{2}
+
\gamma+\frac{\varepsilon}{2}
=
2\gamma+\varepsilon.
\]

Since $m$ and $m^\star$ depend only on $\gamma,\varepsilon,R$ and the relevant
fat-shattering bounds, and not on $X$, $f$, or $n$, the resulting compression
scheme has constant size $mm^\star$. We remark that our uniform convergence bound imply the compression size is of order \[
mm^\star=O\left( \frac{\pd_\gamma(\cF)\pd_\gamma(\cF^\star)}{\varepsilon^4}\log^6 \frac{\pd_\gamma(\cF)\pd_\gamma(\cF^\star)}{\varepsilon}\right).
\]
\end{proof}

\coverDichIntro*
\begin{proof}\label{proof:coverDichIntro}
The first part  is well established \cite{Alon1997-scale-sensitive,BARTLETT1998}. If $S$ is a $\gamma'$-shattered set then the $\gamma$-covering number of $\cF|_S$ is at least $2^n$ for any $\gamma<\frac{\gamma'}{2}$. 
For the second part , let $\gamma>\frac{\gamma^\star}{2}$ and pick some $\gamma'$ such that $\gamma>\gamma'>\frac{\gamma^\star}{2}$. By definition $\pd_{2\gamma'}(\cF)$ is finite hence by Lemma $\ref{lem:covering}$ we
have \[
N(\cF,\gamma,n)\leq n^{14\pd_{2\gamma'}(\cF)\log \frac{1}{\gamma-\gamma'}\log n}.
\] 
Taking $\log$ gives the desired bound.
 To show this bound can be tight in the $\gamma\in(\frac{\gamma^\star}{2},\gamma^\star)$ regime  we use Theorem 11 of \cite{alon2022theory}, which gives, for any $n>1$,  the existence of a partial concept class $\cF_n\subset \{-1,1,\star\}^n$, with VC dimension $1$, such that any disambiguation of $\cF_n$ is of size at least $n^{(\log n)^{1-o(1)}}$.

    For each $f\in \cF_n$ define $\hat{f}:\cX\to [-\gamma^\star,\gamma^\star]$ by 
    \[
    \hat{f}(x)=\begin{cases}
        \gamma^\star \quad &f(x)=1,
        \\-\gamma^\star \quad & f(x)=-1,
        \\0 \quad & f(x)=\star,
    \end{cases}
    \]
    And set $\Hat{\cF}_n=\{\hat{f}\;:\; f\in \cF_n\}$. 
     Then $\pd_\gamma(\Hat{\cF}_n)$ is finite (And equal to $1$)  if $\gamma>\gamma^\star$, other properties of this class will imply this is an if and only if but this is not necessary for our bound. Now let $H$ be some $\gamma$-cover of $\Hat{\cF}_n$, for $\gamma<\gamma^\star$. Define $\{\Phi_h\;:\; h\in H\}$ a disambiguation of $\cF_n$ by \[
    \Phi_h(x)=\begin{cases}
        1\quad &h(x)\geq0,
        \\-1 \quad &h(x)< 0.
    \end{cases}
    \] 
    For any $f\in \cF_n$ there is $h\in H$ such that $\norm{\hat{f}-h}_\infty\leq \gamma$, hence we have $h(x)\geq \gamma^\star-\gamma>0$, whenever $f(x)=1$ and $h(x)\leq\gamma-\gamma^\star<0$ whenever $f(x)=-1$. Hence $\phi_h$ disambiguates $f$ and $\{\phi_h\;:\; h\in H\}$ is a disambiguation of $\cF_n$, implying that its size is at least $n^{(\log n)^{1-o(1)}}$. This gives the result for a fixed $n$, to get $\cF$ for which the bound will hold for all $n$ simply take a disjoint union of all the $\cF_n$. each defined over its own domain and extend to be $0$ on all other domains.

For the $\log n$ bound let $\gamma>2\gamma^\star$ and define $\varepsilon=\frac{\gamma-2\gamma^\star}{8}$ so $\gamma=2\gamma^\star+8\varepsilon$. First discretize the space at scale $\varepsilon$, replacing each function $f\in \cF$ with $f_\varepsilon$ defined by \[
f_\varepsilon(x)=\floor{\frac{f(x)}{\varepsilon}}\varepsilon,
\]
and Set $\cF_\varepsilon=\{f_\varepsilon\;:\; f\in \cF\}$.  Note that $|f(x)-f_\varepsilon(x)|\leq \varepsilon$ for all $x\in \cX$,
hence we have
\[
\pd_{\frac{\gamma}{2}-\varepsilon}(\cF_\varepsilon)\leq \pd_{\frac{\gamma}{2}-3\varepsilon}(\cF)=\pd_{\gamma^\star+\varepsilon}(\cF)<\infty.
\]
Thus, by Theorem \ref{thm:UCintro} we have that $\cF_\varepsilon$ satisfies $(\frac{\gamma}{2}-\varepsilon)$-uniform convergence. By Proposition \ref{prop:DualUC} we also have that its dual class $\cF_\varepsilon^\star$ satisfies  $(\frac{\gamma}{2}-\varepsilon)$-uniform convergence. Hence by Lemma \ref{lem:Sample-compression} $\cF_\varepsilon$ admits a $(\gamma-\varepsilon)$-sample compression of constant size $k$, i.e. there is a reconstructor $\rho$, such that for any dataset $S=\{(x_i,f(x_i))\}_{i=1}^n$ there is $T=\{\big(x_{i_j},f(x_{i_j})\big)\}_{i=1}^k$ such that the reconstructed function $\rho(T)$ is at most $(\gamma-\varepsilon)$-away from $f$ on any point in $S$. We use this compression scheme  to construct a covering as follow : given a dataset $X$ of size $n$ define the covering $H$ by 
\[
H=\{\rho(T)\;:\; T=\{(x_i,f_\varepsilon(x_i)\}_{i=1}^k, f_\varepsilon\in \cF_\varepsilon,\; x_i\in X\}.
\]
So $H$ is all possible outputs of reconstructor on realizable subset of $X$ of size $k$. Note that each function in $H$ is defined by $k$ points from $X$ and the values of $f_\varepsilon$ on them. By discretization $f_\varepsilon$ has no more than $\frac{2R}{\varepsilon}$ possible outputs at every point, yielding a bound of
\[
|H|\leq (\frac{2Rn}{\varepsilon})^k.
\]
So it just remains to show that $H$ is an $\gamma$-cover for $\cF$.
By definition of compression scheme $H$ forms a $(\gamma-\varepsilon)$-cover of $\cF_\varepsilon$, and since $|f(x)-f_\varepsilon(x)|\leq \varepsilon$ for all $x$ we indeed have that it is also a  $\gamma$ cover of  $\cF$. Taking $\log$, and noting that $k,\varepsilon$ are independent on $n$ gives the desired result.

To see this $\log n$ bound can be tight let $\cF\subset \{-1,1\}^\NN$ be the class of all singleton functions, so $\cF=\{f_n\;:\; n\in \NN\}$,
where $f_n(k)$ is $1$ if $n=k$, and $-1$ otherwise. It is easy to see that $\pd_\gamma(\cF)=1$, and $N(\cF,\gamma,n)=n$ for all $\gamma\in(0,1)$.
\end{proof}

\begin{prop}\label{prop:crit-val-cover}
    Let $\gamma^\star>0$. There exists bounded function classes $\cF_1,\cF_2$ such that \[
    \inf\{\gamma\;:\; \pd_\gamma(\cF_1)<\infty\}=\gamma^\star=\inf\{\gamma\;:\; \pd_\gamma(\cF_2)<\infty\}.
    \] 
    But for all $n>0$,  we have 
    \begin{align*}
        &H(\cF_1,\frac{\gamma^\star}{2},n)=0,
        \\&H(\cF_2,\frac{\gamma^\star}{2},n)=\Omega(\frac{n}{\log n}).
    \end{align*}
\end{prop}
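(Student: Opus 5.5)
We will construct both classes explicitly. $\cF_1$ takes only two values, so that a single center function covers it at scale $\gamma^\star/2$. $\cF_2$ is a disjoint union of blocks, and block $k$ is fully shattered at a scale $\gamma_k\downarrow\gamma^\star$ with $\gamma_k>\gamma^\star$. Shattering strictly above $\gamma^\star$ on arbitrarily large sets forces exponential covering numbers at scale $\gamma^\star/2$, while only finitely many blocks matter at any fixed scale $\gamma>\gamma^\star$.

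\emph{The class $\cF_1$.} Let $\cX$ be infinite and let $\cF_1=\{-\tfrac{\gamma^\star}{2},\tfrac{\gamma^\star}{2}\}^{\cX}$. For $\gamma\le\gamma^\star$, every finite set is $\gamma$-shattered with thresholds $r_i=0$, since $\pm\tfrac{\gamma^\star}{2}$ reach $\pm\tfrac{\gamma}{2}$. For $\gamma>\gamma^\star$, two values at distance $\gamma^\star$ cannot straddle a threshold with margin $\gamma/2$ on both sides, so $\pd_\gamma(\cF_1)=0$. Hence the infimum in the statement equals $\gamma^\star$. The zero function is within $\gamma^\star/2$ of every $f\in\cF_1$ at every point, so $N(\cF_1,\tfrac{\gamma^\star}{2},n)=1$ and $H=0$.

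\emph{The class $\cF_2$.} Take $\cX=\bigsqcup_{k\ge1}B_k$, where $B_k$ is a block of $k$ points, and set $\gamma_k=\gamma^\star(1+\tfrac1k)$. The class $\cF_2$ consists of all functions that vanish outside a single block $B_k$ and take values in $\{-\tfrac{\gamma_k}{2},\tfrac{\gamma_k}{2}\}$ on $B_k$.
\begin{enumerate}
\item \emph{Infinite below $\gamma^\star$.} For $\gamma\le\gamma^\star$, block $B_k$ itself is $\gamma$-shattered with zero thresholds. Since $k$ is arbitrary, $\pd_\gamma(\cF_2)=\infty$.
\item \emph{Finite above $\gamma^\star$.} Fix $\gamma>\gamma^\star$ and a $\gamma$-shattered set $S$ with thresholds $r_x$.
\begin{itemize}
\item Each point $x\in S$ needs two values of $\cF_2$ at $x$ differing by at least $\gamma$. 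At a point of $B_k$ the available values are $0,\pm\gamma_k/2$, with maximal gap $\gamma_k$. So $S$ may only contain points of blocks with $\gamma_k\ge\gamma$, and there are finitely many such blocks, say $k\le K(\gamma)$.
\item Any $f\in\cF_2$ is nonzero on only one block. Consider two points of $S$ lying in different blocks, and the patterns placing both on the side of their thresholds that $0$ does not reach. Each such pattern must be realized by a single function, which is impossible. Hence $S$ can meet at most one block, up to one extra point. This gives $\pd_\gamma(\cF_2)\le K(\gamma)+1<\infty$.
\end{itemize}
The infimum is therefore exactly $\gamma^\star$.
\item \emph{Large covering numbers.} For $n\ge1$, take the sample to be $B_n$. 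The $2^n$ functions in $\cF_2$ supported on $B_n$ are pairwise at $\ell_\infty$-distance $\gamma_n>\gamma^\star$. A ball of radius $\gamma^\star/2$ has diameter $\gamma^\star$ and so contains at most one of them. Thus $N(\cF_2,\tfrac{\gamma^\star}{2},n)\ge 2^n$, and $H(\cF_2,\tfrac{\gamma^\star}{2},n)=\Omega(n)$, which implies the claimed $\Omega(n/\log n)$.
\end{enumerate}

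\emph{Main obstacle.} The delicate step is the finiteness of $\pd_\gamma(\cF_2)$ for $\gamma>\gamma^\star$, specifically ruling out shattered sets that straddle several blocks; the single-block-support design is meant to handle exactly this. Everything else is a direct check of the definitions. If the two-value-per-block gap argument turns out to leak, the fallback is to set $\cF_2$ to be $0$ off its block and restrict attention to one block at a time.
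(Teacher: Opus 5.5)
Your proof is correct. For $\cF_1$ it is essentially the paper's argument: the paper uses the full cube $[-\tfrac{\gamma^\star}{2},\tfrac{\gamma^\star}{2}]^{\NN}$ rather than your two-valued version, and covers it by the zero function in the same way.

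For $\cF_2$ you take a genuinely different route. The paper uses a product class: all $f:\NN\to[0,1+\gamma^\star]$ with $f(j)\le\gamma^\star+\frac{1}{\log j}$. Finiteness of $\pd_\gamma$ above $\gamma^\star$ is then immediate from the product structure, since only finitely many coordinates have range of width at least $\gamma$. The paper lower-bounds the cover by comparing the volume of the cube $[0,\gamma^\star+\frac{1}{\log n}]^n$ with that of $\ell_\infty$-balls of side $\gamma^\star$. This gives only $(1+\frac{1}{\gamma^\star\log n})^n$, i.e.\ $\Omega(n/\log n)$.

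Your block construction with $\gamma_k=\gamma^\star(1+\frac1k)$ uses packing instead of volume. The $2^n$ functions supported on $B_n$ are pairwise $\gamma_n>\gamma^\star$ apart, so no two share a ball of diameter $\gamma^\star$, and $H(\cF_2,\frac{\gamma^\star}{2},n)\ge n\log 2$. This is stronger than the stated bound: at the critical scale the entropy can be genuinely linear, exactly as in the subcritical regime. The same corner-packing argument applied to the paper's cube also yields $2^n$, so the $n/\log n$ there is an artifact of the volume estimate.

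The step you flag as the main obstacle is unnecessary. Your first bullet already confines any $\gamma$-shattered set to the blocks $B_k$ with $k\le\gamma^\star/(\gamma-\gamma^\star)$, which together contain finitely many points. So $\pd_\gamma(\cF_2)<\infty$ follows at once, and no fallback is needed. Your cross-block argument is still right, and it can be sharpened. For every admissible threshold, $|r_x|\le\frac{\gamma_k-\gamma}{2}<\frac{\gamma}{2}$, because $\gamma_k\le 2\gamma^\star<2\gamma$. Hence $0$ lies strictly between $r_x\pm\frac{\gamma}{2}$, and every realizing function must be nonzero on all of $S$. So a shattered set lies in a single block, and the ``extra point'' is not needed.
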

\begin{proof}
    Let $\cF_1=[-\frac{\gamma^\star}{2},\frac{\gamma^\star}{2}]^\NN$. Clearly we have $\pd_\gamma(\cF_1)=0$ for $\gamma>\gamma^\star$ and  $\pd_\gamma(\cF_1)=\infty$ for $\gamma\leq \gamma^\star$ so $\gamma^\star$ is indeed the critical threshold for finite fat shattering. It is also clear that $N(\cF_1,\frac{\gamma^\star}{2},n)=1$ for all $n$ since the zero function is distance $\frac{\gamma^\star}{2}$ from any function in $\cF_1$,

    Now let $\cF_2\subset [0,1+\gamma^\star]^{\NN}$ be the class of all function $f$ from $\NN$ to $[0,1+\gamma^\star]$ satisfying \[
    f(n)\leq \gamma^\star+\frac{1}{\log n}.
    \]
    It is simple to see that $\pd_{\gamma}(\cF_2)>n$ if and only if $\gamma\leq \gamma^\star+\frac{1}{\log n}$, thus $\gamma^\star$ is indeed the critical threshold for finite fat shattering. Note that the restriction of $\cF_2$ to $1,2\dots n$, contains $[0,\gamma^\star+\frac{1}{\log n}]^n$ as a subset. It has a volume of $(\gamma^\star+\frac{1}{\log n})^n$, hence if we want to cover it with balls of radius $\frac{\gamma^\star}{2}$, each of volume $(\gamma^\star)^n$ the number of functions we must use is at least \[
   \left(\frac{\gamma^\star+\frac{1}{\log n}}{\gamma^\star}\right)^n=\left(1+\frac{1}{\gamma^\star \log n}\right)^n\sim e^{\frac{n}{\gamma^\star\log n}}.
    \]
    This shows that $H(\cF_2,\frac{\gamma^\star}{2},n)=\Omega(\frac{n}{\log n})$. Note that by replacing $\frac{1}{\log n}$ with a slower decaying sequence we may get arbitrarily close to a $\log n$ lower bound.
\end{proof}

\subsection{Composition and Closure}\label{sec:compandclosure}
\begin{lemma}\label{lem:fixedthreshold}
Let $\cF$ be a bounded function class. Let \(\gamma^\star=\inf\{\gamma\;:\; \pd_\gamma(\cF)<\infty\}
\). 
% Let $\gamma^\star>0$, and let $\cF$ be a bounded function class with $\pd_{\gamma^\star}(\cF)=\infty$. 
Then for any $\gamma^-<\gamma^\star$, and any $n\in \NN$, there exists $r\in \R$ such that there are at least $n$ points that can be $\gamma^-$-shattered at the same threshold $r$ by $\cF$.
\end{lemma}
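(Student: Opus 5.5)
The plan is a pigeonhole argument on the thresholds. Fix $\gamma^-<\gamma^\star$ and $n$, and choose an intermediate scale $\gamma'$ with $\gamma^-<\gamma'<\gamma^\star$. By definition of $\gamma^\star$ we have $\pd_{\gamma'}(\cF)=\infty$. So for every $N$ there are points $x_1,\dots,x_N$ and thresholds $r_1,\dots,r_N$ that are $\gamma'$-shattered by $\cF$. Since $\cF\subset[-R,R]^\cX$, we may assume every $r_i$ lies in $[-R,R]$. Indeed, if $x_i$ is shattered around $r_i$, some $f$ has $f(x_i)\ge r_i+\gamma'/2$ and some $g$ has $g(x_i)\le r_i-\gamma'/2$, which forces $r_i\in[-R+\gamma'/2,\,R-\gamma'/2]$.

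Next, set $\eta=(\gamma'-\gamma^-)/2>0$ and partition $[-R,R]$ into $K=\lceil 2R/\eta\rceil$ intervals of length at most $\eta$. Take $N=Kn$. By pigeonhole, at least $n$ of the thresholds $r_{i_1},\dots,r_{i_n}$ fall into a common interval $I$. Let $r$ be the midpoint of $I$, so $|r_{i_j}-r|\le \eta/2$ for every $j$.

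It remains to check that $x_{i_1},\dots,x_{i_n}$ are $\gamma^-$-shattered with the common threshold $r$. Fix $b\in\{\pm1\}^n$ and extend it arbitrarily to a pattern in $\{\pm1\}^N$. Let $f\in\cF$ realize this pattern at scale $\gamma'$ with thresholds $r_i$. If $b_j=+1$, then
\[
f(x_{i_j})\ge r_{i_j}+\tfrac{\gamma'}{2}\ge r-\tfrac{\eta}{2}+\tfrac{\gamma'}{2}\ge r+\tfrac{\gamma^-}{2},
\]
where the last step uses $\gamma'-\eta\ge\gamma^-$. This holds because $\gamma'-\eta=(\gamma'+\gamma^-)/2\ge\gamma^-$. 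The case $b_j=-1$ is symmetric. Hence the $n$ points are $\gamma^-$-shattered at the common threshold $r$.

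There is no serious obstacle. The only points requiring care are that boundedness of $\cF$ confines the thresholds to a compact interval, which makes the pigeonhole finite, and that we must lose a little scale by passing from $\gamma^-$ to $\gamma'$ in order to absorb the discrepancy between $r_{i_j}$ and $r$. This loss is exactly why the statement is made for $\gamma^-<\gamma^\star$ rather than at $\gamma^\star$ itself.
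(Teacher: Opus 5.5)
Your proposal is correct and follows essentially the same route as the paper: pass to an intermediate scale $\gamma'\in(\gamma^-,\gamma^\star)$, pigeonhole the thresholds into intervals of width $(\gamma'-\gamma^-)/2$, and use a single common threshold for the $n$ points that land together. The differences are cosmetic. The paper normalizes to $[0,1]$ and takes $r$ to be the left endpoint of the interval, putting all the slack on the $-1$ side. You work directly in $[-R,R]$ and take the midpoint, which splits the slack symmetrically.
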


\begin{proof}
By rescaling and translating $\cF$, we assume $\cF\subset [0,1]^\cX$.
    Fix $n \in \NN$ and let $M \in \NN$ be chosen later. Fix a scale $\gamma'$ such that $\gamma^- < \gamma' < \gamma^\star$. By definition, there exists a subset $\{x_1,...,x_M \} \subseteq \cX$ that is $\gamma'-$shattered by $\cF$. with thresholds $r_1,...,r_M$. Observe that since $\cF \subseteq [0,1]^\cX$, any threshold $r_i \in [\gamma'/2, 1-\gamma'r/2]$. Now, partition the interval $[\gamma'/2,1-\gamma'/2]$ into subintervals of length $\alpha \leq \frac{\gamma' - \gamma^-}{2}$. Note that there can be at most $S = \lceil \frac{1-\gamma'}{\alpha} \rceil$ subintervals and that we can identify any subinterval with an index set $I \subseteq [M]$. It follows by the pigeonhole principle that if we take $M \geq Sn$, there must exist an index set $I^\star \subseteq [M]$ with $|I^\star| \geq n$ for which there exists $r \in \mathbb{R}$ such that $$r_i \in [r, r+\alpha] \hspace{1em} \forall i \in I^\star.$$ 
    
    Now, we claim that $\{x_i : i \in I^\star \}$ is $\gamma^-$-shattered by $\cF$ with common threshold $r$. To see this, fix any labeling $y \in \{+1,-1\}^{M}$ (taking the subset of labelings corresponding to $I^\star$ gives an arbitrary labeling on $I^\star$). By definition, there exists $f \in \cF$ such that for all $i \in I^\star$: 

    If $y_i = +1$, then $$f(x_i) \geq r_i + \frac{\gamma'}{2} \geq r + \frac{\gamma'}{2}\geq r + \frac{\gamma^-}{2}$$
    If $y_i = -1$, then $$f(x_i) \leq r_i - \frac{\gamma'}{2} \leq  r + \alpha - \frac{\gamma'}{2} \leq r - \frac{\gamma^-}{2},$$
    
    where the last inequality uses the fact that $\alpha \leq \frac{\gamma' - \gamma^-}{2}$. The result follows.
\end{proof}

\DualUC*
\begin{proof}\label{proof:DualUC}
     By Theorem \ref{thm:UCintro} it is enough to show that for any $\gamma'>\gamma$ we have that $\pd_{\gamma'}(\cF^\star)<\infty$. Indeed, let $\gamma'>\gamma$ and choose some $\gamma''$ such that $\gamma<\gamma''<\gamma'$. Assume toward contradiction that $\pd_{\gamma'}(\cF^\star)=\infty$, and 
     denote $n=2^{\pd_{\gamma''}(\cF)+2}$, which is finite by Theorem \ref{thm:UCintro}. By Lemma \ref{lem:fixedthreshold} there is some $r\in \R$ such that there are $n$ points which  are $\gamma''$-shattered by $\cF^\star$ at this same threshold $r$. Now that the points are shattered  at the same threshold, standard Assouad type bounds, such as Theorem 3.7 in \cite{KLEER2023-Dual}, will imply that $\cF$  $\gamma''$-shatters at least $\log (n)-1=\pd_{\gamma''}(\cF)+1$ points.  Giving the desired contradiction. 
\end{proof}

\Aggregation*
\begin{proof}\label{proof:aggregation}
    Let $\varepsilon>0$.
    By Theorem \ref{thm:UCintro} we know that $\cF_i$ has finite $(\gamma+\varepsilon)$-fat-shattering dimension for each $1\leq i\leq k$. Denote $d_i=\pd_{\gamma+\varepsilon}(\cF_i)$, then by Lemma \ref{lem:covering} we have 
    \[
    N(\cF_i,\frac{\gamma}{2}+\varepsilon)\leq n^{14 d_i\log \frac{1}{\varepsilon}\log n}
    \]
    For each $1\leq i\leq k$ let $H_i$ be such $(\frac{\gamma}{2}+\varepsilon)$-cover for $\cF_i$ and define \[
    G(H)=\{G(h_1,h_2 \dots h_k)\;:\;\forall i,\; h_i \in H_i \}.
    \]
    We claim that $G(H)$ is a $(\frac{\gamma}{2}+\varepsilon)$-cover for $G(\cF)$. Indeed, for any $f=(f_1,f_2,\dots f_k)\in \cF$ there is $h=(h_1,h_2, \dots h_k)$, such that $h_i\in H_i$, and  $\norm{f_i-h_i}_\infty\leq \frac{\gamma}{2}+\varepsilon$ for all $i$. Then since $G$ is $1$-Lipschitz with respect to the $\ell_\infty$ norm we have that 
    \[
    \norm{G(f)-G(h)}_\infty\leq \norm{f-h}_\infty=\sup_{1\leq i\leq k}\sup_{x\in \cX} |f_i(x)-h_i(x)|\leq \frac{\gamma}{2}+\varepsilon.
    \]
    Thus we get that
    \[
    N(G(\cF),\frac{\gamma}{2}+\varepsilon,n)\leq\prod _{i=1}^k N(\cF_i,\frac{\gamma}{2}+\varepsilon,n)\leq \exp\left(14\log^2 n\log \frac{1}{\varepsilon}\sum_{i=1}^k d_i\right).
    \]
   The above with 
     Lemma \ref{lem:uniform-convergence-bound} implies the desired result.
\end{proof}

\section{Proofs of Evaluability Results} \label{sec:evaluabilitythm}
\subsection{Lower bound construction of \cite{bousquet2019optimal}}
\begin{lemma}[Lemma 22 of \cite{bousquet2019optimal}]\label{lem:bkm22}
Let $\mathcal{D}_1$ and $\mathcal{D}_2$ be two families of probability distributions, $\mathcal{D}_i^{\oplus m}$ denotes the distribution obtained by sampling $p \sim \mathcal{D}_i$ (assuming some given fixed distribution over $\mathcal{D}_i$) and then drawing $m$ independent samples from $p$. Consider an algorithm (which can be randomized) that determines, given $m$ i.i.d. examples from some $p \in \mathcal{D}_1 \cup \mathcal{D}_2$, whether $p \in \mathcal{D}_1$ or $p \in \mathcal{D}_2$. Then such an algorithm will have a probability of making a mistake lower bounded by
\[
\frac{1}{2}\left(1 - \operatorname{TV}(\mathcal{D}_1^{\oplus m}, \mathcal{D}_2^{\oplus m})\right).
\]
\end{lemma}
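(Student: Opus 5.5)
The plan is Le Cam's two-point argument, applied to the two mixture distributions $Q_1:=\mathcal D_1^{\oplus m}$ and $Q_2:=\mathcal D_2^{\oplus m}$ over $\cX^m$.

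\emph{Step 1: reduce to a Bayesian error.} Put a prior on the instance: pick $i\in\{1,2\}$ uniformly, then $p\sim\mathcal D_i$ (using the fixed distribution over $\mathcal D_i$), then $S\sim p^m$. Conditioned on $i$, the sample $S$ has law exactly $Q_i$. The probability of error under this prior is an average, over $p$, of the algorithm's error on $p$. It is therefore a lower bound on $\sup_{p\in\mathcal D_1\cup\mathcal D_2}\Pr[\text{mistake on }p]$. It is also precisely the averaged error that the statement refers to when the families carry the given distributions. So it suffices to lower bound
\[
\tfrac12\Pr_{S\sim Q_1}[\text{output }2]+\tfrac12\Pr_{S\sim Q_2}[\text{output }1].
\]

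\emph{Step 2: handle randomization.} Let $\phi(S)\in[0,1]$ be the probability, over the algorithm's internal coins, that it outputs ``$\mathcal D_1$'' on input $S$. The coins are independent of $S$, so the error above equals
\[
\tfrac12\bigl(1-\E_{Q_1}[\phi]\bigr)+\tfrac12\E_{Q_2}[\phi]=\tfrac12\bigl(1-(\E_{Q_1}[\phi]-\E_{Q_2}[\phi])\bigr).
\]

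\emph{Step 3: bound by total variation.} For any measurable $\phi$ with values in $[0,1]$,
\[
\E_{Q_1}[\phi]-\E_{Q_2}[\phi]\le \sup_{A}\bigl(Q_1(A)-Q_2(A)\bigr)=\operatorname{TV}(Q_1,Q_2).
\]
One way to see this is to write $\phi=\int_0^1\mathbbm 1[\phi>t]\,dt$ and apply the event bound to each level set. Substituting this into Step 2 gives the claimed bound $\tfrac12(1-\operatorname{TV}(\mathcal D_1^{\oplus m},\mathcal D_2^{\oplus m}))$.

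The only point needing care is Step 1. The key observation is that the composite hypotheses collapse to the simple hypotheses $Q_1$ versus $Q_2$, because conditioning on $i$ yields exactly the mixture law $Q_i$. Once that is in place, Steps 2 and 3 are routine.
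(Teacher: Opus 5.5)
Your proposal is correct and follows essentially the same route as the paper. Both lower bound the worst-case error by the Bayes error under the uniform prior on $i\in\{1,2\}$ followed by $p\sim\mathcal D_i$, which reduces the problem to testing $\mathcal D_1^{\oplus m}$ against $\mathcal D_2^{\oplus m}$ and is then bounded via $\operatorname{TV}$. The only cosmetic difference is in handling randomization: you use the $[0,1]$-valued test $\phi$ and a layer-cake argument, whereas the paper treats deterministic algorithms through an acceptance set $A$ and then averages over a random choice of $A$.
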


\begin{proof}
We first assume that the algorithm is deterministic. Any deterministic algorithm deciding whether $p$ comes from $\mathcal{D}_1$ or $\mathcal{D}_2$ is associated with a set $A \subseteq \mathcal{X}^m$ (the set such that if the sample falls in it, it decides $i = 1$, and $i = 2$ otherwise). The worst-case probability of the algorithm to err is given by
\[
\max\left(\max_{p \in \mathcal{D}_2} p^m(A), \, \max_{p \in \mathcal{D}_1} p^m(\bar{A})\right)
\]
which can be lower bounded by the expectation under first choosing between $i = 1$ and $i = 2$ with probability $1/2$ and then picking $p \sim \mathcal{D}_i$:
\begin{align*}
\frac{1}{2}\left(\mathbb{E}_{p \sim \mathcal{D}_1} p^m(A) + \mathbb{E}_{p \sim \mathcal{D}_2} p^m(\bar{A})\right)
&= \frac{1}{2}\left(1 + \mathcal{D}_1^{\oplus m}(A) - \mathcal{D}_2^{\oplus m}(A)\right) \\
&\geq \frac{1}{2}\left(1 - \operatorname{TV}(\mathcal{D}_1^{\oplus m}, \mathcal{D}_2^{\oplus m})\right).
\end{align*}
If the algorithm is randomized then it may pick $A$ randomly, so there is an additional expectation with respect to the distribution over sets $A$ which also leads to the same lower bound.
\end{proof}

\begin{lemma}[Lemma 23 of \cite{bousquet2019optimal}]\label{lem:bkm23}
Given two probability distributions $P,Q$ on a domain $\mathcal{X}$ and an event $E \subseteq \mathcal{X}$, denoting by $P_{\mid E}$ and $Q_{\mid E}$ the corresponding conditional distributions (i.e. $P_{\mid E}(A) := P(A\mid E)$), we have
\[
\operatorname{TV}(P,Q) \leq \operatorname{TV}(P_{\mid E},Q_{\mid E}) + 2P(\overline{E}) + 2Q(\overline{E}).
\]
\end{lemma}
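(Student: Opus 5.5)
The inequality follows from a decomposition of an arbitrary event, using only the variational characterization $\operatorname{TV}(P,Q)=\sup_{A}|P(A)-Q(A)|$. Fix a measurable $A\subseteq\cX$ and write $A=(A\cap E)\cup(A\cap\bar E)$. Then
\[
|P(A)-Q(A)|\le |P(A\cap E)-Q(A\cap E)| + P(A\cap\bar E)+Q(A\cap\bar E),
\]
and the last two terms are at most $P(\bar E)+Q(\bar E)$. It therefore remains to bound the first term using the conditional distributions.

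For this, write $P(A\cap E)=P(E)\,P_{\mid E}(A)$ and $Q(A\cap E)=Q(E)\,Q_{\mid E}(A)$. Adding and subtracting $P(E)\,Q_{\mid E}(A)$ gives
\[
|P(A\cap E)-Q(A\cap E)|\le P(E)\,|P_{\mid E}(A)-Q_{\mid E}(A)| + |P(E)-Q(E)|\,Q_{\mid E}(A).
\]
The first summand is at most $\operatorname{TV}(P_{\mid E},Q_{\mid E})$, since $P(E)\le1$. The second is at most $|P(E)-Q(E)|=|P(\bar E)-Q(\bar E)|\le P(\bar E)+Q(\bar E)$, since $Q_{\mid E}(A)\le 1$.

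Combining the two steps yields
\[
|P(A)-Q(A)|\le \operatorname{TV}(P_{\mid E},Q_{\mid E})+2P(\bar E)+2Q(\bar E),
\]
and taking the supremum over $A$ gives the claim.

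The only degenerate case is when $P(E)=0$ or $Q(E)=0$, so that a conditional distribution is undefined. If, say, $P(E)=0$, then $P(\bar E)=1$, and the right-hand side is at least $2\ge\operatorname{TV}(P,Q)$, so the bound holds trivially; the case $Q(E)=0$ is symmetric. No step here is a genuine obstacle. The one point requiring care is the mismatch between $P(E)$ and $Q(E)$ in the mixed term, which the add-and-subtract step controls at the cost of the second factor $2$.
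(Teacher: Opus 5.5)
Your proposal is correct and follows the paper's proof essentially step for step. The shared steps are the split $A=(A\cap E)\cup(A\cap\bar E)$, the add-and-subtract of $P(E)\,Q_{\mid E}(A)$, and the bound $|P(E)-Q(E)|=|P(\bar E)-Q(\bar E)|\le P(\bar E)+Q(\bar E)$; your separate treatment of the degenerate case $P(E)=0$ or $Q(E)=0$, which the paper leaves implicit, is correct.
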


\begin{proof}
\begin{align*}
\operatorname{TV}(P,Q)
&= \sup_A |P(A)-Q(A)| \\
&\leq \sup_A |P(A \cap E)-Q(A \cap E)| + \sup_A |P(A \cap \overline{E})-Q(A \cap \overline{E})| \\
&\leq \sup_A \left|P(E)\bigl(P(A\mid E)-Q(A\mid E)\bigr) + Q(A\mid E)\bigl(P(E)-Q(E)\bigr)\right| \\
&\quad + P(\overline{E}) + Q(\overline{E}) \\
&\leq P(E) \sup_A |P(A\mid E)-Q(A\mid E)| + |P(E)-Q(E)| \\
&\quad + P(\overline{E}) + Q(\overline{E}) \\
&\leq \operatorname{TV}(P_{\mid E},Q_{\mid E}) + |P(E)-Q(E)| + P(\overline{E}) + Q(\overline{E}) \\
&= \operatorname{TV}(P_{\mid E},Q_{\mid E}) + |P(\overline{E})-Q(\overline{E})| + P(\overline{E}) + Q(\overline{E}) \\
&\leq \operatorname{TV}(P_{\mid E},Q_{\mid E}) + 2P(\overline{E}) + 2Q(\overline{E}).
\end{align*}
\end{proof}

\paragraph{Lower bound construction.} We now present the construction of \cite{bousquet2019optimal}.
Fix $N\in \NN,\beta \in (0,1)$ such that $\frac{1+\beta}{\beta} \in \NN$ and let $k = \frac{1+\beta}{\beta}$. Given a collection of $2kN$ non-overlapping distributions over $\cX$, which we denote by $\bmu =\{\mu_{i,j} |i\in [2N], j\in [k]\}$, we construct mixtures over $\bmu$.
We refer to distributions in $\bmu$ as elements to avoid ambiguity.
% Note that in \cite{bousquet2019optimal}, $\mu_{i,j}$ refers to a single point rather than a distribution.

We define hypothesis class $\cH = \{q_1, q_2\}$:
\begin{itemize}
    \item $q_1$ assigns probability $\frac{1-\beta}{2kN}$ to each $\mu_{i,j}$ and $\frac{1+\beta}{2kN}$ to each $\mu_{i+N,j}$ for all $i\in [N]$ and $j\in [k]$.
    \item $q_2$ assigns probability $\frac{1+\beta}{2kN}$ to each $\mu_{i,j}$ and $\frac{1-\beta}{2kN}$ to each $\mu_{i+N,j}$ for all $i\in [N]$ and $j\in [k]$.
\end{itemize}

For each $i\in [N]$, we pick one entry by an index function $l: [2N]\mapsto [k]$ and then define distributions that concentrate all mass on that picked entries:
\begin{itemize}
    \item $p_{1,l}$: Starting from $q_1$, for all $i \in [N]$, we increase the probability of $\mu_{i,l(i)}$ from $\frac{1-\beta}{2kN}$ to $\frac{1}{kN}$ and set the probability of $\mu_{i+N,l(i+N)}$ to $0$. All other elements retain their probabilities from $q_1$. 
    \item $p_{2,l}$: Starting from $q_2$, for all $i \in [N]$, we increase the probability of $\mu_{i+N,l(i+N)}$ from $\frac{1-\beta}{2kN}$ to $\frac{1}{kN}$ and set the probability of $\mu_{i,l(i)}$ to $0$. All other elements retain their probabilities from $q_2$.
\end{itemize}

\begin{figure}[h]
\centering
\begin{tikzpicture}[scale=1.0]
% Define parameters - adjusted to match the image exactly
\def\barwidth{0.2625}
\def\barsep{0.05}
\def\halfwidth{4.6875}  % Width of each half (15 bars * (0.2625 + 0.05) = 4.6875)
\def\leftstart{0}
\def\rightstart{5.5}
\def\qstart{3.5}
\def\qtwostart{0}
\def\pstart{-3.5}
\def\p2start{-7.0}

% First chart: q_1
% Left half - all bars at height 1-β
% Bars are indexed as μ_{1,1}, μ_{2,1}, ..., μ_{5,1}, μ_{1,2}, ..., μ_{5,2}, μ_{1,3}, ..., μ_{5,3}
\foreach \i in {0,1,2,3,4,5,6,7,8,9,10,11,12,13,14} {
    \pgfmathsetmacro{\x}{\leftstart + \i*(\barwidth+\barsep)}
    \pgfmathparse{int(mod(\i,5))}
    \let\modfive\pgfmathresult
    \pgfmathparse{int(floor(\i/5))}
    \let\j\pgfmathresult
    \pgfmathparse{int(\modfive+1)}
    \edef\firstidx{\pgfmathresult}
    \pgfmathparse{int(\j+1)}
    \edef\secondidx{\pgfmathresult}
    % Light gray bar (height 1-β)
    \filldraw[fill=gray!30, draw=black] (\x, \qstart) rectangle (\x+\barwidth, \qstart+0.6);
    \node[font=\tiny, rotate=90, anchor=east] at (\x+\barwidth/2, \qstart-0.05) {$\mu_{\firstidx,\secondidx}$};
}

% Right half - all bars at height 1+β
% Bars are indexed as μ'_{1,1}, μ'_{2,1}, ..., μ'_{5,1}, μ'_{1,2}, ..., μ'_{5,2}, μ'_{1,3}, ..., μ'_{5,3}
\foreach \i in {0,1,2,3,4,5,6,7,8,9,10,11,12,13,14} {
    \pgfmathsetmacro{\x}{\rightstart + \i*(\barwidth+\barsep)}
    \pgfmathparse{int(mod(\i,5))}
    \let\modfive\pgfmathresult
    \pgfmathparse{int(floor(\i/5))}
    \let\j\pgfmathresult
    \pgfmathparse{int(\modfive+1)}
    \edef\firstidx{\pgfmathresult}
    \pgfmathparse{int(\j+1)}
    \edef\secondidx{\pgfmathresult}
    % Light gray bar (height 1+β)
    \filldraw[fill=gray!30, draw=black] (\x, \qstart) rectangle (\x+\barwidth, \qstart+1.4);
    \node[font=\tiny, rotate=90, anchor=east] at (\x+\barwidth/2, \qstart-0.05) {$\mu'_{\firstidx,\secondidx}$};
}

% Y-axis for q_1 chart
\draw[->] (-0.5, \qstart) -- (-0.5, \qstart+2.5);
\node[left] at (-0.5, \qstart) {0};
\node[left] at (-0.5, \qstart+0.6) {$\frac{1-\beta}{2kN}$};
\node[left] at (-0.5, \qstart+1.4) {$\frac{1+\beta}{2kN}$};
\node[left] at (-0.5, \qstart+2) {$\frac{1}{kN}$};

% Second chart: q_2
% Left half - all bars at height 1+β
% Bars are indexed as μ_{1,1}, μ_{2,1}, ..., μ_{5,1}, μ_{1,2}, ..., μ_{5,2}, μ_{1,3}, ..., μ_{5,3}
\foreach \i in {0,1,2,3,4,5,6,7,8,9,10,11,12,13,14} {
    \pgfmathsetmacro{\x}{\leftstart + \i*(\barwidth+\barsep)}
    \pgfmathparse{int(mod(\i,5))}
    \let\modfive\pgfmathresult
    \pgfmathparse{int(floor(\i/5))}
    \let\j\pgfmathresult
    \pgfmathparse{int(\modfive+1)}
    \edef\firstidx{\pgfmathresult}
    \pgfmathparse{int(\j+1)}
    \edef\secondidx{\pgfmathresult}
    % Light gray bar (height 1+β)
    \filldraw[fill=gray!30, draw=black] (\x, \qtwostart) rectangle (\x+\barwidth, \qtwostart+1.4);
    \node[font=\tiny, rotate=90, anchor=east] at (\x+\barwidth/2, \qtwostart-0.05) {$\mu_{\firstidx,\secondidx}$};
}

% Right half - all bars at height 1-β
% Bars are indexed as μ'_{1,1}, μ'_{2,1}, ..., μ'_{5,1}, μ'_{1,2}, ..., μ'_{5,2}, μ'_{1,3}, ..., μ'_{5,3}
\foreach \i in {0,1,2,3,4,5,6,7,8,9,10,11,12,13,14} {
    \pgfmathsetmacro{\x}{\rightstart + \i*(\barwidth+\barsep)}
    \pgfmathparse{int(mod(\i,5))}
    \let\modfive\pgfmathresult
    \pgfmathparse{int(floor(\i/5))}
    \let\j\pgfmathresult
    \pgfmathparse{int(\modfive+1)}
    \edef\firstidx{\pgfmathresult}
    \pgfmathparse{int(\j+1)}
    \edef\secondidx{\pgfmathresult}
    % Light gray bar (height 1-β)
    \filldraw[fill=gray!30, draw=black] (\x, \qtwostart) rectangle (\x+\barwidth, \qtwostart+0.6);
    \node[font=\tiny, rotate=90, anchor=east] at (\x+\barwidth/2, \qtwostart-0.05) {$\mu'_{\firstidx,\secondidx}$};
}

% Y-axis for q_2 chart
\draw[->] (-0.5, \qtwostart) -- (-0.5, \qtwostart+2.5);
\node[left] at (-0.5, \qtwostart) {0};
\node[left] at (-0.5, \qtwostart+0.6) {$\frac{1-\beta}{2kN}$};
\node[left] at (-0.5, \qtwostart+1.4) {$\frac{1+\beta}{2kN}$};
\node[left] at (-0.5, \qtwostart+2) {$\frac{1}{kN}$};

% Third chart: p ∈ D1
% Left half - every 5th bar is tall (bars at positions 0, 5, 10 in 0-indexed, i.e., 1st, 6th, 11th)
% Bars are indexed as μ_{1,1}, μ_{2,1}, ..., μ_{5,1}, μ_{1,2}, ..., μ_{5,2}, μ_{1,3}, ..., μ_{5,3}
\foreach \i in {0,1,2,3,4,5,6,7,8,9,10,11,12,13,14} {
    \pgfmathsetmacro{\x}{\leftstart + \i*(\barwidth+\barsep)}
    \pgfmathparse{int(mod(\i,5))}
    \let\modfive\pgfmathresult
    \pgfmathparse{int(floor(\i/5))}
    \let\j\pgfmathresult
    \pgfmathparse{int(\modfive+1)}
    \edef\firstidx{\pgfmathresult}
    \pgfmathparse{int(\j+1)}
    \edef\secondidx{\pgfmathresult}
    \ifnum\modfive=0
        % Dark gray bar (height 2) - every 5th bar (bars 1, 6, 11 in 1-indexed)
        \filldraw[fill=gray!70, draw=black] (\x, \pstart) rectangle (\x+\barwidth, \pstart+2);
        \node[font=\tiny, rotate=90, anchor=east] at (\x+\barwidth/2, \pstart-0.05) {$\mu_{\firstidx,\secondidx}$};
    \else
        % Light gray bar (height 1-β)
        \filldraw[fill=gray!30, draw=black] (\x, \pstart) rectangle (\x+\barwidth, \pstart+0.6);
        \node[font=\tiny, rotate=90, anchor=east] at (\x+\barwidth/2, \pstart-0.05) {$\mu_{\firstidx,\secondidx}$};
    \fi
}

% Right half - every 5th bar is zero, others are at height 1+β
% Bars are indexed as μ'_{1,1}, μ'_{2,1}, ..., μ'_{5,1}, μ'_{1,2}, ..., μ'_{5,2}, μ'_{1,3}, ..., μ'_{5,3}
\foreach \i in {0,1,2,3,4,5,6,7,8,9,10,11,12,13,14} {
    \pgfmathsetmacro{\x}{\rightstart + \i*(\barwidth+\barsep)}
    \pgfmathparse{int(mod(\i,5))}
    \let\modfive\pgfmathresult
    \pgfmathparse{int(floor(\i/5))}
    \let\j\pgfmathresult
    \pgfmathparse{int(\modfive+1)}
    \edef\firstidx{\pgfmathresult}
    \pgfmathparse{int(\j+1)}
    \edef\secondidx{\pgfmathresult}
    \ifnum\modfive=0
        % Zero height bar (every 5th bar)
        \draw[black] (\x, \pstart) -- (\x+\barwidth, \pstart);
        \node[font=\tiny, rotate=90, anchor=east] at (\x+\barwidth/2, \pstart-0.05) {$\mu'_{\firstidx,\secondidx}$};
    \else
        % Light gray bar (height 1+β)
        \filldraw[fill=gray!30, draw=black] (\x, \pstart) rectangle (\x+\barwidth, \pstart+1.4);
        \node[font=\tiny, rotate=90, anchor=east] at (\x+\barwidth/2, \pstart-0.05) {$\mu'_{\firstidx,\secondidx}$};
    \fi
}

% Y-axis for p_{1,1} chart
\draw[->] (-0.5, \pstart) -- (-0.5, \pstart+2.5);
\node[left] at (-0.5, \pstart) {0};
\node[left] at (-0.5, \pstart+0.6) {$\frac{1-\beta}{2kN}$};
\node[left] at (-0.5, \pstart+1.4) {$\frac{1+\beta}{2kN}$};
\node[left] at (-0.5, \pstart+2) {$\frac{1}{kN}$};

% Fourth chart: p ∈ D2
% Left half - every 5th bar is zero, others are at height 1+β (same as right half of top plot)
% Bars are indexed as μ_{1,1}, μ_{2,1}, ..., μ_{5,1}, μ_{1,2}, ..., μ_{5,2}, μ_{1,3}, ..., μ_{5,3}
\foreach \i in {0,1,2,3,4,5,6,7,8,9,10,11,12,13,14} {
    \pgfmathsetmacro{\x}{\leftstart + \i*(\barwidth+\barsep)}
    \pgfmathparse{int(mod(\i,5))}
    \let\modfive\pgfmathresult
    \pgfmathparse{int(floor(\i/5))}
    \let\j\pgfmathresult
    \pgfmathparse{int(\modfive+1)}
    \edef\firstidx{\pgfmathresult}
    \pgfmathparse{int(\j+1)}
    \edef\secondidx{\pgfmathresult}
    \ifnum\modfive=0
        % Zero height bar (every 5th bar)
        \draw[black] (\x, \p2start) -- (\x+\barwidth, \p2start);
        \node[font=\tiny, rotate=90, anchor=east] at (\x+\barwidth/2, \p2start-0.05) {$\mu_{\firstidx,\secondidx}$};
    \else
        % Light gray bar (height 1+β)
        \filldraw[fill=gray!30, draw=black] (\x, \p2start) rectangle (\x+\barwidth, \p2start+1.4);
        \node[font=\tiny, rotate=90, anchor=east] at (\x+\barwidth/2, \p2start-0.05) {$\mu_{\firstidx,\secondidx}$};
    \fi
}

% Right half - every 5th bar is tall (bars at positions 0, 5, 10 in 0-indexed, i.e., 1st, 6th, 11th)
% Bars are indexed as μ'_{1,1}, μ'_{2,1}, ..., μ'_{5,1}, μ'_{1,2}, ..., μ'_{5,2}, μ'_{1,3}, ..., μ'_{5,3}
\foreach \i in {0,1,2,3,4,5,6,7,8,9,10,11,12,13,14} {
    \pgfmathsetmacro{\x}{\rightstart + \i*(\barwidth+\barsep)}
    \pgfmathparse{int(mod(\i,5))}
    \let\modfive\pgfmathresult
    \pgfmathparse{int(floor(\i/5))}
    \let\j\pgfmathresult
    \pgfmathparse{int(\modfive+1)}
    \edef\firstidx{\pgfmathresult}
    \pgfmathparse{int(\j+1)}
    \edef\secondidx{\pgfmathresult}
    \ifnum\modfive=0
        % Dark gray bar (height 2) - every 5th bar (bars 1, 6, 11 in 1-indexed)
        \filldraw[fill=gray!70, draw=black] (\x, \p2start) rectangle (\x+\barwidth, \p2start+2);
        \node[font=\tiny, rotate=90, anchor=east] at (\x+\barwidth/2, \p2start-0.05) {$\mu'_{\firstidx,\secondidx}$};
    \else
        % Light gray bar (height 1-β)
        \filldraw[fill=gray!30, draw=black] (\x, \p2start) rectangle (\x+\barwidth, \p2start+0.6);
        \node[font=\tiny, rotate=90, anchor=east] at (\x+\barwidth/2, \p2start-0.05) {$\mu'_{\firstidx,\secondidx}$};
    \fi
}

% Y-axis for p_{2,1} chart
\draw[->] (-0.5, \p2start) -- (-0.5, \p2start+2.5);
\node[left] at (-0.5, \p2start) {0};
\node[left] at (-0.5, \p2start+0.6) {$\frac{1-\beta}{2kN}$};
\node[left] at (-0.5, \p2start+1.4) {$\frac{1+\beta}{2kN}$};
\node[left] at (-0.5, \p2start+2) {$\frac{1}{kN}$};

% Labels - centered at bottom of each sub-plot
\pgfmathparse{(\leftstart+\rightstart+\halfwidth)/2}
\edef\centerx{\pgfmathresult}
\node[below] at (\centerx, \qstart-0.8) {$q_1$};
\node[below] at (\centerx, \qtwostart-0.8) {$q_2$};
\node[below] at (\centerx, \pstart-0.8) {A $p_{1,l}$ from $\cD_1$};
\node[below] at (\centerx, \p2start-0.8) {A $p_{2,l}$ from $\cD_2$};
\end{tikzpicture}
\caption{Illustration of the distribution construction for $k=5$ and $N=3$.}
\label{fig:bousquet}
\end{figure}

We define two distributions $\cD_1$ and $\cD_2$: $\cD_1$ uniformly chooses from $\{p_{1,l} | l \in [k]^{[2N]}\}$ and $\cD_2$ uniformly chooses from $\{p_{2,l} | l \in [k]^{[2N]}\}$. The following claim comes directly from the information-theoretic part of Theorem 19 in \cite{bousquet2019optimal}. We reproduce the argument here for clarity.

\begin{lemma}\label{lmm:bkmconstruction}
Let $\cF$ be a bounded function class. Fix $N \in \NN,\beta \in \NN$ such that $\frac{1+\beta}{\beta} \in \NN$, and let $k = \frac{1+\beta}{\beta}$. Then, construct $q_1, q_2, \cD_1$ and $\cD_2$ as above. For any evaluation algorithm $\cA$,  there exists a $q^\star\in \cD_1 \cup \cD_2$ such that, given a sample $\sev$ of size at most $\sqrt{2kN \beta}$ from $q^\star$,  with probability at least $1/3$, $\hat q = \cA(\{q_1,q_2\},\sev)$ will be the wrong hypothesis, i.e., $d_\cF(\hat q, q^\star) = \max\{d_\cF(q_1, q^\star), d_\cF(q_2, q^\star)\}$.
\end{lemma}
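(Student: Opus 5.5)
The plan is to reduce the evaluation task to distinguishing $\cD_1$ from $\cD_2$, and then lower bound the error of any such test with \cref{lem:bkm22} and \cref{lem:bkm23}. Any evaluation algorithm $\cA$ run on $\{q_1,q_2\}$ induces a (possibly randomized) test: answer ``$\cD_1$'' iff $\cA$ outputs $q_1$. The first step is to show that for every $p_{1,l}\in\cD_1$,
\[
d_\cF(q_1,p_{1,l})\le d_\cF(q_2,p_{1,l}),
\]
and symmetrically for $\cD_2$. Then any mistake of the induced test on $q^\star=p_{i,l}$ is an output $\hat q$ with $d_\cF(\hat q,q^\star)=\max\{d_\cF(q_1,q^\star),d_\cF(q_2,q^\star)\}$. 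In the case of a tie, both outputs trivially attain the max.

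\textbf{Step 1 (the comparison inequality).} I would argue this via the disjointness of the elements $\mu_{i,j}$. For any $f\in\cF$, $\E_{q}[f]$ is linear in the weights placed on the elements, so $\E_{q_1}[f]-\E_{p_{1,l}}[f]$ and $\E_{q_2}[f]-\E_{p_{1,l}}[f]$ are explicit signed combinations of the quantities $\E_{\mu_{i,j}}[f]$. I would try to exhibit $p_{1,l}$ as lying ``between'' $q_1$ and $q_2$ in the appropriate sense, namely
\[
p_{1,l}-q_1=\lambda\,(q_2-p_{1,l})+(\text{a term whose } f\text{-integral is dominated}),
\]
so that a supremum bound transfers. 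I expect this to be the main obstacle, because $\cF$ is arbitrary and lacks the symmetry enjoyed by total variation. If a clean domination fails for general $\cF$, the fallback is to use that the statement only needs the wrong output to attain the max. I would then interpret ``correct'' per instance as the argmin, and apply \cref{lem:bkm22} to the families $\cD_1'=\{p: q_1 \text{ is the argmin}\}$ and $\cD_2'=\{p: q_2 \text{ is the argmin}\}$. This requires checking that the mixing distributions used below still have nearly indistinguishable sample laws.

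\textbf{Step 2 (group masses).} This is a computation using $k=\frac{1+\beta}{\beta}$, so $k-1=1/\beta$. For every $i\in[N]$, the ``group'' $\{\mu_{i,j}\}_{j\in[k]}$ has total mass
\[
\frac{(k-1)(1-\beta)+2}{2kN}=\frac1{2N}
\]
under $p_{1,l}$. The group $\{\mu_{i+N,j}\}_j$ has total mass
\[
\frac{(k-1)(1+\beta)}{2kN}=\frac1{2N},
\]
and the same holds for $p_{2,l}$. Moreover, averaging over a uniformly random $l$, a single draw landing in a given group is uniform over its $k$ elements under both $\cD_1$ and $\cD_2$. Since the elements have disjoint supports, a sample point reveals which element it came from, and within an element the law is $\mu_{i,j}$ in both cases.

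\textbf{Step 3 (indistinguishability).} Let $E$ be the event that no group is hit twice among the $m$ samples. Conditioned on $E$, the joint law of the sample under $\cD_1^{\oplus m}$ and under $\cD_2^{\oplus m}$ is identical: each sample is an independent choice of a distinct group, a uniform element in it, and a draw from that element. Hence
\[
\operatorname{TV}(\cD_1^{\oplus m}|_E,\cD_2^{\oplus m}|_E)=0.
\]
By \cref{lem:bkm23}, $\operatorname{TV}(\cD_1^{\oplus m},\cD_2^{\oplus m})$ is at most $4$ times the collision probability, which by a birthday bound is $O(m^2/N)$. For $m\le\sqrt{2kN\beta}=\sqrt{2N(1+\beta)}$, this requires care with constants. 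I would refine $E$ to collisions that are informative, i.e., a repeated hit on the special element $\mu_{i,l(i)}$, which has probability $O(m^2/(kN))$, or else rescale $N$. The aim is $\operatorname{TV}\le 1/3$, so that \cref{lem:bkm22} gives error probability at least $\frac12(1-\frac13)=\frac13$ for some $q^\star\in\cD_1\cup\cD_2$.
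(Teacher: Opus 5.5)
Your Steps 2 and 3 are the paper's proof. You view $\cA$ as a test between $\cD_1$ and $\cD_2$, and use that every $p_{i,l}$ gives each block mass $\frac1{2N}$. Averaging over a uniform $l(i)$ makes a lone in-block sample distributed as $\bar\mu_i$ under both families. You then condition on distinct blocks and combine \cref{lem:bkm23} with \cref{lem:bkm22}.

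The gap is Step 1. The inequality $d_\cF(q_1,p_{1,l})\le d_\cF(q_2,p_{1,l})$ is false for general bounded $\cF$, so no domination argument can give it. Fix $l$ and let $\cF=\{\phi\}$, where $\phi=0$ on the supports of the $\mu_{i,j}$ with $i\le N$, $\phi=1$ on $\operatorname{supp}\mu_{i+N,l(i+N)}$, and $\phi=\frac{1-\beta}{2}$ on the supports of the other $\mu_{i+N,j}$. Using $(k-1)\beta=1$ one gets $\E_{q_2}[\phi]=\E_{p_{1,l}}[\phi]$ and $\E_{q_1}[\phi]-\E_{p_{1,l}}[\phi]=\frac{1+\beta}{2k}>0$. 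So for $q^\star=p_{1,l}$ it is $q_1$, not $q_2$, that attains the max.

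The paper does not prove your Step 1 either. Its proof simply calls $\cA$ wrong when it outputs $q_2$ on $\cD_1$ (resp.\ $q_1$ on $\cD_2$). That this output is the farther one, by a factor close to $3$, is checked only inside the proof of \cref{prop:fateval}, for the specific elements built in \cref{lem:lbreal}. So drop Step 1 and read ``wrong'' in this family-based sense, or add the comparison as a hypothesis on $\bmu$ and $\cF$.

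Your fallback does not rescue the general claim. Argmin-defined families select the selectors $l$ in an $\cF$-dependent way, so under any prior on them the $l(i)$ are no longer independent and uniform. That independence and uniformity is exactly what made the in-block conditional laws coincide, so a new indistinguishability argument would be needed.

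On the sample size, you are right that $\operatorname{TV}\le m^2/N$ is useless at $m=\sqrt{2kN\beta}=\sqrt{2N(1+\beta)}$. The paper's proof likewise just takes $N\ge 3m^2$, which is your ``rescale $N$'' option and is all the application needs. Your other option fails as written: ``a repeated hit on $\mu_{i,l(i)}$'' depends on the hidden selector, so it is not an event on the sample space and \cref{lem:bkm23} cannot be applied to it.

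To actually get the stated constant, use the event $E'$ that no block receives three or more samples. Since $(k-1)\beta=1$, the in-block weights also agree in second moments, e.g.\ $\frac1k\cdot\frac{4}{k^2}+\frac{k-1}{k}\cdot\frac{(1-\beta)^2}{k^2}=\frac{k-1}{k}\cdot\frac{(1+\beta)^2}{k^2}$. The mixed second moments then agree by symmetry and because the weights sum to $1$. By independence of the $l(i)$ across blocks, the two sample laws coincide on $E'$. Hence $\operatorname{TV}\le\Pr(E'^c)\le\binom{m}{3}/(4N^2)\le 1/(3\sqrt N)$ for $m\le\sqrt{2N(1+\beta)}$ and $\beta\le 1$.
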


\begin{proof}
We can view $\cA$ as a distinguisher between $\cD_1$ and $\cD_2$. In particular, if the underlying distribution
belongs to $\cD_1$, then $\cA$ errs exactly when, given $\sev$, $\cA(\{q_1,q_2\},\sev)=q_2$; if the
underlying distribution belongs to $\cD_2$, then $\cA$ errs exactly when $\cA(\{q_1,q_2\},\sev)=q_1$. Therefore, it suffices to lower bound the error probability of
distinguishing $\cD_1$ from $\cD_2$.

By \cref{lem:bkm22},

$$
\Pr[\cA \text{ errs}]
\geq
\frac12\left(1-TV(\cD_1^{\oplus m},\cD_2^{\oplus m})\right),
$$

where $\cD_i^{\oplus m}$ denotes the distribution obtained by first drawing
$p\sim \cD_i$ and then drawing $m$ independent samples from $p$. Hence it
remains to show that
$$
TV(\cD_1^{\oplus m},\cD_2^{\oplus m})\leq \frac13.
$$
For each $i\in[2N]$, let
$$
B_i=\bigcup_{j=1}^k \operatorname{supp}(\mu_{i,j})
$$
be the $i$-th big block. By construction, every distribution
$p\in \cD_1\cup \cD_2$ assigns mass exactly
$$
p(B_i)=\frac{1}{2N}
$$
to each big block $B_i$. Let $E$ be the event that the $m$ samples fall in $m$
distinct big blocks. Since the big-block marginal is uniform over $[2N]$ under
every $p\in \cD_1\cup \cD_2$, we have, under both $\cD_1^{\oplus m}$ and
$\cD_2^{\oplus m}$,
$$
\Pr(E^c)
\leq
\binom{m}{2}\frac{1}{2N}
\leq
\frac{m^2}{4N}.
$$
We next show that
$$
(\cD_1^{\oplus m})|E=(\cD_2^{\oplus m})|E.
$$
Conditioned on $E$, each occupied big block contains exactly one sample. The
occupied big blocks have the same law under $\cD_1^{\oplus m}$ and
$\cD_2^{\oplus m}$, because every distribution in $\cD_1\cup \cD_2$ gives mass
$1/(2N)$ to each big block.

It remains to check the conditional distribution inside an occupied big block.
Fix a block $B_i$, and write
$$
\bar\mu_i=\frac1k\sum_{j=1}^k \mu_{i,j}.
$$
We claim that, after averaging over the random selector $\ell(i)$, the
conditional distribution of the unique sample inside $B_i$ is $\bar\mu_i$,
regardless of whether the underlying family is $\cD_1$ or $\cD_2$.

Indeed, consider a left block $i\leq N$. For $p_{1,\ell}$, conditional on
landing in $B_i$, the selected component $\mu_{i,\ell(i)}$ has conditional
weight $2/k$, while each unselected component has conditional weight
$(1-\beta)/k$. Since $\ell(i)$ is uniform on $[k]$, the averaged conditional
weight of any fixed component $\mu_{i,j}$ is
$$
\frac1k\cdot \frac{2}{k}
+
\frac{k-1}{k}\cdot \frac{1-\beta}{k}
=
\frac{2+(k-1)(1-\beta)}{k^2}
=
\frac1k,
$$
where we used $k=(1+\beta)/\beta$. Thus the averaged conditional distribution
is $\bar\mu_i$.

For $p_{2,\ell}$ on the same left block, the selected component has conditional
weight $0$, while each unselected component has conditional weight
$(1+\beta)/k$. Hence the averaged conditional weight of any fixed component is
$$
\frac1k\cdot 0
+
\frac{k-1}{k}\cdot \frac{1+\beta}{k}
=
\frac{(k-1)(1+\beta)}{k^2}
=
\frac1k.
$$
Thus the averaged conditional distribution is again $\bar\mu_i$. The
calculation for right blocks $i>N$ is identical, with the roles of $\cD_1$ and
$\cD_2$ reversed.

Because $E$ ensures that no big block contributes more than one sample, there
is no dependence coming from reusing the same hidden selector $\ell(i)$ twice
within a block. Across different big blocks, the selectors $\ell(i)$ are
independent. Therefore the full conditional joint distributions agree:
$$
(\cD_1^{\oplus m})|E=(\cD_2^{\oplus m})|E.
$$
Applying \cref{lem:bkm23} with this event $E$, in the form
$$
TV(P,Q)\leq TV(P|E,Q|E)+2P(E^c)+2Q(E^c),
$$
gives
$$
TV(\cD_1^{\oplus m},\cD_2^{\oplus m})
\leq
0+2\cD_1^{\oplus m}(E^c)+2\cD_2^{\oplus m}(E^c)
\leq
4\cdot \frac{m^2}{4N}
=
\frac{m^2}{N}.
$$
Choosing $N\geq 3m^2$, we obtain
$$
TV(\cD_1^{\oplus m},\cD_2^{\oplus m})\leq \frac13.
$$
Therefore \cref{lem:bkm22} implies
$$
\Pr[\cA \text{ errs}]
\geq
\frac12\left(1-\frac13\right)
=
\frac13.
$$
Equivalently, there exists some
$$
q^\star\in \{p_{1,\ell}:\ell\in[k]^{[2N]}\}\cup \{p_{2,\ell}:\ell\in[k]^{[2N]}\}
$$
such that, with probability at least $1/3$ over
$S_{\mathrm{eval}}\sim (q^\star)^m$, the evaluation algorithm selects the hypothesis
associated with the wrong family.
\end{proof}

% \begin{restatable}{prop}{evallowerbound}
% \label{thm:evallowerbound}
% Consider a real-valued class $\cF$ for which there exists $\gamma \in (0,1/2)$ such that $\cF$ $\gamma$-shatters arbitrarily large finite sets. Then, for any $\tau > 0$ and $m \in \NN$ there exists a pair $\cH = \{q_1,q_2\}$ such that for every score function $s$, there exists a $q^\star$ such that with probability at least $1/3$,
% \[
% d_\cF(\hat q, q^\star) \geq (3-\tau)\min\{d_\cF(q_1,q^\star),d_\cF(q_2,q^\star)\},
% \]
% where $\hat q = \argmin_{q\in \cH} s(q,\sev)$. 
% \end{restatable}

\subsection{Proof for \cref{thm:realeval}}

\begin{lemma}\label{lem:lbreal}
Let $\cF\subset[0,1]^\cX$ be a bounded function class, and denote
\[
\gamma^\star=\inf\{\gamma\;:\; \pd_\gamma(\cF)<\infty\}.
\]
Assume that $\gamma^\star>0$. Then, for any
$\gamma^{-},\gamma^{+}$, such that $\gamma^+>\gamma^\star>\gamma^-$, and any $n \in \mathbb{N}$, there exists $r\in (0,1)$, a set $X$ and $n$ subsets $X_1, \ldots, X_{n}\subset X$ such that:
    \begin{enumerate}
        \item The $X_i$'s are pairwise disjoint.
        % \item The union $\bigcup_i X_i$ is $\gamma^*$-fat-shattered by $\cF$.
        \item  The union of the sets $\bigcup_{i=1}^n X_i$ is $\gamma^{-}$-shattered at the same threshold $r$ by $\cF$.
        \item Let $P_i$ be the uniform distribution over $X_i$. Then for every $i, j$ and every $\phi \in \cF$ 
        \[
        |\EEs{P_i}{\phi} - \EEs{P_j}{\phi}| \leq \gamma^+.
        \]
    \end{enumerate}
\end{lemma}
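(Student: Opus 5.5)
The construction is probabilistic, following the overview. First I will apply \cref{lem:fixedthreshold} with the scale $\gamma^-$. This gives, for an arbitrarily large $M$, a threshold $r$ and a set $X=\{x_1,\dots,x_M\}$ that is $\gamma^-$-shattered by $\cF$ at the common threshold $r$. We may take $r\in(0,1)$: since $\cF\subset[0,1]^\cX$ and $\gamma^->0$, shattering forces $r\in[\gamma^-/2,1-\gamma^-/2]$. Every subset of a set shattered at common threshold $r$ is again shattered at $r$. Hence property~2 holds automatically for any disjoint subsets $X_i\subseteq X$, and the whole difficulty lies in property~3.

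Let $P$ be the uniform distribution on $X$. I will draw $n$ independent batches $X_1,\dots,X_n$, each consisting of $m$ i.i.d.\ samples from $P$. The parameters are chosen as follows. Fix $\gamma'$ with $\gamma^\star<\gamma'<\gamma^+$ and set $\varepsilon=\tfrac{\gamma^+-\gamma'}{2}$. Since $\pd_{\gamma''}(\cF)<\infty$ for every $\gamma''>\gamma^\star$, \Cref{thm:UCintro} gives that $\cF$ satisfies $\gamma'$-uniform convergence. So there is $m=m(\varepsilon,\tfrac{1}{4n})$, independent of $M$ and of the distribution, such that for each $i$, with probability at least $1-\tfrac1{4n}$,
\[
\sup_{\phi\in\cF}\Bigl|\tfrac1m\textstyle\sum_{x\in X_i}\phi(x)-\E_P[\phi]\Bigr|\le\gamma'+\varepsilon .
\]
A union bound over $i$ gives this simultaneously for all batches with probability at least $3/4$. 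By the triangle inequality through $\E_P[\phi]$ we then get $|\E_{\hat P_i}\phi-\E_{\hat P_j}\phi|\le 2(\gamma'+\varepsilon)$, where $\hat P_i$ is the empirical distribution of batch $i$.

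This naive triangle inequality loses a factor $2$, and removing it is the main obstacle. My first attempt is to apply uniform convergence directly to the pair difference. Let $\phi\in\cF$ and consider two independent batches $X_i,X_j$. The quantity $\tfrac1m\sum_{X_i}\phi-\tfrac1m\sum_{X_j}\phi$ is exactly the symmetrized quantity bounded in the proof of \cref{lem:uniform-convergence-bound}. That proof shows $\E\sup_\phi|\cdot|\le 2\cdot\text{Rademacher complexity}\le\gamma'+\varepsilon/4+o(1)$ using the covering bound of \cref{lem:covering} at scale $\gamma'/2+\varepsilon/8$. Then McDiarmid's inequality with bounded differences $1/m$ gives concentration. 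So for $m$ large, each fixed pair satisfies $\sup_{\phi}|\E_{\hat P_i}\phi-\E_{\hat P_j}\phi|\le\gamma'+\varepsilon\le\gamma^+$ with probability at least $1-\tfrac1{4n^2}$. A union bound over the $\binom n2$ pairs then makes property~3 hold for all pairs with probability at least $3/4$. Note that the symmetrization bound has the form $\gamma+\ldots$ rather than $2\gamma$, because the cover radius is $\gamma'/2$; this is exactly why the optimal-scale covering lemma is needed.

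Finally, for disjointness and for the uniform distributions $P_i$ matching the empirical measures, note that $m$ and $n$ do not depend on $M$. The probability that any two of the $nm$ samples coincide is at most $\binom{nm}{2}/M$, and choosing $M\ge 4(nm)^2$ makes this at most $1/4$. With positive probability all the events hold at once. Then the samples are distinct, each $X_i$ has exactly $m$ elements, the uniform distribution $P_i$ on $X_i$ equals $\hat P_i$, the sets are pairwise disjoint, and properties~1--3 hold. A fixed realization of this event gives the required sets.
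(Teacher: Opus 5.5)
Your proof is correct and follows essentially the same route as the paper. It uses a large common-threshold $\gamma^-$-shattered set from \cref{lem:fixedthreshold} and $n$ random batches of size $m$, with $m$ independent of $|X|$. It then applies a symmetrization bound to the pairwise difference using the optimal-scale cover of \cref{lem:covering}: you go through the expectation bound plus McDiarmid, whereas the paper quotes the tail bound $2N(\cF,\bar{\gamma}/2+\eta/8,2m)\exp(-\eta^2 m/32)$ directly. A union bound over pairs and a collision bound from taking $|X|$ large finish the argument, and your explicit check that $r\in(0,1)$ is a small detail the paper leaves implicit.
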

\begin{proof}[Proof of \cref{lem:lbreal}]
Choose some $\bar{\gamma}$ such that $\gamma^+ > \bar{\gamma} > \gamma^\star$. It follows that $\pd_{\bar{\gamma}}(\cF) < \infty$. Let $\eta = 2(\gamma^+ - \bar{\gamma})$, and
let $m, K>0$ to be chosen later. By \cref{lem:fixedthreshold}, there is a set $X$ of size $K$ that is $\gamma^{-}$-shattered by $\cF$ at the same threshold. We generate the sets $X_i$ by sampling $m$ points from $q$, the uniform distribution over $X$. Note that by definition condition $(2.)$ will hold. For condition $(3.)$ we use the same symmetrization argument as in \cref{lem:uniform-convergence-bound} (see also Lemma 16 in \cite{BARTLETT1998}) to deduce that the probability it will fail for  any fixed pair $(i,j)$ is bounded by
\[\Pr_{X_i,X_j \sim q^m}\left[
\sup_{f \in \cF}
\left|
\frac{1}{m}\sum_{x \in X_i} f(x) - \frac{1}{m}\sum_{x \in X_j} f(x)
\right|
> \bar{\gamma} + \frac{\eta}{2}
\right] \leq 2N\left(\cF,\bar{\gamma}/2+\eta/8,2m\right)\exp\left(\frac{-\eta^2m}{32} \right).\]
So by the union bound, item $(3.)$ will fail with probability at most 
\[\binom{n}{2}2N\left(\cF,\bar{\gamma}/2+\eta/8,2m\right)\exp\left(\frac{-\eta^2m}{32} \right),\]
Applying \cref{lem:covering} we see that the above probability goes to $0$ as $m$ goes to infinity, hence we may pick some $m$ for which it is at most $\frac{1}{4}$.

Now for the first item, note that the probability of two sampled points colliding is at most \[\binom{2nm}{2}\frac{1}{|X|}.\] 
Hence if we take $K$ the size of $X$ to be larger than $4\binom{2nm}{2}$, the above will happen with probability at most $\frac{1}{4}$. Thus by union bound all the conditions will hold with positive probability, implying the existence of such sets.
\end{proof}

\realeval*
\begin{proof}
The proof follows immediately from \cref{prop:fateval}, with $\pd_{\gamma}(\cF)$ underpinning the dichotomy.
\end{proof}
\fateval*
\begin{proof}\label{proof:fateval}
When $\pd_{\gamma}(\cF)<\infty$ for all $\gamma > 0$, from Proposition 3.4 of \cite{aiyer2026theoretical}, $d_{\cF}$ is estimable (and hence strongly evaluable) with sample complexity \[m(\epsilon,\delta) = O\left(\frac{1}{\epsilon^2}d\log^2 \frac{4d}{\epsilon^2} + \log \frac{1}{\delta} \right)\,,\] where $d = \pd_{\epsilon/24}(\cF)$. This proves the first item. 

Now, consider the case where there exists $\gamma \in (0,1/2)$ such that $\pd_{\gamma}(\cF)=\infty$. We know again from Proposition 3.4 of \cite{aiyer2026theoretical} that $d_{\cF}$ is 3-weakly evaluable. 
It remains to show that for any $c'<3$, $d_{\cF}$ is not $c'$-weakly evaluable.

For any $\beta\in (0,1)$ and a collection of elements $\bmu$, following the construction and guarantee by \cref{lmm:bkmconstruction}, we can obtain a hypothesis class $\cH = \{q_1,q_2\}$ and two distribution families $\cD_1=\{p_{1,l}|l:[2N]\mapsto [k]\}$ and $\cD_2=\{p_{2,l}|l:[2N]\mapsto [k]\}$ such that no evaluation algorithm can evaluate $\cH$.
Fix any $\tau > 0$. The only thing that remains is to find a collection of elements $\bmu$ such that $d_\cF(q_2, p_{1,l}) \geq  (3-\tau) d_\cF(q_1, p_{1,l})$ for an index $l$.

Let $k = \frac{1+\beta}{\beta}$.
 Let $\gamma^\star= \inf\{\gamma>0 : \pd_\gamma(\cF)<\infty\}$. For a small $\alpha > 0$, let $\gamma^+ = \gamma^\star + \alpha$ and $\gamma^- = \gamma^\star - \alpha$. Now, choose $\beta$ so that $\frac{3-\beta}{1+\beta}\cdot \frac{\gamma^-}{\gamma^+} \geq 3-\tau$.

By \cref{lem:lbreal}, we can find a set $X$ and a sequence of $2kN$ disjoint subsets $\{X_{i,j}|i\in [2N],j\in [k]\}$ such that  $\bigcup X_{i,j}$ is $\gamma^{-}$-shattered by $\cF$ at the same threshold and $ |\EEs{\mu_{i,j}}{\phi} - \EEs{\mu_{i',j'}}{\phi}| \leq \gamma^+$ for every pair $(i,j),(i',j')\in [2N]\times [k]$ and every $\phi \in \cF$, where $\mu_{i,j}$ is the uniform distribution over $X_{i,j}$. 
 
For any index function $l$, we have
\[d_\cF(q_1, p_{1,l}) = \frac{1+\beta}{2kN}\sup_{\phi\in \cF} \sum_{i=1}^N (\EEs{\mu_{i,l(i)}}{\phi}) - \EEs{\mu_{i+N,l(i+N)}}{\phi}))\,.\]
Note the function within the supremum is a weighted sum of $\EEs{\mu_{i,l(i)}}{\phi}$'s and $\EEs{\mu_{i+N,l(i+N)}}{\phi}$'s. We define the normalized weighted sum $a(\bmu,l)$ as 
\[a(\bmu,l) := \frac{2k}{1+\beta} \cdot d_\cF(q_1, p_{1,l}) = \sup_{\phi\in \cF} \frac{1}{N}\sum_{i=1}^N (\EEs{\mu_{i,l(i)}}{\phi}) - \EEs{\mu_{i+N,l(i+N)}}{\phi})) \,.\]

We also have 
\[d_\cF(q_2, p_{1,l}) =\sup_{\phi\in \cF} \frac{1}{2kN}\sum_{i=1}^N ( (1-\beta)(\EEs{\mu_{i,l(i)}}{\phi}) - \EEs{\mu_{i+N,l(i+N)}}{\phi})) + 2\beta \sum_{j \neq l(i)}(\EEs{\mu_{i+N,j}}{\phi} - \EEs{\mu_{i,j}}{\phi}))\,.\]
Given $\bmu$ and $l$, we define the normalized weighted sum $b(\bmu,l)$ as 

\begin{align*}
    b(\bmu,l) :=& \frac{2k}{3-\beta} \cdot d_\cF(q_2, p_{1,l})\\
=& \sup_{\phi\in \cF} \frac{1}{(3-\beta)N}\sum_{i=1}^N ( (1-\beta)(\EEs{\mu_{i,l(i)}}{\phi}) - \EEs{\mu_{i+N,l(i+N)}}{\phi})) + 2\beta \sum_{j \neq l(i)}(\EEs{\mu_{i+N,j}}{\phi} - \EEs{\mu_{i,j}}{\phi})) \,.
\end{align*}

We can show that $a(\bmu,l) \leq \gamma^+$ and $b(\bmu,l) \geq \gamma^-$. By \cref{lem:lbreal}, we have that $a(\bmu,l) \leq \gamma^+$ for all $l \in [k]^{[2N]}$. Since $\bigcup X_{i,j}$ can be $\gamma^{-}$-shattered by $\cF$ at the same threshold, we also have 
\[b(\bmu,l) \geq \gamma^-\,.\]

Combining the two bounds, we obtain
\[
\frac{d_{\cF}(q_2,p_{1,l})}{d_{\cF}(q_1,p_{1,l})}
\ge
\frac{3-\beta}{1+\beta}\cdot \frac{\gamma_-}{\gamma_+}
\ge
3-\tau.
\]
Thus
\[
d_{\cF}(q_2,p_{1,l})\ge (3-\tau)\, d_{\cF}(q_1,p_{1,l}).
\]
So conditional on the event that the evaluation algorithm returns the wrong hypothesis, we have
\[
d_{\cF}(q^\star,\hat q)
\ge
(3-\tau)\min\ac{d_{\cF}(q^\star,q_1),\, d_{\cF}(q^\star,q_2)}.
\]

Now, for any $c'<3$, if we take $\tau$ such that $3-\tau > c'$, by \cref{lmm:bkmconstruction}, we have that $d_{\cF}$ is not $c'$-weakly evaluable, proving that the scale $c=3$ is optimal.
\end{proof}

\section{Acknowledgements}
Yishay Mansour received funding from the European Research Council (ERC) under the European Union’s Horizon 2020 research and innovation program (grant agreement No. 882396), by the Israel Science Foundation,  the Yandex Initiative for Machine Learning at Tel Aviv University and a grant from the Tel Aviv University Center for AI and Data Science (TAD).

Shay Moran is a Robert J.\ Shillman Fellow; he acknowledges support by Israel PBC-VATAT, by the Technion Center for Machine Learning and Intelligent Systems (MLIS), and by the the European Union (ERC, GENERALIZATION, 101039692). Views and opinions expressed are however those of the author(s) only and do not necessarily reflect those of the European Union or the European Research Council Executive Agency. Neither the European Union nor the granting authority can be held responsible for them.

Han Shao acknowledges support from
an Adobe Research gift.

\newpage

\bibliographystyle{alpha}
\bibliography{ref}

\newpage
\appendix

\end{document}